\newcommand*{\ICLR}{}
\newcommand*{\CAMREADY}{}
	\newtheorem{lemma}{Lemma}
	\newtheorem{corollary}{Corollary}
	\newtheorem{theorem}{Theorem}
	\newtheorem{proposition}{Proposition}
\newtheorem{claim}{Claim}
\def\be{\begin{equation}}
\def\ee{\end{equation}}
\def\beas{\begin{eqnarray*}}
\def\eeas{\end{eqnarray*}}
\def\bea{\begin{eqnarray}}
\def\eea{\end{eqnarray}}
\newcommand{\h}{{\mathbf h}}
\newcommand{\x}{{\mathbf x}}
\newcommand{\y}{{\mathbf y}}
\newcommand{\vv}{{\mathbf v}}
\newcommand{\w}{{\mathbf w}}
\newcommand{\aaa}{{\mathbf a}}
\newcommand{\bb}{{\mathbf b}}
\newcommand{\0}{{\mathbf 0}}
\newcommand{\A}{{\mathcal A}}
\newcommand{\B}{{\mathcal B}}
\newcommand{\OO}{{\mathcal O}}
\newcommand{\R}{{\mathbb R}}
\newcommand{\N}{{\mathbb N}}
\newcommand{\abs}[1]{\left\lvert#1 \right\rvert}
\DeclareMathOperator*{\argmax}{argmax} 
\DeclareMathOperator*{\argmin}{argmin}  
\newcommand{\mat}[1]{\llbracket#1\rrbracket}
\newcommand{\rank}[1]{\mathrm{rank}\left(#1\right)}
\newcommand{\fracceil}[2] {\left\lceil\frac{#1}{#2}\right\rceil}
\newcommand{\nicefracfloor}[2] {\left\lfloor\nicefrac{#1}{#2}\right\rfloor}
\newcommand{\cupdot}{\mathbin{\mathaccent\cdot\cup}}
\newcommand{\totstr}{T_S}
\newcommand{\totrec}{T_R}
\newcommand{\tand}{\textrm{ and }}
        \newcommand{\githuburl}[1]{\url{https://github.com/HUJI-Deep/#1}}
        \newcommand{\githuburl}[1]{\url{https://<anonymized>}}
\begin{document}




\ifdefined\ICLR
	\title{On the Expressive Power of Overlapping \\Architectures of Deep Learning}
	\author{Or Sharir \& Amnon Shashua\\
	The Hebrew University of Jerusalem \\
	\texttt{\{or.sharir,shashua\}@cs.huji.ac.il}
	}
	\maketitle
\fi

\begin{abstract}
Expressive efficiency refers to the relation between two architectures A and B,
whereby any function realized by B could be replicated by A, but there exists
functions realized by A, which cannot be replicated by B unless its size grows
significantly larger. For example, it is known that deep networks are
exponentially efficient with respect to shallow networks, in the sense that a
shallow network must grow exponentially large in order to approximate the
functions represented by a deep network of polynomial size. In this work, we
extend the study of expressive efficiency to the attribute of network
connectivity and in particular to the effect of "overlaps" in the convolutional
process, i.e., when the stride of the convolution is smaller than its filter
size (receptive field).
To theoretically analyze this aspect of network's design, we focus on a
well-established surrogate for ConvNets called \emph{Convolutional Arithmetic
Circuits}~(ConvACs), and then demonstrate empirically that our results hold for
standard ConvNets as well. Specifically, our analysis shows that having
overlapping local receptive fields, and more broadly denser connectivity,
results in an exponential increase in the expressive capacity of neural
networks. Moreover, while denser connectivity can increase the expressive
capacity, we show that the most common types of modern architectures already
exhibit exponential increase in expressivity, without relying on fully-connected
layers.

\end{abstract}

\ifdefined\COLT
	\medskip
	\begin{keywords}
	\emph{Deep Learning}, \emph{Expressive Power}, \emph{Network Structure}, \emph{Convolutional Operations}, \emph{Arithmetic Circuits}
	\end{keywords}
\fi

\section{Introduction} \label{sec:intro}

One of the most fundamental attributes of deep networks, and the reason for
driving its empirical success, is the ``Depth Efficiency" result which states
that deeper models are exponentially more expressive than shallower models of
similar size. Formal studies of Depth Efficiency include the early work on
boolean or thresholded circuits~\citep{Sipser83,Yao89,Hastad91,Hajnal:1993kl},
and the more recent studies covering the types of networks used in practice
\citep{Pascanu:2013ue,Montufar:2014tb,Eldan:2015uc,expressive_power,
generalized_decomp,Telgarsky:2016wk,Safran:2016te,Raghu:2016wn,BenPoole:2016vy}.
What makes the Depth Efficiency attribute so desirable, is that it brings
exponential increase in expressive power through merely a polynomial change in
the model, i.e. the addition of more layers. Nevertheless, depth is merely one
among many architectural attributes that define modern networks. The deep
networks used in practice consist of architectural features defined by various
schemes of connectivity, convolution filter defined by size and stride, pooling
geometry and activation functions. Whether or not those relate to expressive
efficiency, as depth has proven to be, remains an open question.

In order to study the effect of network design on expressive efficiency we
should first define "efficiency" in broader terms. Given two network
architectures~$A$ and~$B$, we say that architecture~$A$ is expressively
efficient with respect to architecture~$B$, if the following two conditions
hold: \emph{(i)}~any function $\h$ realized by~$B$ of size~$r_B$ can be realized
(or approximated) by $A$ with size $r_A \in\OO(r_B)$; \emph{(ii)}~there exist a
function $\h$ realized by~$A$ with size $r_A$, that cannot be realized (or
approximated) by~$B$, unless  $r_B \in\Omega(f(r_A))$ for some super-linear
function~$f$. The exact definition of the sizes $r_A$ and $r_B$ depends on the
measurement we care about, e.g. the number of parameters, or the number of
``neurons''. The nature of the function~$f$ in condition~\emph{(ii)} determines
the type of efficiency taking place~--~if~$f$ is exponential then
architecture~$A$ is said to be exponentially efficient with respect to
architecture~$B$, and if~$f$ is polynomial so is the expressive efficiency.
Additionally, we say $A$ is \emph{completely efficient} with respect to $B$, if
condition (ii) holds not just for some specific functions (realizable by $A$),
but for all functions other than a negligible set.

In this paper we study the efficiency associated with the architectural
attribute of convolutions, namely the size of convolutional filters (receptive
fields) and more importantly its proportion to their stride. We say that a
network architecture is of the \emph{non-overlapping} type when the size of the
local receptive field in each layer is equal to the stride. In that case, the
sets of pixels participating in the computation of each two neurons in the same
layer are completely separated. When the stride is smaller than the receptive
field we say that the network architecture is of the \emph{overlapping} type. In
the latter case, the \emph{overlapping degree} is determined by the \emph{total}
receptive field and stride projected back to the input layer~--~the implication
being that for the overlapping architecture the total receptive field and stride
can grow much faster than with the non-overlapping case.

As several studies have shown, non-overlapping convolutional networks do have
some theoretical merits. Namely, non-overlapping networks are
universal~\citep{expressive_power,generalized_decomp}, i.e. they can approximate
any function given sufficient resources, and in terms of optimization, under
some conditions they actually possess better convergence guaranties than
overlapping networks. Despite the above, there are only few instances of
strictly non-overlapping networks used in practice (e.g. \citet{tmm,
van2016wavenet}), which raises the question of \textbf{why are non-overlapping
architectures so uncommon?} Additionally, when examining the kinds of
architectures typically used in recent years, which employ a mixture of both
overlapping and non-overlapping layers, there is a trend of using ever smaller
receptive fields, as well as non-overlapping layers having an ever increasing
role~\citep{NiN,Springenberg:2014tx,Szegedy:2014tb}. Hence, the most common
networks used practice, though not strictly non-overlapping, are increasingly
approaching the non-overlapping regime, which raises the question of \textbf{why
having just slightly overlapping architectures seems sufficient for most tasks?}

In the following sections, we will shed some light on these questions by 
analyzing the role of overlaps through a surrogate class of convolutional
networks called Convolutional Arithmetic
Circuits~(ConvACs)~\citep{expressive_power}~--~instead of non-linear activations
and average/max pooling layers, they employ linear activations and product
pooling. ConvACs, as a theoretical framework to study ConvNets, have been the
focused of several works, showing, amongst other things, that many of the
results proven on this class are typically transferable to standard ConvNets as
well~\citep{generalized_decomp,inductive_bias}. Though prior works on ConvACs
have only considered non-overlapping architectures, we suggest a natural
extension to the overlapping case that we call Overlapping ConvACs. In our
analysis, which builds on the known relation between ConvACs and tensor
decompositions, we prove that overlapping architectures are in fact completely
and exponentially more efficient than non-overlapping ones, and that their
expressive capacity is directly related to their \emph{overlapping degree}.
Moreover, we prove that having even a limited amount of overlapping is
sufficient for attaining this exponential separation. To further ground our
theoretical results, we demonstrate our findings through experiments with
standard ConvNets on the CIFAR10 image classification dataset.

\section{Overlapping Convolutional Arithmetic Circuits}
\label{sec:overlapping_convac}

In this section, we introduce a class of convolutional networks referred to as
Overlapping Convolutional Arithmetic Circuits, or Overlapping ConvACs for short.
This class shares the same architectural features as standard ConvNets,
including some that have previously been overlooked by similar attempts to model
ConvNets through ConvACs, namely, having any number of layers and unrestricted
receptive fields and strides, which are crucial for studying overlapping
architectures. For simplicity, we will describe this model only for the case of
inputs with two spatial dimensions, e.g. color images, and limiting the
convolutional filters to the shape of a square.

\begin{wrapfigure}{r}{0.5\textwidth} 
\centering
\includegraphics[width=\linewidth]{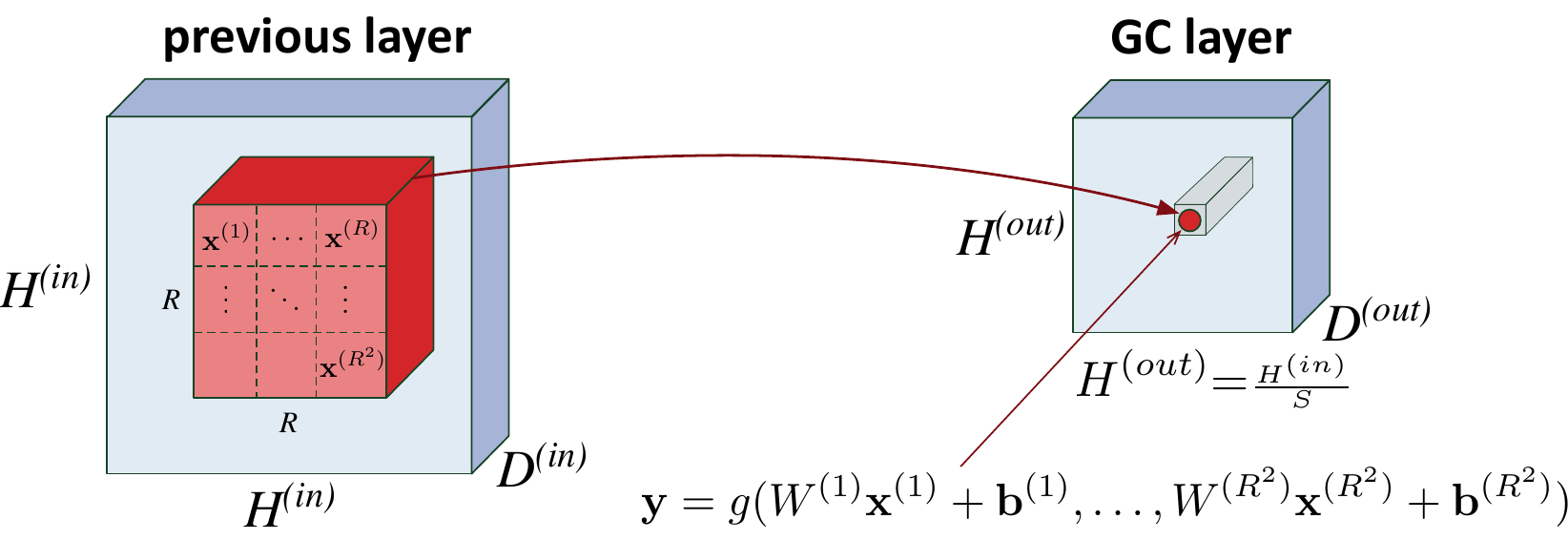}
\caption{An illustration of a GC Layer.}
\label{fig:gc_layer}
\end{wrapfigure}

We begin by presenting a broad definition of a Generalized Convolutional~(GC)
layer as a fusion of a $1{\times}1$ linear operation with a pooling
function~--~this view of convolutional layers is motivated by the
all-convolutional architecture~\citep{Springenberg:2014tx}, which replaces all
pooling layers with convolutions with stride greater than 1. The input to a
GC layer is a 3-order tensor (multi-dimensional array), having width
and height equal to $H^{(\text{in})} \in \N$ and depth $D^{(\text{in})} \in \N$,
also referred to as channels, e.g. the input could be a 2D image with RGB color
channels. Similarly, the output of the layer has width and height equal to
$H^{(\text{out})} \in \N$ and $D^{(\text{out})} \in \N$ channels, where
$H^{(\text{out})} = \frac{H^{(\text{in})}}{S}$ for $S \in \N$ that is referred
to as the \emph{stride}, and has the role of a sub-sampling operation. Each
spatial location $(i,j)$ at the output of the layer corresponds to a 2D window
slice of the input tensor of size $R \times R \times D^{(\text{in})}$, extended
through all the input channels, whose top-left corner is located exactly at
$(i\cdot S, j\cdot S)$, where $R \in \N$ is referred to as its \emph{local
receptive field}, or filter size. For simplicity, the parts of window slices
extending beyond the boundaries have zero value. Let
$\y \in \R^{D^{(\text(out)}}$ be a vector representing the channels at some
location of the output, and similarly, let
$\x^{(1)},\ldots,\x^{(R^2)} \in \R^{D^{(\text{in})}}$ be the set of vectors
representing the slice, where each vector represents the channels at its
respective location inside the $R \times R$ window, then the operation of a
GC layer is defined as follows:
\begin{equation*}
    \y = g(W^{(1)}\x^{(1)}+\bb^{(1)}, \ldots, W^{(R^2)}\x^{(R^2)}+\bb^{(R^2)}),
\end{equation*}
where $W^{(1)},\ldots,W^{(R^2)} \in \R^{D^{(out)} \times D^{(in)}}$
and $\bb^{(1)}, \ldots, \bb^{(R^2)} \in \R^{D^{(out)}}$ are referred to as the
weights and biases of the layer, respectively, and
$g:\R^{D^{(out)}} \times \cdots \times \R^{D^{(out)}} \to \R^{D^{(out)}}$ is
some point-wise pooling function. See fig.~\ref{fig:gc_layer} for an
illustration of the operation a GC layer performs.

\begin{figure}
\centering
\includegraphics[width=0.75\linewidth]{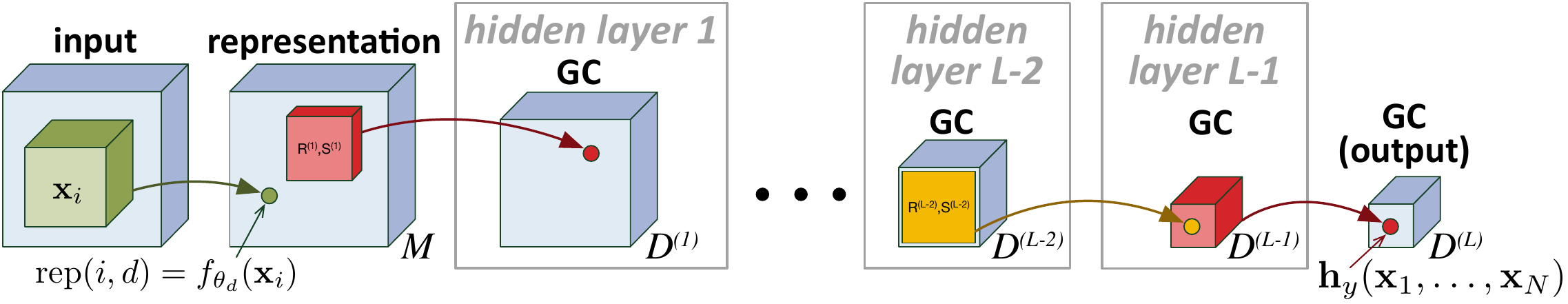}
\caption{An illustration of a Generalized Convolutional Network.}
\label{fig:gc_network}
\end{figure}

With the above definitions, a GC network is simply a sequence of $L$ GC layers,
where for $l \in [L] \equiv \{1,\ldots,L\}$, the $l$'th layer is specified by a
local receptive field $R^{(l)}$, a stride $S^{(i)}$, $D^{(l)}$ output channels,
parameters $\theta^{(l)}$, and a pooling function $g^{(l)}$. For classification
tasks, the output of the last layer of the network typically has $1{\times}1$
spatial dimensions, i.e. a vector, where each output channel
$y \in [Y] \equiv [D^{(L)}]$ represents the score function of the $y$'th class,
denoted by $\h_y$, and inference is perform by $y^* = \arg\max_y h_y(X)$.
Oftentimes, it is common to consider the output of the very first layer of a
network as a low-level feature representation of the input, which is motivated
by the observation that these learned features are typically shared across
different tasks and datasets over the same domain (e.g. edge and Gabor filters
for natural images). Hence, we treat this layer as a separate fixed
``zeroth'' convolutional layer referred to as the \emph{representation} layer,
where the operation of the layer can be depicted as applying a set of fixed
functions $\{f_d:\R^s\to\R\}_{d=1}^M$ to the window slices denoted by
$\x_1,\ldots,\x_N \in \R^s$, i.e. the entries of the output tensor of this layer
are given by $\{f_d(\x_i)\}_{d \in [M], i\in [N]}$. With these notations, the
output of a GC network can be viewed as a function $\h_y(\x_1,\ldots,\x_N)$. The
entire GC network is illustrated in fig.~\ref{fig:gc_network}.

Given a non-linear
point-wise activation function $\sigma(\cdot)$ (e.g. ReLU), then setting all
pooling functions to average pooling followed by the activation, i.e.
$g(\x^{(1)},\ldots,\x^{(R^2)})_c=\sigma\left(\sum_{i=1}^{R^2} x^{(i)}_c\right)$
for $c \in [D^{(\text{out})}]$,
give rise to the common all-convolutional network with $\sigma(\cdot)$
activations, which served as the initial motivation for our formulation.
Alternatively, choosing instead a product pooling function, i.e.
$g(\x^{(1)},\ldots,\x^{(R^2)})_c = \prod_{i=1}^{R^2} x^{(i)}_c$ for
$c \in [D^{(\text{out})}]$, results in an Arithmetic Circuit, i.e.
a circuit containing just product and sum operations, hence it is referred to
as a Convolutional Arithmetic Circuit, or ConvAC. It is important to emphasize
that ConvACs, as originally introduced by \citet{expressive_power}, are
typically described in a very different manner, through the language of tensor
decompositions (see app.~\ref{app:convac} for background). Since vanilla ConvACs
can be seen as an alternating sequence of $1{\times}1$ convolutions and
non-overlapping product pooling layers, then the two formulations coincide when
all GC layers are non-overlapping, i.e. for all $l \in [L]$, $R^{(l)}=S^{(l)}$.
If, however, some of the layers are overlapping, i.e. there exists $l \in [L]$
such that $R^{(l)} > S^{(l)}$, then our formulation through GC layers diverges,
and give rise to what we call \emph{Overlapping ConvACs}.

Given that our model is an extension of the ConvACs framework, it inherits many
of its desirable attributes. First, it shares most of the same traits as modern
ConvNets, i.e. locality, sharing and pooling. Second, it can be shown to form a
universal hypotheses space~\citep{expressive_power}. Third, its underlying
operations lend themselves to mathematical analysis based on measure theory and
tensor analysis~\citep{expressive_power}. Forth, through the concept of
generalized tensor decompositions~\citep{generalized_decomp}, many of the
theoretical results proven on ConvACs could be transferred to standard ConvNets
with ReLU activations. Finally, from an empirical perspective, they tend to work
well in many practical settings, e.g. for optimal classification with missing
data~\citep{tmm}, and for compressed networks~\citep{simnets2}.

While we have just established that the non-overlapping GC Network with a
product pooling function is equivalent to vanilla ConvACs, one might wonder if
using overlapping layers instead could diminish what these overlapping networks
can represent. We show that not only is it not the case, but prove the more
general claim that a network of a given architecture can realize exactly the
same functions as networks using smaller local receptive fields, which includes
the non-overlapping case.
\begin{proposition}\label{prop:nothing_to_lose}
    Let $A$ and $B$ be two GC Networks with a product pooling function. If
    the architecture of $B$ can be derived from $A$ through the removal of
    layers with $1{\times}1$ stride, or by decreasing the local receptive field
    of some of its layers, then for any choice of parameters for $B$, there
    exists a matching set of parameters for $A$, such that the function
    realized by $B$ is exactly equivalent to~$A$. Specifically, $A$ can
    realize any non-overlapping network with the same order of strides
    (excluding $1{\times}1$ strides).
\end{proposition}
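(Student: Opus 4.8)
The plan is to reduce the statement to two elementary ``reverse moves'' and then induct. Since $B$'s architecture is obtained from $A$'s by deleting some $1{\times}1$-stride layers and shrinking some receptive fields, $A$'s architecture is obtained from $B$'s by a finite sequence of the inverse operations: \textbf{(M1)}~inserting a layer whose stride is $1{\times}1$, and \textbf{(M2)}~enlarging the receptive field $R^{(l)}$ of an existing layer while keeping $S^{(l)}$ fixed (we may perform all the M2 enlargements first and all the M1 insertions afterwards; every intermediate architecture is valid). It therefore suffices to prove: if a GC network $C'$ is obtained from a GC network $C$ by a single application of M1 or M2, then every function realizable by $C$ is realizable by $C'$. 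Iterating this along the sequence of moves — functional equality being preserved because each step alters only one layer while leaving the composition of the others intact, and the representation layer being untouched throughout — yields the proposition. The final sentence is then the special case in which $A$'s architecture is reduced, by deleting all of $A$'s $1{\times}1$-stride layers and shrinking each remaining receptive field down to its own stride (legitimate since $R^{(l)}\ge S^{(l)}$ for any sensible architecture), to a non-overlapping network with the same order of non-unit strides.

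\emph{Move M1 (inserting a $1{\times}1$-stride layer).} For the deletion in the derivation $A\to B$ to leave a valid architecture, a removed $1{\times}1$-stride layer must map $D$ input channels to $D$ output channels for some $D$, and its output has the same spatial extent as its input. A GC layer with product pooling and stride $1$ computes, at output location $(i,j)$, the coordinate-wise product $\prod_{i'}(W^{(i')}\x^{(i')}+\bb^{(i')})$ over the slice vectors $\x^{(i')}$ of its $R{\times}R$ window. Taking $i_0$ to be the top-left window position — which for a stride-$1$ layer is never clipped by the input boundary — and setting $W^{(i_0)}=I,\ \bb^{(i_0)}=\0$ while $W^{(i')}=0,\ \bb^{(i')}=\1$ for every other position, makes this product equal to $\x^{(i_0)}$; that is, the layer realizes the identity map on its input tensor. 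Keeping all other parameters of $C$, the resulting $C'$ realizes the same function as $C$.

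\emph{Move M2 (enlarging a receptive field).} Let layer $l$ of $C$ have receptive field $R$, and layer $l$ of $C'$ have receptive field $R'>R$ with the same stride $S$; at every output location the $R'{\times}R'$ window of $C'$ contains the $R{\times}R$ window of $C$ at exactly the same input positions. Copy into layer $l$ of $C'$ the weights $W^{(i')},\bb^{(i')}$ of $C$ on the positions lying in this original $R{\times}R$ sub-window, and set $W^{(i')}=0,\ \bb^{(i')}=\1$ on each of the $R'^2-R^2$ new positions. Each new position contributes the factor $\1$ to the product pooling whether it lies inside the input (then $W^{(i')}\x^{(i')}+\bb^{(i')}=\1$) or beyond it and hence zero-padded (then $\x^{(i')}=\0$, so the term is again $\1$); and the original sub-window positions contribute, position by position, exactly what they contributed in $C$ (same weights, same in/out-of-bounds status). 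Hence layer $l$ of $C'$ produces the same output as layer $l$ of $C$ at every spatial location, including at the boundary, and carrying over all other layers verbatim shows $C'$ realizes the function realized by $C$.

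The remaining work is routine: the induction over the move sequence, and checking that $H^{(\text{out})}=H^{(\text{in})}/S$ and $\prod_l S^{(l)}$ are unchanged by M1 (stride $1$) and M2, so the network still ends in $1{\times}1$ spatial dimensions. The one place that genuinely needs care is the zero-padding analysis in M2: one must verify that each added receptive-field position is neutral for the \emph{product} pooling in both the in-bounds and the zero-padded cases, which is exactly what forces the choice $\bb^{(i')}=\1$ (not $\0$); the analogous point — choosing an always-in-bounds anchor position — is what makes the identity construction in M1 go through despite boundary clipping.
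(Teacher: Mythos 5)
Your proposal is correct and follows essentially the same route as the paper: your move M2 is exactly the paper's claim that a layer can emulate one with a smaller receptive field by zeroing the extra weights and setting the extra biases to ones, and your move M1 is the paper's claim that a $1{\times}1$-stride layer can realize the identity, with the composition/induction over layers playing the same role as the paper's "shrink and pass-through" argument. The boundary (zero-padding) and channel-count remarks you add are handled at the same level of rigor as, or slightly more explicitly than, the paper's own proof.
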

\begin{proof}[Proof sketch]
    This follows from two simple claims: (i)~a GC layer can produce an output
    equivalent to that of a GC layer with a smaller local receptive field, by
    ``zeroing'' its weights beyond the smaller local receptive field; and (ii)
    GC layers with $1{\times}1$ receptive fields can be set such that
    their output is equal to their input, i.e. realize the identity function.
    With these claims, the local receptive fields of $A$ can be effectively
    shrank to match the local receptive fields of $B$, and any additional layers
    of $A$ with stride $1{\times}1$ could be set such that they are realizing
    the identity mapping, effectively ``removing'' them from $A$. See
    app.~\ref{app:proofs:nothing_to_lose} for a complete proof.
\end{proof}
Proposition~\ref{prop:nothing_to_lose} essentially means that overlapping
architectures are just as expressive as non-overlapping ones of similar
structure, i.e. same order of non-unit strides. As we recall, this satisfies the
first condition of the efficiency property introduced in sec.~\ref{sec:intro},
and does so regardless if we measure the size of a network as the number of
parameters, or the number of ``neurons''\footnote{We take here the broader
definition of a ``neuron'', as any one of the scalar values comprising the
output array of an arbitrary layer in a network. In the case the output array is
of width and height equal to $H$ and $C$ channels, then the number of such
``neurons'' for that layer is $H^2 \cdot C$.}. In the following section we will
cover the preliminaries required to show that overlapping networks actually lead
to an increase in expressive capacity, which under some settings results in an
exponential gain, proving that the second condition of expressive efficiency
holds as well.

\section{Analyzing Expressive Efficiency Through Grid Tensors}
\label{sec:efficiency_analysis}

In this section we describe our methods for analyzing the expressive efficiency
of overlapping ConvACs that lay the foundation for stating our theorems.
A minimal background on tensor analysis required to follow our work can be found
in sec.~\ref{sec:efficiency_analysis:pre}, followed by presenting our
methods in sec.~\ref{sec:efficiency_analysis:bounds}.

\subsection{Preliminaries}
\label{sec:efficiency_analysis:pre}

In this sub-section we cover the minimal background on tensors analysis required
to understand our analysis. A tensor $\A \in\R^{M_1 \otimes \cdots \otimes M_N}$
of order $N$ and dimension $M_i$ in each mode $i \in [N] \equiv \{1,\ldots,N\}$,
is a multi-dimensional array with entries $\A_{d_1,\ldots,d_N} \in \R$ for all
$i \in [N]$ and $d_i \in [M_i]$. For simplicity, henceforth we assume that all
dimensions are equal, i.e. $M \equiv M_1 = \ldots = M_N$. One of the central
concepts in tensor analysis is that of \emph{tensor matricization}, i.e.
rearranging its entries to the shape of a matrix. Let
$P \cupdot Q=[N]$ be a disjoint partition of its indices, such that
$P = \{p_1,\ldots,p_{|P|}\}$ with $p_1< \ldots< p_{|P|}$,
and $Q = \{q_1, \ldots, q_{|Q|}\}$ with $q_1 < \ldots < q_{|Q|}$. The
matricization of $\A$ with respect to the partition
$P \cupdot Q$, denoted by $\mat{\A}_{P,Q}$, is the $M^{|P|}$-by-$M^{|Q|}$ matrix
holding the entries of $\A$, such that for all $i \in [N]$ and $d_i \in [M]$ the 
entry $A_{d_1, \ldots, d_N}$ is placed in row index
${1 + \sum_{t=1}^{|P|} (d_{p_t} - 1) M^{|P| - t}}$ and column index
${1 + \sum_{t=1}^{|Q|} (d_{q_t} - 1) M^{|Q| - t}}$. Lastly, the tensors we
study in this article originate by examining the values of some given function
at a set of predefined points and arranging them in a tensor referred to as
the \emph{grid tensor} of the function. Formally, let
$f:\R^s \times \ldots \times \R^s \to \R$ be a function, and let
$\{\x^{(1)}, \ldots, \x^{(M)} \in \R^s\}$ be a set of vectors called
\emph{template vectors}, then the grid tensor of $f$ is denoted by
$\A(f) \in \R^{M \otimes \ldots \otimes M}$ and defined by
$\A(f)_{d_1,\ldots,d_N} = f(\x^{(d_1)}, \ldots, \x^{(d_N)})$ for all
$d_1,\ldots,d_N \in [M]$.

\subsection{Bounding the Size of Networks via Grid Tensors}
\label{sec:efficiency_analysis:bounds}

We begin with a discussion on how to have a well-defined measure of efficiency.
We wish to compare the efficiency of non-overlapping ConvACs to overlapping
ConvACs, for a fixed set of $M$ representation functions (see
sec.~\ref{sec:overlapping_convac} for definitions). While all functions
realizable by non-overlapping ConvACs with shared representation functions lay
in the same function subspace (see \citet{expressive_power}), this is not the
case for overlapping ConvACs, which can realize additional functions outside the
sub-space induced by non-overlapping ConvACs. We cannot therefore compare both
architectures directly, and need to compare them through an auxiliary objective.
Following the work of \citet{generalized_decomp}, we instead compare
architectures through the concept of grid tensors, and specifically, the grid
tensor defined by the output of a ConvAC, i.e. the tensor $\A(\h)$ for
$\h(\x_1,\ldots,\x_N)$. Unlike with the ill-defined nature of directly comparing
the functions of realized by ConvACs, \citet{generalized_decomp} proved that
assuming the fixed representation functions are linearly independent, then there
exists template vectors $\x^{(1)},\ldots,\x^{(M)}$, for which any
non-overlapping ConvAC architecture could represent all possible grid tensors
over these templates, given sufficient number of channels at each layer. More
specifically, if $F_{ij} = f_i(\x^{(j)})$, then these template vector are
chosen such that $F$ is non-singular. Thus, once we fix a set of linearly
independent representation functions, we can compare different ConvACs, whether
overlapping or not, on the minimal size required for them to induce the same
grid tensor, while knowing such a finite number always exists.

One straightforward direction for separating between the expressive efficiency
of two network architectures A and B is by examining the ranks of their
respective matricized grid tensors. Specifically, Let $\A(\h^{(A)})$ and
$\A(\h^{(B)})$ denote the grid tensors of A and B, respectively, and let $(P,Q)$
be a partition of $[N]$, then we wish to find an upper-bound on the rank of
$\mat{\A(\h^{(A)})}_{P,Q}$ as a function of its size on one hand, while showing
on the other hand that $\rank{\mat{\A(\h^{(B)})}_{P,Q}}$ can be significantly
greater. One benefit of studying efficiency through a matrix rank is that not
only we attain separation bounds for exact realization, but also immediately
gain access to approximation bounds by examining the singular values of the
matricized grid tensors. This brings us to the following lemma, which connects
upper-bounds that were previously found for non-overlapping
ConvACs~\citep{inductive_bias}, with the grid tensors induced by them (see
app.~\ref{app:proofs:preliminaries} for proof):
\begin{lemma} \label{lemma:mat_rank_bound}
    Let $h_y(\x_1,\ldots,\x_N)$ be a score function of a non-overlapping
    ConvAC with a fixed set of $M$ linearly independent and continuous
    representation functions, and $L$ GC layers. Let $(P,Q)$ be a partition
    dividing the spatial dimensions of the output of the representation layer
    into two equal parts, either along the horizontal or vertical axis, referred
    to as the ``left-right'' and ``top-bottom'' partitions, respectively. Then,
    for any template vectors such that $F$ is non-singular and for any choice of
    the parameters of the network, it holds that
    $\rank{\mat{\A(\h_y)}_{P,Q}} \leq D^{(L-1)}$.
\end{lemma}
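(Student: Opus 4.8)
The plan is to reduce the claim to a rank bound that is already known for the \emph{coefficient tensor} of a non-overlapping ConvAC, and then transfer that bound to the grid tensor using the non-singularity of $F$. Recall (see app.~\ref{app:convac}) that a non-overlapping ConvAC with representation functions $\{f_d\}_{d=1}^{M}$ and $L$ GC layers realizes a function of the form
\[
  \h_y(\x_1,\ldots,\x_N) \;=\; \sum_{d_1,\ldots,d_N=1}^{M} \mathcal{A}^y_{d_1,\ldots,d_N}\prod_{i=1}^{N} f_{d_i}(\x_i),
\]
where the order-$N$ coefficient tensor $\mathcal{A}^y \in \R^{M\otimes\cdots\otimes M}$ is produced by a hierarchical tensor decomposition whose tree structure mirrors the pooling geometry of the network, and whose intermediate ranks are the channel counts $D^{(1)},\ldots,D^{(L-1)}$.

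First I would evaluate $\h_y$ on the template grid. Plugging $(\x^{(d_1)},\ldots,\x^{(d_N)})$ into the display above and using $F_{ij}=f_i(\x^{(j)})$ yields
\[
  \A(\h_y)_{d_1,\ldots,d_N} \;=\; \sum_{k_1,\ldots,k_N=1}^{M} \mathcal{A}^y_{k_1,\ldots,k_N}\prod_{i=1}^{N} F_{k_i,d_i},
\]
i.e. $\A(\h_y)$ is the multilinear image of $\mathcal{A}^y$ obtained by applying $F^\top$ along every one of the $N$ modes. Matricizing with respect to $(P,Q)$ and invoking the standard identity relating mode products to Kronecker products, this reads
\[
  \mat{\A(\h_y)}_{P,Q} \;=\; \bigl(F^\top\bigr)^{\otimes |P|}\, \mat{\mathcal{A}^y}_{P,Q}\, \Bigl(\bigl(F^\top\bigr)^{\otimes |Q|}\Bigr)^{\!\top}.
\]
Since $F$ is non-singular, so are the Kronecker powers $\bigl(F^\top\bigr)^{\otimes|P|}$ and $\bigl(F^\top\bigr)^{\otimes|Q|}$; hence left/right multiplication by them preserves rank, and therefore $\rank{\mat{\A(\h_y)}_{P,Q}} = \rank{\mat{\mathcal{A}^y}_{P,Q}}$.

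It remains to bound $\rank{\mat{\mathcal{A}^y}_{P,Q}}$. For the ``left-right'' and ``top-bottom'' partitions~--~those that split the spatial grid in half along a single axis~--~every top-level pooling window of the non-overlapping network straddles the cut, so the partition is ``separated'' only at the root of the decomposition tree. The rank bound for non-overlapping ConvACs of \citet{inductive_bias} (extending \citet{expressive_power}) then gives $\rank{\mat{\mathcal{A}^y}_{P,Q}} \le D^{(L-1)}$: with $P$ and $Q$ grouping the leaves on the two sides of the cut, $\mat{\mathcal{A}^y}_{P,Q}$ is a sum of $D^{(L-1)}$ rank-one matrices, one per channel feeding the last layer, each a Kronecker product of the two subtree tensors. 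Combining with the previous display yields $\rank{\mat{\A(\h_y)}_{P,Q}} \le D^{(L-1)}$.

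The two multilinear-algebra steps (the grid-evaluation identity and the matricization/Kronecker identity) are routine once set up; the one point that needs care is confirming that the ``left-right'' and ``top-bottom'' partitions named in the statement genuinely correspond to a root-level separation for \emph{every} non-overlapping architecture covered by the lemma (arbitrary per-layer receptive fields equal to strides, reducing the representation-layer output down to a $1{\times}1$ spatial grid), so that the cited coefficient-tensor bound applies verbatim with the penultimate channel count $D^{(L-1)}$ rather than with some deeper intermediate rank. This is where the ``non-overlapping'' and ``half-the-grid'' hypotheses are essential, and is the main thing the complete proof in app.~\ref{app:proofs:preliminaries} needs to pin down.
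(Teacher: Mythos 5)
Your proposal follows essentially the same route as the paper's proof: write $\A(\h_y)=(F\otimes\cdots\otimes F)(\A^y)$, use non-singularity of $F$ (equivalently, of its Kronecker powers) to equate $\rank{\mat{\A(\h_y)}_{P,Q}}$ with $\rank{\mat{\A^y}_{P,Q}}$, and then invoke the coefficient-tensor bound $\rank{\mat{\A^y}_{P,Q}}\leq D^{(L-1)}$ for the ``left-right''/``top-bottom'' partitions from \citet{inductive_bias} (the paper cites \citet{Hackbusch-book} for the rank-preservation step and \citet{generalized_decomp} for the choice of templates, exactly the ingredients you use). The only caveat, which you yourself flag, is that your heuristic justification of the cited bound (each summand being rank one requires every penultimate-level block to lie entirely on one side of the cut, not that the top window ``straddles'' it) is left to the cited result, just as in the paper, so there is no substantive gap.
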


Lemma~\ref{lemma:mat_rank_bound} essentially means that it is sufficient to show
that overlapping ConvACs can attain ranks super-polynomial in their size to
prove they are exponentially efficient with respect to non-overlapping ConvACs.
In the next section we analyze how the overlapping degree is related to the
rank, and under what cases it leads to an exponentially large rank.

\section{The Expressive Efficiency of Overlapping Architectures}
\label{sec:overlaps_efficiency}

In this section we analyze the expressive efficiency of overlapping
architectures. We begin by defining our measures
of the overlapping degree that will used in our claims, followed by presenting
our main results in sec.~\ref{sec:main_results}. For the sake of
brevity, an additional set of results, in light of the recent work by
\citet{inductive_bias} on ``Pooling Geometry'', is deferred to
app.~\ref{app:pooling_geometry}.

\subsection{The Overlapping Degree of a Network}
\label{sec:overlapping_degree}

\begin{wrapfigure}{r}{0.5\textwidth} 
\vspace{-3mm}
\centering
\includegraphics[width=\linewidth]{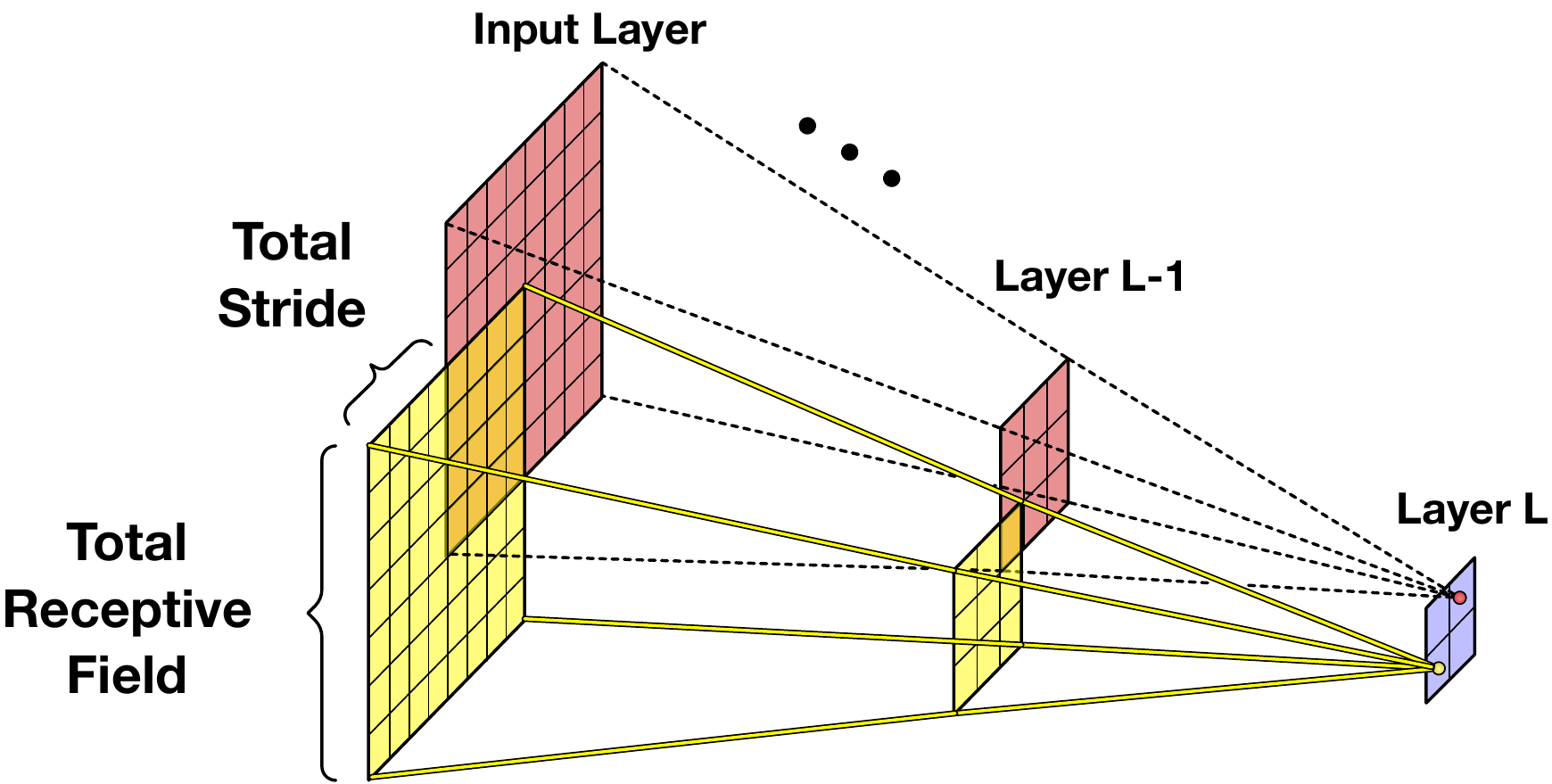}
\caption{Illustrating the total receptive field and total stride
         attributes for the $L$'th layer, which could be seen as the projected
         receptive field and stride with respect to the input layer. Together,
         they capture the overlapping degree of a network.}
\label{fig:total_properties}
\vspace{-3mm}
\end{wrapfigure}

To analyze the efficiency of overlapping architectures, we will first formulate
more rigorously the measurement of the overlapping degree of a given
architecture. As mentioned in sec.~\ref{sec:intro}, we do so by defining the
concepts of the \emph{total receptive field} and \emph{total stride} of a given
layer $l\in[L]$, denoted by $\totrec^{(l)}$ and $\totstr^{(l)}$, respectively.
Both measurements could simply be thought of as projecting the accumulated local
receptive fields (or strides) to the the first layer, as illustrated in
fig.~\ref{fig:total_properties}, which represent a type of global statistics
of the architecture. However, note that proposition~\ref{prop:nothing_to_lose}
entails that a given architecture could have a smaller \emph{effective} total
receptive field, for some settings of its parameters. This leads us to define
the $\alpha$-minimal total receptive field, for any $\alpha\in\R_+$, as the
smallest effective total receptive field still larger than $\alpha$, which we
denote by $\totrec^{(l,\alpha)}$. The exact definitions of the above concepts
are formulated as follows:
\begin{align}
    \label{eq:total_stride}
    \totstr^{(l)} \equiv \totstr^{(l)}(S^{(1)},\ldots,S^{(l)})
        &\equiv \begin{cases}
            \prod_{i=1}^l S^{(i)} & l \geq 1\\
            1 & l = 0
        \end{cases} \\
    \label{eq:total_receptive}
    \totrec^{(l)} \equiv \totrec^{(l)}(R^{(1)},S^{(1)},\ldots,R^{(l)},S^{(l)})
        &\equiv R^{(l)} \cdot \totstr^{(l-1)} +
         \sum\nolimits_{k=1}^{l-1}\left(R^{(k)}-S^{(k)}\right)\cdot \totstr^{(k-1)} \\
    \label{eq:minimal_receptive}
    \totrec^{(l,\alpha)}
        \equiv \totrec^{(l,\alpha)}(R^{(1)},S^{(1)},\ldots,R^{(l)},S^{(l)})
        &\equiv \smashoperator[r]{\argmin_{\substack{
            \forall i\in[l], S^{(i)} \leq t_i \leq R^{(i)} \\
            \totrec^{(l)}(t_1,S^{(1)},\ldots,t_l,S^{(l)}) > \alpha}}}
         \quad\,\,\totrec^{(l)}(t_1,S^{(1)},\ldots,t_l,S^{(l)})
\end{align}
where we omitted the arguments of $\totstr^{(l-1)}$ and $\totstr^{(k-1)}$ for
the sake of visual compactness.

Notice that for non-overlapping networks the total receptive field always
equals the total stride, and that only at the end of the network, after the
spatial dimension collapses to $1{\times}1$, does the the total receptive field
grow to encompass the entire size of the representation layer. For overlapping
networks this is not the case, and the total receptive field could grow much
faster. Intuitively, this means that values in regions of the input layer that
are far apart would be combined by non-overlapping networks only near the last
layers of such networks, and thus non-overlapping networks are effectively
shallow in comparison to overlapping networks. Base on this intuition, in the
next section we analyze networks with respect to the point at which their total
receptive field is large enough.

\subsection{Main Results}\label{sec:main_results}

With all the preliminaries in place, we are ready to present our main result:
\begin{theorem}\label{thm:main_overlaps}
    Assume a ConvAC with a fixed representation layer having $M$ output channels
    and both width and height equal to $H$, followed by $L$ GC layers, where the
    $l$'th layer has a local receptive field $R^{(l)}$, a stride $S^{(l)}$, and
    $D^{(l)}$ output channels. Let $K \in [L]$ be a layer with a total receptive
    field $\totrec^{(K)} \equiv \totrec^{(K)}(R^{(1)},S^{(1)},
    \ldots,R^{(K)},S^{(K)})$, such that $\totrec^{(K)}>\frac{H}{2}$.
    Then, for any choice
    of parameters, except a null set (with respect to the Lebesgue
    measure), and for any template vectors such that $F$ is non-singular, the
    following equality holds:
    \begin{align}\label{eq:lower_bound}
        \rank{\mat{\A(\h_y)}_{P,Q}} \geq D^{\left\lfloor \frac{H - \totrec^{(K, \left\lfloor \nicefrac{H}{2} \right\rfloor)}}{\totstr^{(K)}} +1 \right\rfloor \cdot \left\lceil \frac{H}{\totstr^{(K)}} \right\rceil}
    \end{align}
    where $(P,Q)$ is either the ``left-right'' or the ``top-bottom'' partitions
    and ${D \equiv \min\{M,D^{(K)},\frac{1}{2} \min_{1\leq l\leq K} D^{(l)}\}}$.
\end{theorem}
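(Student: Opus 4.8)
The plan is to reduce the theorem to constructing a single favorable assignment of weights, and then to build it. Fix template vectors with $F$ non-singular and write $n$ for the exponent appearing in \eqref{eq:lower_bound}. Since a ConvAC only adds and multiplies affine forms in the fixed numbers $F_{m,d}=f_m(\x^{(d)})$, every entry of $\A(\h_y)$ — and hence every $D^{n}{\times}D^{n}$ minor of $\mat{\A(\h_y)}_{P,Q}$ — is a polynomial in the network's weights and biases. The set of parameters with $\rank{\mat{\A(\h_y)}_{P,Q}}<D^{n}$ is therefore the common zero locus of these minors; if at least one of them is not identically zero, this locus is contained in the zero set of a nonzero polynomial and hence has zero Lebesgue measure. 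So it suffices to produce one parameter setting at which $\rank{\mat{\A(\h_y)}_{P,Q}}\ge D^{n}$.

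Before the construction I would normalize the architecture via Proposition~\ref{prop:nothing_to_lose}: shrink the local receptive fields of the first $K$ layers so that layer $K$ has effective total receptive field exactly $T\equiv\totrec^{(K,\lfloor H/2\rfloor)}$ (this is possible, and $T>\lfloor H/2\rfloor$, because of the hypothesis $\totrec^{(K)}>\nicefrac{H}{2}$), and replace layers $K{+}1,\dots,L$ by non-overlapping layers — legitimate since the original network contracts to a $1{\times}1$ output, so $\prod_{l>K}S^{(l)}=H/\totstr^{(K)}=H^{(K)}$. After this, each layer-$K$ spatial location $(i,j)$ reads a $T{\times}T$ window of the input whose top-left corner is at $(i\,\totstr^{(K)},\,j\,\totstr^{(K)})$, and $\h_y$ can be arranged to equal a constant times the product, over all layer-$K$ locations, of the sum of the $D$ active channels at that location. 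Counting the locations whose $T{\times}T$ window fits entirely inside the $H{\times}H$ input gives exactly $\big(\lfloor(H-T)/\totstr^{(K)}\rfloor+1\big)\cdot\lceil H/\totstr^{(K)}\rceil=n$ of them — one spatial axis is constrained to keep the window inside, the other is free — and, since $T>\lfloor H/2\rfloor$, each such window straddles the midline of the $(P,Q)$ partition. Call this set of locations $\mathcal{S}$, so $|\mathcal{S}|=n$.

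The core of the proof is the weight construction at layers $1,\dots,K$. Because $F$ is invertible, the representation layer followed by a $1{\times}1$ convolution can be made to output at every spatial position $p$ the one-hot vector $\e_{d_p}\in\R^{D}$ indexed by the template index $d_p$ currently fed at $p$ (the standard ``delta'' encoding — which forces $D\le M$). A GC layer with product pooling can then act as a position selector: setting its $1{\times}1$ weights to the identity on one chosen in-window position and to zero, with an all-ones bias, on every other position, it simply copies that position's vector. Using this, for each $(i,j)\in\mathcal{S}$ I would pick one distinguished position $p^{L}_{ij}$ in the left half and one $p^{R}_{ij}$ in the right half of its window — all $2n$ of them chosen pairwise distinct, which is feasible since $2n\le H^{2}$ — and route their delta-signals up through layers $2,\dots,K$ so that channel $c$ of the layer-$K$ output at $(i,j)$ equals $\indc{d_{p^{L}_{ij}}=c}\cdot\indc{d_{p^{R}_{ij}}=c}$ and ignores every other input, while every location outside $\mathcal{S}$ is driven to a fixed nonzero constant (zero weights, suitable bias). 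Carrying the two streams simultaneously through the intermediate layers is what costs the auxiliary channels, giving the factor $\tfrac12$ in $D=\min\{M,D^{(K)},\tfrac12\min_{l\le K}D^{(l)}\}$. Then
\[
\h_y(\x_1,\dots,\x_N)=\mathrm{const}\cdot\prod_{(i,j)\in\mathcal{S}}\psi_{ij}\!\left(\x_{p^{L}_{ij}},\x_{p^{R}_{ij}}\right),\qquad
\psi_{ij}\big(\x^{(d)},\x^{(e)}\big)=\textstyle\sum_{c=1}^{D}\indc{d=c}\,\indc{e=c}=\indc{d=e},
\]
and, because the argument pairs are disjoint with $p^{L}_{ij}\in P$ and $p^{R}_{ij}\in Q$, the grid tensor factorizes: up to a nonzero scalar and rank-one ``constant'' modes, $\mat{\A(\h_y)}_{P,Q}=\bigotimes_{(i,j)\in\mathcal{S}}I_{D}$, whose rank is $D^{|\mathcal{S}|}=D^{n}$. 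This exhibits the desired parameter setting and finishes the argument.

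I expect the main obstacle to be exactly this last construction: making the selector routing realizable at all $n$ active locations at once despite the weights being tied within each layer, keeping both the left- and right-hand streams alive through the overlapping intermediate layers (the source of the $\tfrac12$ loss), and choosing the $2n$ distinguished positions pairwise distinct and on the correct side of the partition — a bookkeeping step that becomes tight when $\totstr^{(K)}=1$. By comparison the generic-rank reduction and the receptive-field count are routine; I would still want to verify the parity conventions of the ``equal parts'' partition so that a window of width exceeding $\lfloor H/2\rfloor$ that fits inside the input genuinely meets both sides.
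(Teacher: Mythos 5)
Your overall strategy is the same as the paper's: reduce to exhibiting a single parameter assignment via the polynomial/zero-measure argument (lemma~\ref{lemma:rank_everywhere}), pair a left-half position with a right-half position inside each large effective receptive field so that the grid tensor matricizes as a Kronecker product of rank-$D$ blocks, and simulate the one large layer by the stack of layers $1,\dots,K$ at the cost of carrying two channel streams (the source of the $\tfrac12$ in $D$) -- this is exactly the paper's claim~\ref{claim:one_overlap} plus claim~\ref{claim:many_overlaps}. However, the construction as you wrote it has a genuine gap, and it is precisely the one you flagged but did not resolve: the weights of a GC layer are \emph{shared} across spatial locations, so you cannot ``choose the $2n$ distinguished positions pairwise distinct'' per window, nor can you ``drive every location outside $\mathcal{S}$ to a fixed nonzero constant'' by per-location zero weights and biases. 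The offsets of $p^{L}$ and $p^{R}$ are the same in every window, so the boundary windows (those whose right distinguished position falls outside the input, which necessarily exist since $T>\nicefrac{H}{2}$) apply the same kernel to zero-padded input; with your exact-indicator choice (bias $0$ at the distinguished offsets) each such window outputs $\sum_c \indc{d=c}\cdot 0 = 0$, and since $\h_y$ is the product over \emph{all} layer-$K$ locations, the constructed function is identically zero -- rank $0$, not $D^{n}$. In addition, when $\totstr^{(K)}$ is small (e.g.\ $1$) the shared offsets force coincidences between the right position of one window and the left position of another; with pure indicators this chains the factors $\indc{d_a=d_b}\indc{d_b=d_c}$ into a single connected component and strictly lowers the Kronecker rank below $D^{n}$.

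The paper's proof resolves exactly these two points, and not by bookkeeping: it does \emph{not} use exact delta blocks, but takes nonzero biases $\beta=\nicefrac{2\alpha}{D}$ at the two active offsets (and bias $1$ elsewhere), so boundary windows contribute nonzero constant (rank-one) factors rather than zeros, and the per-window blocks come out ``mostly diagonal'' ($\indc{i=j}$ on the $D\times D$ corner, nonzero constants elsewhere) with rank verified to be exactly $D$ by explicit computation of $\alpha$. It then splits into the cases $S\mid R-1$ and $S\nmid R-1$ to absorb the index coincidences between active and boundary windows into a single rank-$D$ block per pair. Your proposal would be complete if you replaced the indicator construction by such a perturbed (nonzero-bias) version and carried out the collision/boundary case analysis; as written, the key step ``$\mat{\A(\h_y)}_{P,Q}=\bigotimes I_D$ up to rank-one modes'' does not hold for the parameters you specify. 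The measure-zero reduction, the counting of active windows, and the two-stream simulation of the large layer are all sound and match the paper.
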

\begin{proof}[Proof sketch]
    Because the entries of the matricized grid tensors are polynomials in the
    parameters, then according to a lemma by \citet{tmm}, if there is a single
    example that attains the above lower-bound on the rank, then it occurs
    almost everywhere with respect to the Lebesgue measure on the Euclidean
    space of the parameters.

    Given the last remark, the central part of our proof is simply the
    construction of such an example. First we find a set of parameters for the
    simpler case where the first GC layer is greater than a quarter of the
    input, satisfying the conditions of the theorem. The motivation behind the
    specific construction is the pairing of indices from each side of
    the partition, such that they are both in the same local receptive field,
    and designing the filters such that the output of each local application of
    them defines a mostly diagonal matrix of rank $D$, with respect to
    these two indices. The rest of the parameters are chosen such that the
    output of the entire network results in a product of the entries of these
    matrices. Under matricization, this results in a matrix who is
    equivalent\footnote{Two matrices are equivalent if one could be converted
    to the other by elementary row or column operations.} to a
    Kronecker product of mostly diagonal matrices. Thus, the matricization rank
    is equal to the product of the ranks of these matrices, which results in the
    exponential form of eq.~\ref{eq:lower_bound}.
    Finally, we extend the above example to the general case, by realizing the
    operation of the first layer of the above example through multiple layers
    with small local receptive fields. See app.~\ref{app:proofs:preliminaries}
    for the definitions and lemmas we rely on, and see
    app.~\ref{app:proofs:main_overlaps} for a complete proof.
\end{proof}
Combined with Lemma~\ref{lemma:mat_rank_bound}, it results in the following
corollary:
\begin{corollary}
    Under the same setting as theorem~\ref{thm:main_overlaps}, and for all
    choices of parameters of an overlapping ConvAC, except a negligible set,
    any non-overlapping ConvAC that realizes (or approximates) the same grid
    tensor must be of size at least:
    \begin{equation*}
        D^{\left\lfloor \frac{H - \totrec^{(K, \left\lfloor \nicefrac{H}{2} \right\rfloor)}}{\totstr^{(K)}} +1 \right\rfloor \cdot \left\lceil \frac{H}{\totstr^{(K)}} \right\rceil} .
    \end{equation*}
\end{corollary}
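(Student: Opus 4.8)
The plan is to chain the lower bound of Theorem~\ref{thm:main_overlaps} with the upper bound of Lemma~\ref{lemma:mat_rank_bound}, exploiting the fact that equal grid tensors have equal matricization ranks, and that grid tensors that are merely close cannot have ranks that drop too far. Fix the representation functions and the template vectors exactly as in the hypothesis of Theorem~\ref{thm:main_overlaps}, so that $F$ is non-singular; then both that theorem and Lemma~\ref{lemma:mat_rank_bound} apply with the same partition $(P,Q)$ (``left-right'' or ``top-bottom''). Abbreviate the exponent as $r \equiv \left\lfloor \frac{H - \totrec^{(K, \left\lfloor \nicefrac{H}{2} \right\rfloor)}}{\totstr^{(K)}} +1 \right\rfloor \cdot \left\lceil \frac{H}{\totstr^{(K)}} \right\rceil$, so that Theorem~\ref{thm:main_overlaps} guarantees $\rank{\mat{\A(\h_y)}_{P,Q}} \geq D^{r}$ for every choice of the overlapping ConvAC's parameters outside a Lebesgue-null set $\mathcal{N}$.

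For exact realization, I would suppose $B$ is a non-overlapping ConvAC whose grid tensor over these templates equals $\A(\h_y)$, and let $D_B$ denote the number of output channels of the second-to-last GC layer of $B$. Lemma~\ref{lemma:mat_rank_bound} gives $\rank{\mat{\A(\h_y)}_{P,Q}} \leq D_B$, and combining with Theorem~\ref{thm:main_overlaps} yields $D_B \geq D^{r}$ whenever the overlapping network's parameters lie outside $\mathcal{N}$. It then remains to observe that the size of $B$ is at least $D_B$ under either measure from sec.~\ref{sec:intro}: its second-to-last layer contributes at least $D_B$ ``neurons'' (cf.\ the footnote in sec.~\ref{sec:overlapping_convac}) and carries at least $D_B$ parameters, already through its biases. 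Hence the size of $B$ is at least $D^{r}$, which is the claimed bound.

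For the approximation clause, I would replace the equality of grid tensors by the standard perturbation estimate: since the error of the best rank-$k$ spectral-norm approximation of a matrix $M$ equals $\sigma_{k+1}(M)$, any matrix within spectral distance strictly less than $\sigma_{D^r}\!\bigl(\mat{\A(\h_y)}_{P,Q}\bigr)$ of it has rank at least $D^{r}$. The entries of $\mat{\A(\h_y)}_{P,Q}$ are polynomials in the overlapping network's parameters, so the rank bound of Theorem~\ref{thm:main_overlaps} means precisely that outside the same null set $\mathcal{N}$ we have $\sigma_{D^r}\!\bigl(\mat{\A(\h_y)}_{P,Q}\bigr)>0$; for any such parameter choice, a non-overlapping ConvAC whose matricized grid tensor lies within that positive distance must have matricization rank at least $D^{r}$, and the size estimate of the previous paragraph goes through unchanged.

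This is essentially a bookkeeping argument, and the only point deserving care is exactly that approximation clause: the relevant singular-value gap $\sigma_{D^r}$ is not uniform over the overlapping network's parameters but is merely positive for each individual generic choice — which is enough, since the excluded null set $\mathcal{N}$ is allowed to depend on the overlapping network while the approximating non-overlapping network is chosen afterwards. No ideas beyond Theorem~\ref{thm:main_overlaps} and Lemma~\ref{lemma:mat_rank_bound} are needed.
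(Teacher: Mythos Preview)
Your proposal is correct and follows exactly the route the paper intends: the corollary is stated immediately after Theorem~\ref{thm:main_overlaps} with only the remark ``Combined with Lemma~\ref{lemma:mat_rank_bound}, it results in the following corollary,'' and your argument is precisely that chaining, with the approximation clause handled via the singular-value perturbation idea the paper alludes to in sec.~\ref{sec:efficiency_analysis:bounds}. You have in fact spelled out more than the paper does (notably the $\sigma_{D^r}$ argument and the observation that $D_B$ lower-bounds both the neuron count and the parameter count), but the approach is the same.
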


While the complexity of the generic lower-bound above might seem
incomprehensible at first, its generality gives us the tools to analyze
practically any kind of feed-forward architecture. As an example, we can analyze
the lower bound for the well known GoogLeNet
architecture~\citep{Szegedy:2014tb}, for which the lower bound equals $32^{98}$,
making it clear that using a non-overlapping architecture for this case is
infeasible. Next, we will focus on specific cases for which we can derive more
intelligible lower bounds.

\begin{figure}
\centering
\includegraphics[width=0.8\linewidth]{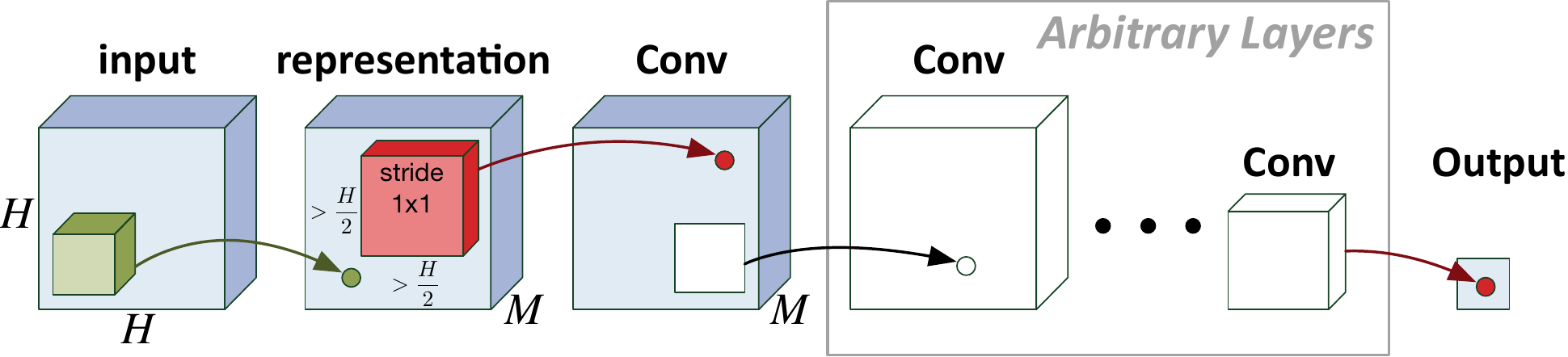}
\caption{A network architectures beginning with large local receptive fields
    greater than $\nicefrac{N}{2}$ and at least $M$ output channels. According
    to theorem~\ref{thm:main_overlaps}, for almost all choice of parameters
    we obtain a function that cannot be approximated by a non-overlapping
    architecture, if the number of channels in its next to last layer is
    less than $M^{\frac{H^2}{2}}$.}
\label{fig:start_with_big_conv}
\end{figure}

According to theorem~\ref{thm:main_overlaps}, the lower bound depends on the
first layer for which its total receptive field is greater than a quarter of the 
input. As mentioned in the previous section, for non-overlapping networks this
only happens after the spatial dimension collapses to $1{\times}1$, which
entails that both the total receptive field and total stride would be equal to
the width $H$ of the representation layer, and substituting this values in
eq.~\ref{eq:lower_bound} results simply in $D$~--~trivially meaning that to
realize one non-overlapping network by another non-overlapping network, the next
to last layer must have at least half the channels of the target network.

On the other extreme, we can examine the case where the first GC layer has a
local receptive field $R$ greater than a quarter of its input, i.e.
$R > \nicefrac{H}{2}$. Since the layers following the first GC layer do not
affect the lower bound in this case, it applies to any arbitrary sequence of
layers as illustrated in fig.~\ref{fig:start_with_big_conv}. For simplicity
we will also assume that the stride $S$ is less than $\nicefrac{H}{2}$, and that
$\frac{H}{2}$ is evenly divided by $S$. In this case the $\frac{H}{2}$-minimal
receptive field equals to $\frac{H}{2} + 1$, and thus the lower bound results in
$D^{\frac{H^2}{2S}}$. Consider the case of $D = M$ and $S=1$, then a
non-overlapping architecture that satisfies this lower bound is of the order of
magnitude at which it could already represent any possible grid tensor. This
demonstrate our point from the introduction, that through a a polynomial change
in the architecture, i.e. increasing the receptive field, we get an exponential
increase in expressivity.

\begin{figure}
\centering
\includegraphics[width=0.9\linewidth]{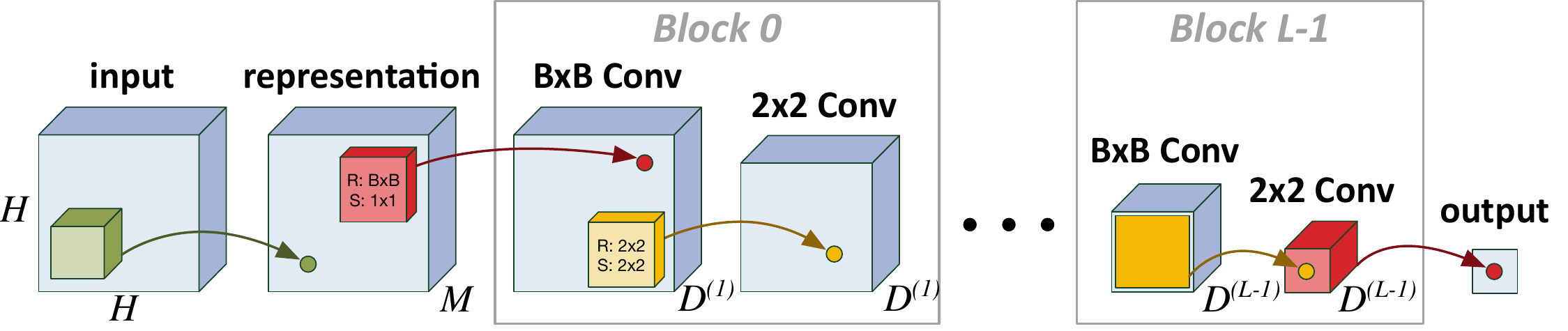}
\caption{The common network architecture of alternating $B{\times}B$ ``conv''
    and $2{\times}2$ ``pooling'' layers. If $B \leq \nicefrac{H}{5}{+}1$ and
    $D^{(l)} \geq 2M$ for all $1 \leq l < L$, then the lower bound of
    theorem~\ref{thm:main_overlaps} for this network results in
    $M^{\frac{(2B - 1)^2}{4}}$.
}
\label{fig:alt_conv_pool_net}
\end{figure}

Though the last example already demonstrates that a polynomially sized
overlapping architecture could lead to an exponential separation, in practice,
employing such large convolutions is very resource intensive. The common best
practice is to use multiple small local receptive fields of size
$B \times B$, where the typical values are $B=3$ or $B=5$, separated by a
$2 \times 2$ ``pooling'' layers, i.e. layers with both stride and local
receptive field equal to $2 \times 2$. For simplicity, we assume that $H = 2^L$
for some $L \in \N$. See fig.~\ref{fig:alt_conv_pool_net} for an illustration of
such a network. Analyzing the above network with theorem~\ref{thm:main_overlaps}
results in the following proposition:
\begin{proposition}\label{prop:common_case}
    Consider a network  comprising a sequence of GC blocks, each block begins
    with a layer whose local receptive field is $B {\times} B$ and its stride
    $1{\times}1$, followed by a layer with local receptive field $2 {\times} 2$
    and stride $2 {\times} 2$, where the output channels of all layers are at
    least $2M$, and the spatial dimension of the representation layer is
    $H {\times} H$ for $H{=}2^L$. Then, the lower bound describe by
    eq.~\ref{eq:lower_bound} for the above network is greater than or equal to:
    \begin{align*}
    \tau(B,H) &\equiv M^{\frac{(2B-1)^2}{2} \cdot \left(1+\frac{2B-2}{H}\right)^{-2}} 
    = M^{\frac{H^2}{2}\cdot \left(1 + \frac{H-1}{2B-1} \right)^{-2}},
    \end{align*}
    whose limits are $\lim_{B\to\infty} \tau(B,H) = M^\frac{H^2}{2}$ and
    $\lim_{H\to\infty} \tau(B,H) = M^\frac{(2B-1)^2}{2}$. Finally, assuming
    $B \leq \frac{H}{5} + 1$, then $\tau(B,H) \geq M^{\frac{(2B-1)^2}{4}}$.
\end{proposition}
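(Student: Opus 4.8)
The plan is to apply Theorem~\ref{thm:main_overlaps} at one carefully chosen layer $K$ and then simplify eq.~\ref{eq:lower_bound} for this particular architecture. Since every layer has at least $2M$ channels, the factor $D=\min\{M,D^{(K)},\frac12\min_{1\le l\le K}D^{(l)}\}$ equals $M$, so the whole task reduces to computing, for a good choice of $K$, the three architecture-dependent quantities $\totstr^{(K)}$, $\totrec^{(K)}$ and $\totrec^{(K,\lfloor H/2\rfloor)}$ appearing in the exponent.

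First I would derive closed forms for the total stride and total receptive field of the alternating architecture. Index the $m$-th $B{\times}B$/stride-$1$ layer as layer $2m-1$ and the $m$-th $2{\times}2$/stride-$2$ layer as layer $2m$. A direct induction on eqs.~\ref{eq:total_stride}--\ref{eq:total_receptive} gives $\totstr^{(2m-1)}=2^{m-1}$ and $\totrec^{(2m-1)}=(2B-1)\,2^{m-1}-(B-1)$; only the $B{\times}B$ layers contribute $R-S=B-1$ to the correction sum, the $2{\times}2$ layers contributing $0$. I then take $K=2m^\star-1$, where $m^\star$ is the smallest index with $\totrec^{(2m^\star-1)}>H/2$, i.e. with $2^{m^\star-1}>\frac{H+2B-2}{2(2B-1)}$. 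A one-line estimate, using $B\ge 2$ and $H=2^L$, shows $2\,(B-1)(H-1)=H(2B-1)-(H+2B-2)>0$, hence $2^{L-1}=H/2>\frac{H+2B-2}{2(2B-1)}$, so $m^\star\le L$ and the layer $K$ genuinely exists, and moreover $2^{m^\star-1}\le H/2$.

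The step needing the most care is evaluating $\totrec^{(K,\lfloor H/2\rfloor)}$. Here the $2{\times}2$ layers are forced, but each $B{\times}B$ layer may be shrunk to any size $t\in\{1,\dots,B\}$, and by eq.~\ref{eq:total_receptive} the resulting total receptive field equals $s\,2^{m^\star-1}+\sum_{j=1}^{m^\star-1}(t_j-1)2^{j-1}$ with $s\in\{1,\dots,B\}$ and $t_j\in\{1,\dots,B\}$. The combinatorial heart of the argument is that, because $B\ge 2$, a standard base-$2$ density argument shows this expression attains \emph{every} integer in the contiguous interval $[\,2^{m^\star-1},\ (2B-1)2^{m^\star-1}-(B-1)\,]$ (the partial sums $\sum_{j\le k}(t_j-1)2^{j-1}$ reach $(B-1)(2^k-1)\ge 2^k-1$, so the next increment $2^k$ leaves no gap, and likewise the $B$ blocks indexed by $s$ abut). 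Since $H=2^L$ is even, $\lfloor H/2\rfloor=H/2$; the left endpoint $2^{m^\star-1}$ is $\le H/2$ while the right endpoint $\totrec^{(K)}$ is an integer $>H/2$, hence $\ge H/2+1$, so the smallest realizable value still exceeding $H/2$ is $\totrec^{(K,\lfloor H/2\rfloor)}=H/2+1$. Substituting $\totstr^{(K)}=2^{m^\star-1}$ (which divides $H=2^L$), $\totrec^{(K,\lfloor H/2\rfloor)}=H/2+1$ and $D=M$ into eq.~\ref{eq:lower_bound}, we have $H-\totrec^{(K,\lfloor H/2\rfloor)}=H/2-1$, the ceiling $\lceil H/2^{m^\star-1}\rceil$ is exactly $H/2^{m^\star-1}$, and the floor is exactly $\frac{H}{2\cdot 2^{m^\star-1}}$, so the exponent collapses to $\frac{H}{2\cdot2^{m^\star-1}}\cdot\frac{H}{2^{m^\star-1}}=\frac{H^2}{2(2^{m^\star-1})^2}$, giving $\rank{\mat{\A(\h_y)}_{P,Q}}\ge M^{H^2/(2(2^{m^\star-1})^2)}$.

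Finally I would turn this into the claimed form. By minimality of $m^\star$ the preceding power of two satisfies $2^{m^\star-2}\le\frac{H+2B-2}{2(2B-1)}$, hence $2^{m^\star-1}\le\frac{H+2B-2}{2B-1}$ (for $m^\star=1$ this reads $1\le\frac{H+2B-2}{2B-1}$, true since $H\ge 1$); plugging this in gives $\rank{\mat{\A(\h_y)}_{P,Q}}\ge M^{(2B-1)^2H^2/(2(H+2B-2)^2)}$, and a routine rearrangement using $\frac{H+2B-2}{H}=1+\frac{2B-2}{H}$ and $\frac{H+2B-2}{2B-1}=1+\frac{H-1}{2B-1}$ identifies the exponent with both $\frac{(2B-1)^2}{2}\bigl(1+\frac{2B-2}{H}\bigr)^{-2}$ and $\frac{H^2}{2}\bigl(1+\frac{H-1}{2B-1}\bigr)^{-2}$, i.e. with $\tau(B,H)$. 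The limits are then immediate: $B\to\infty$ in the second form sends $\frac{H-1}{2B-1}\to 0$, giving $M^{H^2/2}$, while $H\to\infty$ through powers of two in the first form sends $\frac{2B-2}{H}\to 0$, giving $M^{(2B-1)^2/2}$. For the last inequality, $B\le\frac{H}{5}+1$ forces $\frac{2B-2}{H}\le\frac25$, so $\bigl(1+\frac{2B-2}{H}\bigr)^2\le\frac{49}{25}<2$, whence $\bigl(1+\frac{2B-2}{H}\bigr)^{-2}\ge\frac{25}{49}>\frac12$ and $\tau(B,H)\ge M^{(2B-1)^2/4}$. The main obstacle is precisely the contiguity/density claim for $\totrec^{(K,\lfloor H/2\rfloor)}$, together with keeping the floor/ceiling reduction and the small-$H$ and $m^\star=1$ edge cases consistent.
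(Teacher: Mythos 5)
Your proposal is correct and follows essentially the same route as the paper: the same closed forms $\totstr^{(2l-1)}=2^{l-1}$ and $\totrec^{(2l-1)}=(2B-1)2^{l-1}-B+1$, the same identification of the first $B{\times}B$ layer whose total receptive field exceeds $\nicefrac{H}{2}$, the same evaluation $\totrec^{(K,\lfloor H/2\rfloor)}=\nicefrac{H}{2}+1$, and the same substitution and log/minimality bound yielding $\tau(B,H)$, the limits, and the $B\le\nicefrac{H}{5}+1$ case. Your contiguity ("base-2 density") argument for the $\alpha$-minimal receptive field, using the full range $t\in\{1,\ldots,B\}$, is in fact a slightly more careful version of the paper's binary-representation claim (whose stated range $2^{l-1}\le\alpha<2^l-1$ does not literally cover the layer used), so this is a welcome refinement rather than a deviation.
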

\begin{proof}[Proof sketch]
    We first find a closed-form expression for the total receptive field and
    stride of each of the $B{\times}B$ layers in the given network. We
    then show that for layers whose total receptive field is greater than
    $\frac{H}{2}$, its $\alpha$-minimal total receptive field, for
    $\alpha{=}\frac{H}{2}$, is equal to $\frac{H}{2}{+}1$. We then use the above
    to find the first layer who satisfies the conditions of
    theorem~\ref{thm:main_overlaps}, and then use our closed-forms expressions
    to simplify the general lower bound for this case. See
    app.~\ref{app:proofs:common_case} for a complete proof.
\end{proof}
In particular, for the typical values of $M=64$, $B=5$, and $H \geq 20$, the
lower bound is at least $64^{20}$, which demonstrates that even having a small
amount of overlapping already leads to an exponential separation from the
non-overlapping case. When $B$ grows in size, this bound approaches the earlier
result we have shown for large local receptive fields encompassing more than a
quarter of the image. When $H$ grows in size, the lower bound is dominated
strictly by the local receptive fields. Also notice that based on
proposition~\ref{prop:common_case}, we could also derive a respective lower
bound for a network following VGG style architecture~\citep{Simonyan:2014ws},
where instead of a single convolutional layer before every ``pooling'' layer, we
have $K$ layers, each with a local receptive field of $C \times C$. Under this
case, it is trivial to show that the bound from
proposition~\ref{prop:common_case} holds for $B = K \cdot (C-1) + 1$, and under
the typical values of $C=3$ and $K=2$ it once again results in a lower bound of
at least $64^{20}$.

\section{Experiments}\label{sec:exp}

\begin{figure}
\centering
\includegraphics[width=0.49\linewidth]{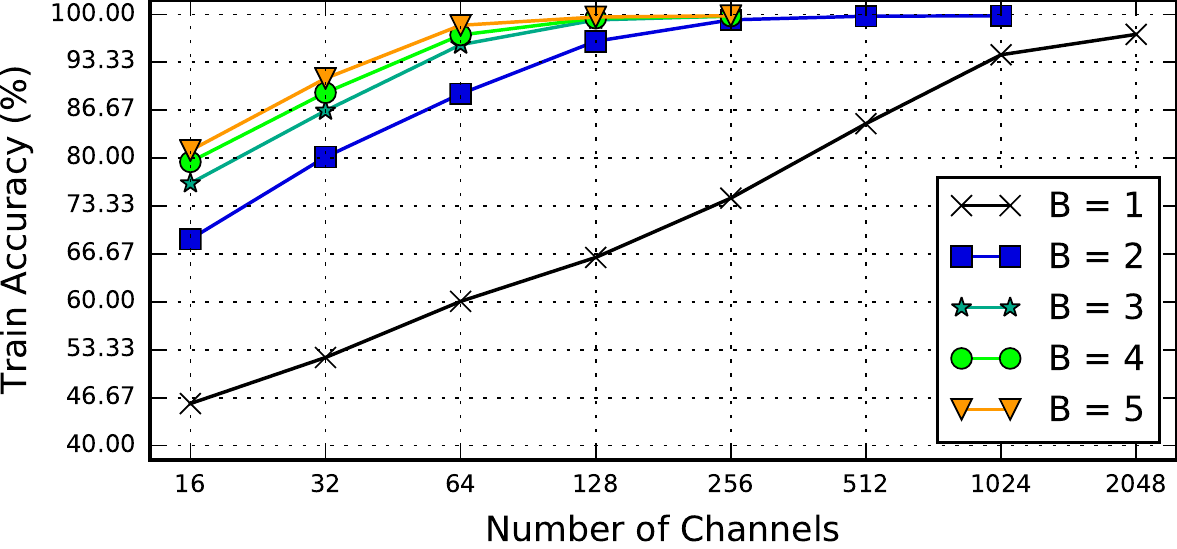}~~~~\includegraphics[width=0.49\linewidth]{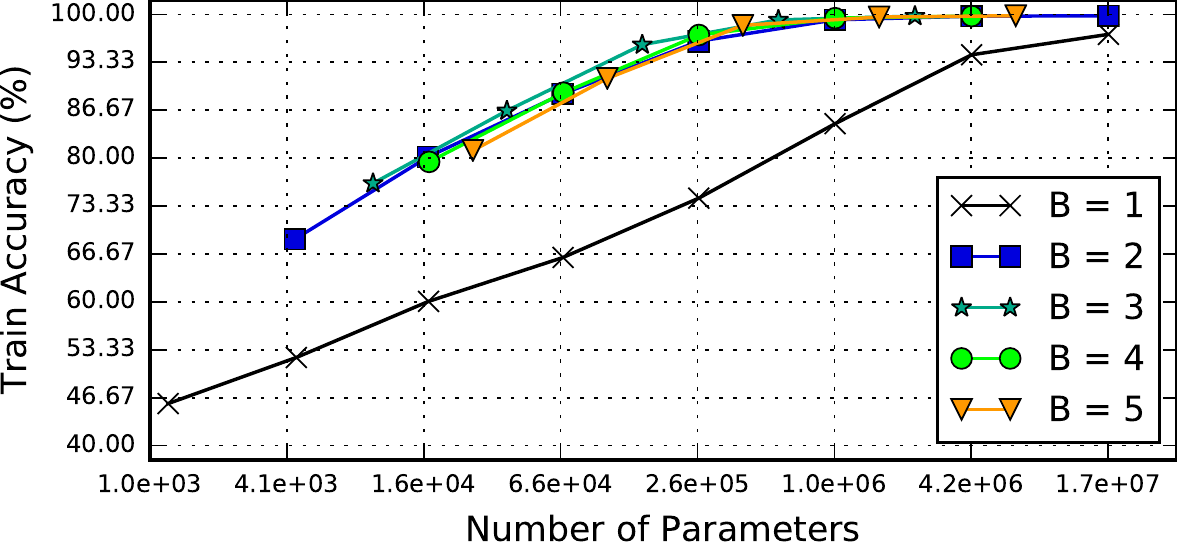}

\includegraphics[width=0.49\linewidth]{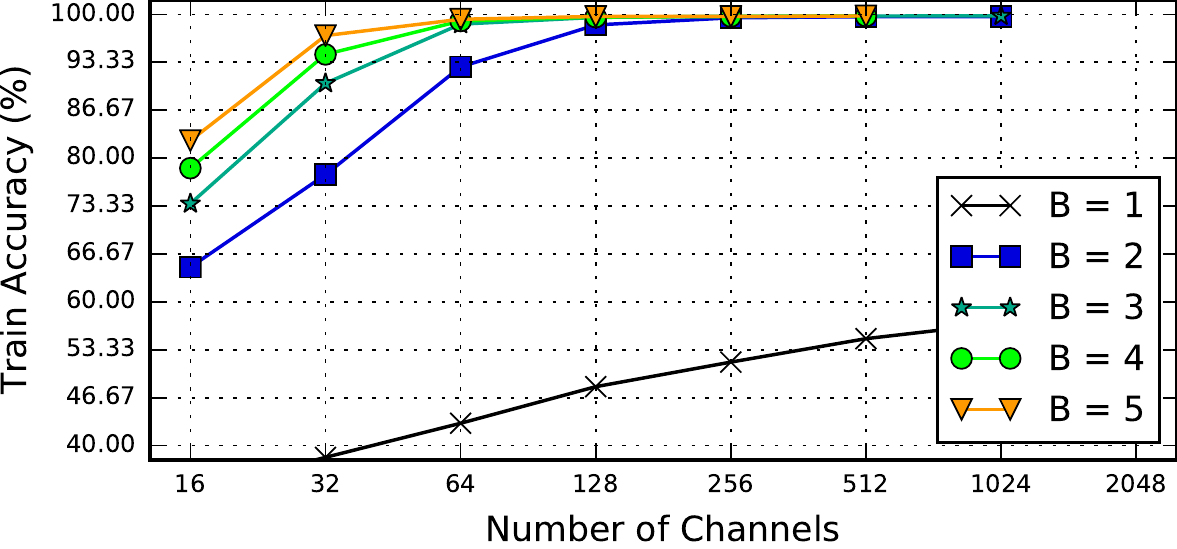}~~~~\includegraphics[width=0.49\linewidth]{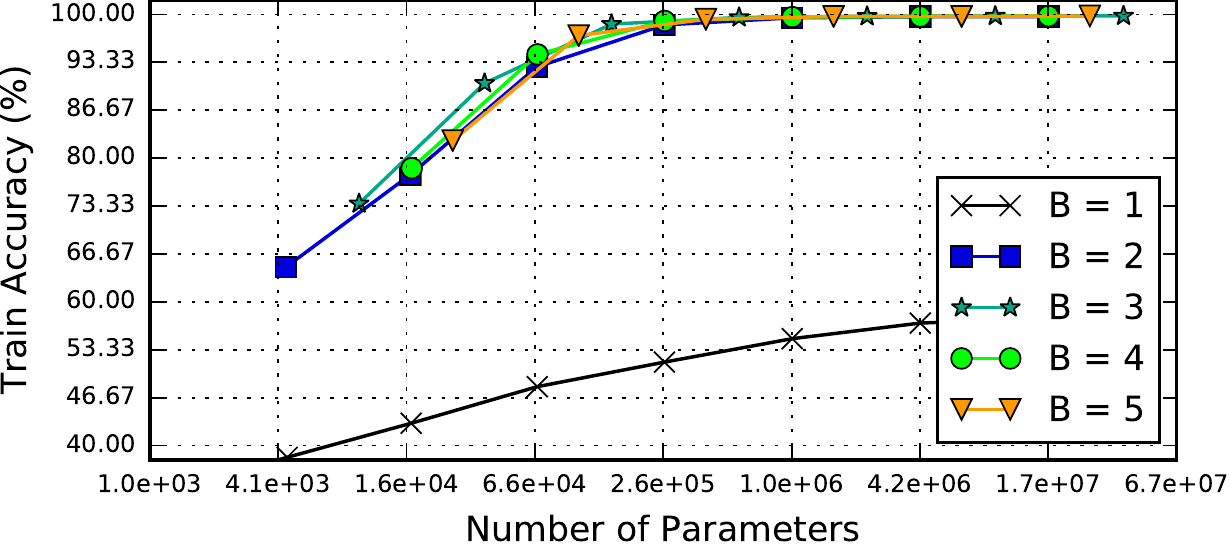}

\caption{Training accuracies of standard ConvNets on CIFAR-10 with data
    augmentations, where the results of spatial augmentations presented at the
    top row, and color augmentations at the bottom row. Each network follows the
    architecture of proposition~\ref{prop:common_case}, with with receptive
    field $B$ and using the same number of channels across all layers, as
    specified by the horizontal axis of left plot. We plot the same results with
    respect to the total number of parameters in the right plot.}
\label{fig:exp}
\end{figure}

In this section we show that the theoretical results of
sec.~\ref{sec:main_results} indeed hold in practice. In other words, there
exists tasks that require the highly expressive power of overlapping
architectures, on which non-overlapping architectures would have to grow
by an exponential factor to achieve the same level of performance. We
demonstrate this phenomenon on standard ConvNets with ReLU activations that
follow the same architecture that was outlined in
proposition~\ref{prop:common_case}, while varying the number of channels and the
size of the receptive field of the $B{\times}B$ ``conv'' layers. The only change
we made, was to replace the $2{\times}2$-``pooling'' layers of the convolutional
type, with the standard $2{\times}2$-max-pooling layers, and using the same
number of channels across all layers. This was done for the purpose of having
all the learned parameters located only at the (possibly) overlapping layers.
More specifically, the network has 5 blocks, each starting with a $B{\times}B$
convolution with $C$ channels, stride $1 {\times} 1$, and ReLU activation, and
then followed by $2 {\times} 2$ max-pooling layer. After the fifth
``conv-pool'', there is a final dense layer with 10 outputs and softmax
activations.

We train each of these networks for classification over the
CIFAR-10 dataset, with two types of data augmentation schemes: (i) spatial
augmentations, i.e. randomly translating (up to 3 pixels in each direction) and
horizontally flipping each image, and (ii) color augmentations following
\citet{Dosovitskiy:2014tu}, i.e. randomly adding a constant shift (at most
$\pm 0.3$) to the hue, saturation, and luminance, for each attribute separately,
and in addition randomly sampling a multiplier (in the range $[0.5, 1.5]$) just
to the saturation and luminance. Though typically data augmentation is only used
for the purpose of regularization, we employ it for the sole purpose of raising
the hardness of the regular CIFAR-10 dataset, as even small networks can already
overfit and effectively memorize its small dataset. We separately test both
the spatial and color augmentation schemes to emphasize that our empirical
results cannot be explained simply by spatial-invariance type arguments.
Finally, the training itself is carried out for 300 epochs with
ADAM~\citep{Kingma:2014us} using its standard hyper-parameters, at which point
the loss of the considered networks have stopped decreasing. We report the
training accuracy over the augmented dataset in fig.~\ref{fig:exp}, where for
each value of the receptive field $B$, we plot its respective training
accuracies for variable number of channels $C$. The source code for reproducing
the above experiments and plots can be found at \githuburl{OverlapsAndExpressiveness}.

It is quite apparent that the greater $B$ is chosen, the less channels are
required to achieve the same accuracy. Moreover, for the non-overlapping case of
$B{=}1$, more than 2048 channels are required to reach the same performance of
networks with $B {>} 2$ and just 64 channels under the spatial
augmentations~--~which means effectively exponentially more channels were
required. Even more so, under the color augmentations, we were not able to train
non-overlapping networks to reach even the smallest overlapping network
($B=2$ and $C=16$). In terms of total number of parameters, there is a clear
separation between the overlapping and the non-overlapping types, and we once
again see more than an order of magnitude increase in the number of parameters
between an overlapping and non-overlapping architectures that achieve similar
training accuracy. As a somewhat surprising result, though based only on our
limited experiments, it appears that for the same number of parameters, all
overlapping networks attain about the same training accuracy, suggesting perhaps
that having the smallest amount of overlapping already attain all the benefits
overlapping provides, and that increasing it further does not affect the
performance in terms of expressivity.

As final remark, we also wish to acknowledge the limitations of drawing
conclusions strictly from empirical experiments, as there could be alternative
explanations to these observations, e.g. the effects overlapping has on the
optimization process. Nevertheless, our theoretical results suggests this is
less likely the case.

\section{Discussion} \label{sec:discussion}

The common belief amongst deep learning researchers has been that depth is one
of the key factors in the success of deep networks~--~a belief formalized
through the depth efficiency conjecture. Nevertheless, depth is
one of many attributes specifying the architecture of deep networks, and each
could potentially be just as important. In this paper, we studied the effect
overlapping receptive fields have on the expressivity of the network, and found
that having them, and more broadly denser connectivity, results in an
exponential gain in the expressivity that is orthogonal to the depth.

Our analysis sheds light on many trends and practices in contemporary design
of neural networks. Previous studies have shown that non-overlapping
architectures are already universal~\citep{expressive_power}, and even have
certain advantages in terms of optimization~\citep{Brutzkus:2017wp}, and yet,
real-world usage of non-overlapping networks is scarce. Though there could be
multiple factors involved, our results clearly suggest that the main culprit is
that non-overlapping networks are significantly handicapped in terms of
expressivity compared to overlapping ones, explaining why the former are so
rarely used. Additionally, when examining the networks that are commonly used in
practice, where the majority of the layers are of the convolutional type with
very small receptive field, and only few if any fully-connected
layers~\citep{Simonyan:2014ws,Springenberg:2014tx,He:2016ib},
we find that though they are obviously overlapping, their overlapping degree is
rather low. We showed that while denser connectivity can increase
the expressive capacity, even in the most common types of modern architectures
already exhibit exponential increase in expressivity, without relying on
fully-connected layers. This could partly explain that somewhat surprising
observation, as it is probable that such networks are sufficiently expressive
for most practical needs simply because they are already in the exponential
regime of expressivity. Indeed, our experiments seems to suggests the same,
in which we saw that further increases in the overlapping degree beyond the most
limited overlapping case seems to have insignificant effects on performance~--~a 
conjecture not quite proven by our current work, but one we wish to investigate
in the future.

There are relatively few other works which have studied the role of receptive
fields in neural networks. Several empirical works~\citep{Li:2005kc,
Coates:2011wo,Krizhevsky:2012wl} have demonstrated similar behavior, showing
that the classification accuracy of networks can sharply decline as the degree
of overlaps is decreased, while also showing that gains from using very large
local receptive fields are insignificant compared to the increase in
computational resources. Other works studying the receptive fields of neural
networks have mainly focused on how to learn them from the
data~\citep{Coates:2011tl,Jia:2012uz}. While our analysis has no direct
implications to those specific works, it does lay the ground work for
potentially guiding architecture design, through quantifying the expressivity of
any given architecture. Lastly, \citet{Luo:2016vj} studied the \emph{effective
total receptive field} of different layers, a property of a similar nature to
our total receptive field, where they measure the the degree to which each input
pixel is affecting the output of each activation. They show that under common
random initialization of the weights, the effective total receptive field has a
gaussian shape and is much smaller than the maximal total receptive field. They
additionally demonstrate that during training the effective total receptive
field grows in size, and suggests that weights should be initialized such that
the initial effective receptive field is large. Their results strengthen our
theory, by showing that trained networks tend to maximize their effective
receptive field, taking full potential of their expressive capacity.

To conclude, we have shown both theoretically and empirically that overlapping
architectures have an expressive advantage compared to non-overlapping ones. Our
theoretical analysis is grounded on the framework of ConvACs, which we extend to
overlapping configurations. Though are proofs are limited to this specific case,
previous studies~\citep{generalized_decomp} have already shown that such results
could be transferred to standard ConvNets as well, using most of the same
mathematical machinery. While adapting our analysis accordingly is left for
future work, our experiments on standard ConvNets~(see sec.~\ref{sec:exp})
already suggest that the core of our results should hold in this case as well.
Finally, an interesting outcome of moving from non-overlapping architectures to
overlapping ones is that the depth of a network is no longer capped at
$\log_2 \left(\textit{input size} \right)$, as has been the case in the models
investigated by \citet{expressive_power}~--~a property we will examine in future
works

\newcommand{\acknowledgments}{This work is supported by Intel grant ICRI-CI \#9-2012-6133, by ISF Center grant 1790/12 and by the European Research Council (TheoryDL project).}
\ifdefined\COLT
	\acks{\acknowledgments}
\else
	\ifdefined\CAMREADY
		\subsubsection*{Acknowledgments}
		\acknowledgments
	\fi
\fi

\small{
\ifdefined\ICLR
\bibliographystyle{plainnat}
\fi
\ifdefined\NIPS
\bibliographystyle{plainnat}
\fi
\bibliography{refs.bib}
}

\clearpage
\appendix

\section{Background on Convolutional Arithmetic Circuits}\label{app:convac}

\begin{figure}
\centering
\includegraphics[width=0.9\linewidth]{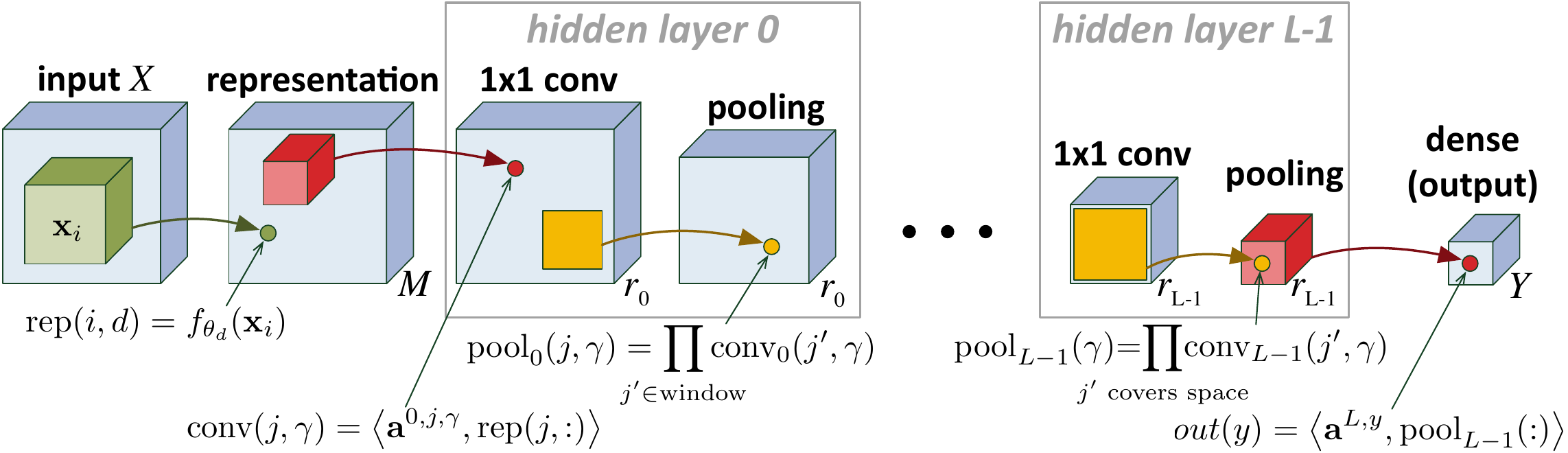}
\caption{The original Convolutional Arithmetic Circuits as presented by \citet{expressive_power}.}
\label{fig:original_convac}
\end{figure}

We base our analysis on the convolutional arithmetic circuit~(ConvAC)
architecture introduced by \citet{expressive_power}, which is illustrated by
fig.~\ref{fig:original_convac}, and can be simply thought of as a regular
ConvNet, but with linear activations and product pooling layers, instead of the
more common non-linear activations (e.g. ReLU) and average/max pooling. More
specifically, each point in the input space of the network, denoted by
$X=(\x_1,\ldots,\x_N)$, is represented as an $N$-length sequence of
$s$-dimensional vectors $\x_1,\ldots,\x_N \in \R^s$. $X$ is typically thought of
as an image, where each $\x_i$ corresponds to a local patches from that image.
The first layer of the network is referred to as the representation layer,
consisting of applying $M$ representation functions
$f_{\theta_1},\ldots,f_{\theta_M}:\R^s \to \R$ on each local patch $\x_i$,
giving rise to $M$ feature maps. Under the common setting, where the
representation functions are selected to be
$f_{\theta_d}(\x) = \sigma(\w_d^T\x + b_d)$ for some point-wise activation
$\sigma(\cdot)$ and parameterized by $\theta_d = (\w_d,b_d) \in \R^s \times \R$,
the representation layer reduces to the standard convolutional layer. Other
possibilities, e.g. gaussian functions with diagonal covariances,
have also been considered in \citet{expressive_power}. Following the
representation layer, are hidden layers indexed by $l=0,\ldots,L-1$, each begins
with a $1\times1$ \emph{conv} operator, which is just an $r_{l-1}\times1\times1$ 
convolutional layer with $r_{l-1}$ input channels and $r_l$ output channels,
with the sole exception that parameters of each kernel could be spatially
unshared  (known as locally-connected layer~\citep{Taigman:2014vs}). Following
each \emph{conv} layer is a spatial pooling, that takes products of
non-overlapping two-dimensional windows covering the output of the previous
layer, where for $l=L-1$ the pooling window is the size of the entire spatial
dimension (i.e. global pooling), reducing its output's shape to a
$r_{L-1}\times 1 \times 1$, i.e. an $r_{L-1}$-dimensional vector. The final $L$
layer maps this vector with a dense linear layer into the $Y$ network outputs,
denoted by $\h_y(\x_1,\ldots,\x_N)$, representing score functions classifying
each $X$ to one of the classes through:
$y^* = \argmax_y \h_y(\x_1,\ldots,\x_N)$. As shown in \citet{expressive_power},
these functions have the following form:
\begin{align}\label{eq:convac}
\h_y(\x_1,\ldots,\x_N) = \sum_{d_1,\ldots,d_N=1}^M \A^y_{d_1,\ldots,d_N} \prod_{i=1}^N f_{\theta_{d_i}}(\x_i)
\end{align}
where $\A^y$, called the \emph{coefficients tensor}, is a tensor of order $N$
and dimension $M$ in each mode, which for the sake of discussion can simply be
seen as a multi-dimensional array, specified by $N$ indices $d_1,\ldots,d_N$
each ranging in $\{1,\ldots,M\}$, with entries given by polynomials in the
network's \emph{conv} weights. A byproduct of eq.~\ref{eq:convac} is that for a
fixed set of $M$ representation functions, all functions represented by ConvACs
lay in the same subspace of functions.

\section{Comparison to Pooling Geometry}\label{app:pooling_geometry}

From theorem \ref{thm:main_overlaps} we learn that overlaps
give rise to networks which almost always cannot be
efficiently implemented by non-overlapping ConvAC with
standard pooling geometry. However, as proven by
\citet{inductive_bias}, a ConvAC that uses a different
pooling geometry \textendash{} i.e. the input to the pooling
layers are not strictly contiguous windows from the previous
layer \textendash{} also cannot be efficiently implemented
by the standard ConvAC with standard pooling geometry. This
raises the question of whether overlapping operations are
simply equivalent to a ConvAC with a different pooling
geometry and nothing more. We answer this question in two
parts. First, a ConvAC with a different pooling geometry
might be able to implement some function more efficiently
than ConvAC with standard pooling geometry, however, the
reverse is also true, that a ConvAC with standard pooling
can implement some functions more efficiently than ConvAC
with alternative pooling. In contrast, a ConvAC that uses
overlaps is still capable to implement efficiently any
function that a non-overlapping ConvAC with standard pooling
can. Second, we can also show that some overlapping
architectures are exponentially efficient than any
non-overlapping ConvAC regardless of its pooling geometry.
This is accomplished by first extending
lemma~\ref{lemma:mat_rank_bound} to this case:
\begin{lemma}\label{lemma:all_mat_ranks_bound}
    Under the same conditions as lemma~\ref{lemma:mat_rank_bound}, if for all
    partitions $P \cupdot Q$ such that $|P| = |Q| = \nicefrac{N}{2}$ it holds
    that $\rank{\mat{\A(\h_y)}_{P,Q}} \geq T$, then any non-overlapping ConvAC
    regardless of its pooling geometry must have at least $T$ channels in its
    next to last layer to induce the same grid tensor.
\end{lemma}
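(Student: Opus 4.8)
The plan is to reduce the statement to Lemma~\ref{lemma:mat_rank_bound} by using the hierarchical, tree-structured form that a non-overlapping ConvAC takes for \emph{any} pooling geometry, and then, out of all balanced partitions guaranteed by the hypothesis, selecting the one that is aligned with the topmost level of that tree. The point is that the bound $D^{(L-1)}$ of Lemma~\ref{lemma:mat_rank_bound} is not really special to the ``left--right''/``top--bottom'' partition: it follows from the fact that the last pooling window sees only the outputs of the next-to-last layer, and this holds no matter how the earlier pooling windows are arranged.

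Concretely, I would first recall the structural description (the same hierarchical-tensor view that underlies Lemma~\ref{lemma:mat_rank_bound} and the analysis of \citet{inductive_bias}): in a non-overlapping ConvAC with pooling geometry $\mathcal{G}$, the last pooling layer merges the $k$ remaining spatial locations (with $k$ even, e.g.\ $k{=}4$ for the $2{\times}2$ windows of the standard ConvAC), whose receptive fields $R_1,\dots,R_k$ form a partition of $[N]$ into $k$ blocks of equal size $N/k$ \textendash{} this equal split is exactly what ``non-overlapping'' provides. Hence, writing $u^{(j)}_d$ for the function computed at the $j$-th such location in channel $d\in[D^{(L-1)}]$ (a function of only the inputs indexed by $R_j$) and $W^{(L)}$ for the final dense layer, we have $\h_y=\sum_{d=1}^{D^{(L-1)}}W^{(L)}_{y,d}\prod_{j=1}^k u^{(j)}_d$, and therefore at the level of grid tensors
\begin{equation*}
\A(\h_y)\;=\;\sum_{d=1}^{D^{(L-1)}}W^{(L)}_{y,d}\;\A(u^{(1)}_d)\otimes\cdots\otimes\A(u^{(k)}_d),
\end{equation*}
an outer product over the disjoint index sets $R_1,\dots,R_k$.

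Next, since $k$ is even and the blocks all share the cardinality $N/k$, I pick any $\mathcal{J}\subset[k]$ with $|\mathcal{J}|=k/2$ and set $P=\bigcup_{j\in\mathcal{J}}R_j$, $Q=\bigcup_{j\notin\mathcal{J}}R_j$, so that $|P|=|Q|=N/2$ and $(P,Q)$ is one of the balanced partitions covered by the hypothesis. Because $P$ and $Q$ are each unions of \emph{whole} blocks, matricizing the displayed tensor with respect to $(P,Q)$ makes every summand rank one: up to a fixed permutation of rows and columns (which does not change rank and is the same for all $d$), $\mat{\A(u^{(1)}_d)\otimes\cdots\otimes\A(u^{(k)}_d)}_{P,Q}$ is the outer product of $\mathrm{vec}\big(\bigotimes_{j\in\mathcal{J}}\A(u^{(j)}_d)\big)$ with $\mathrm{vec}\big(\bigotimes_{j\notin\mathcal{J}}\A(u^{(j)}_d)\big)$. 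Thus $\mat{\A(\h_y)}_{P,Q}$ is a sum of $D^{(L-1)}$ rank-one matrices, so $\rank{\mat{\A(\h_y)}_{P,Q}}\le D^{(L-1)}$; combined with the assumption that this rank is at least $T$ for \emph{every} balanced partition, we obtain $D^{(L-1)}\ge T$, which is the claim.

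The main obstacle is not the counting above but justifying the structural step for genuinely arbitrary pooling geometries \textendash{} that the top pooling window always induces an equal-size partition of $[N]$ and that the output factorizes as the displayed outer-product sum. This is precisely the hierarchical-tensor description of non-overlapping ConvACs invoked in the proof of Lemma~\ref{lemma:mat_rank_bound}, only now applied at the single topmost level of the tree rather than recursed all the way down; once that description is in hand, the remaining combinatorial and linear-algebraic steps (existence of a balanced coarsening of the top partition, and the rank of a permuted Kronecker product) are routine.
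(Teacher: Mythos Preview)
Your proposal is correct and takes essentially the same approach as the paper: the paper's own proof of this lemma simply defers to \citet{inductive_bias} for the fact that \emph{each} pooling geometry admits a corresponding balanced partition $(P,Q)$ for which $\rank{\mat{\A^y}_{P,Q}}\le D^{(L-1)}$, and then invokes the grid-tensor/coefficients-tensor rank equivalence via non-singular $F$; you have unpacked that first citation by exhibiting the top-level decomposition $\A(\h_y)=\sum_d W^{(L)}_{y,d}\bigotimes_j\A(u^{(j)}_d)$ and choosing $(P,Q)$ as a union of whole top-level blocks, which is exactly the mechanism behind the cited result.
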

Next, in theorem \ref{thm:overlapping_overload} below show that some overlapping
architectures can induce grid tensors whose matricized rank is exponential for
any equal partition of its indices, proving they are indeed exponentially
more efficient:
\begin{theorem}\label{thm:overlapping_overload}
    Under the same settings as
    theorem~\ref{thm:main_overlaps}, consider a GC network
    whose representation layer is followed by a GC layer
    with local receptive field $H \times H$, stride
    $1 \times 1$, and $D \geq M$ output channels, whose
    parameters are ``unshared'', i.e. unique to each spatial
    location in the output of the layer as opposed to shared
    across them, followed by $(L-1)$ arbitrary GC layers,
    whose final output is a scalar. For any choice of the
    parameters, except a null set (with respect to the
    Lebesgue measure) and for any template vectors such that
    $F$ is non-singular, then the matricized rank of the
    induced grid tensor is equal to $M^{\frac{H^2}{2}}$,
    for any equal partition of the indices. The exact same
    result holds if the parameters of the first GC layers
    are ``shared'' and $D \geq M \cdot H^2$.
\end{theorem}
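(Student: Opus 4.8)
The plan is to prove the statement in two halves. The bound $\rank{\mat{\A(\h_y)}_{P,Q}} \le M^{H^2/2}$ is immediate: with $N=H^2$ template coordinates and a balanced partition, $\mat{\A(\h_y)}_{P,Q}$ is an $M^{H^2/2}\times M^{H^2/2}$ matrix. For the matching lower bound I would argue, exactly as in the proof sketch of theorem~\ref{thm:main_overlaps}, that the entries of $\mat{\A(\h_y)}_{P,Q}$ are polynomials in the network parameters, so by the lemma of \citet{tmm} it suffices to exhibit, for each fixed balanced partition $(P,Q)$, a single parameter setting for which $\rank{\mat{\A(\h_y)}_{P,Q}} = M^{H^2/2}$; the set of parameters attaining full rank for that $(P,Q)$ is then the complement of a Lebesgue-null set, and since there are only finitely many balanced partitions the intersection of these sets is still co-null. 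Thus the whole theorem reduces to: for every balanced $(P,Q)$, produce an explicit parameter choice whose induced grid tensor has full $(P,Q)$-matricization rank. This is essentially a cleaner instance of the construction behind theorem~\ref{thm:main_overlaps}, where now the first layer already spans the whole input.

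For the construction I would exploit that the first GC layer has receptive field $H\times H$ (the entire representation map) and stride $1\times 1$, hence $H^2$ output locations, where the location at $(p,q)$ depends exactly on the input positions that are $\ge(p,q)$ coordinate-wise and, in the ``unshared'' case, can be programmed independently of the others. Fixing $(P,Q)$, I would pick a perfect matching $\mu\colon P\to Q$ and assign to each matched pair $\{u,w\}$ a distinct output location whose window contains both $u$ and $w$. At that location I use $M\le D$ of the channels together with the freedom granted by $F$ being non-singular — any $1\times1$-affine factor, evaluated on the template vectors, can be made an arbitrary function of the template index, and every window position other than $u,w$ can be neutralized to the constant $1$ — so that channel $c$ computes $\indc{d_u=c}\,\indc{d_w=c}$; summing these $M$ channels, which the $1\times1$ part of the following layer or the final dense layer can do, yields the ``diagonal'' gadget $\indc{d_u=d_w}$. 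Unused output locations are set to output the constant $1$, and every subsequent GC layer is set to product pooling with identity-like $1\times1$ parts; since indicators are idempotent, any residual overlap among those later layers is harmless, in the spirit of proposition~\ref{prop:nothing_to_lose}. The resulting grid tensor is $\A(\h_y)_{d_1,\dots,d_N}=\prod_k\indc{d_{\pi_k}=d_{\rho_k}}$, and ordering the rows of $\mat{\A(\h_y)}_{P,Q}$ by $(d_{\pi_k})_k$ and the columns by $(d_{\rho_k})_k$ turns it into a permuted $M^{H^2/2}\times M^{H^2/2}$ identity matrix, which has full rank; combined with lemma~\ref{lemma:all_mat_ranks_bound} this gives the stated consequence for non-overlapping ConvACs of arbitrary pooling geometry.

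The step I expect to be the main obstacle is the combinatorial one: showing that for \emph{every} balanced partition one can choose the matching $\mu$ together with an injective assignment of pairs to output locations respecting window containment. The constraint is delicate because a pair $\{u,w\}$ can be placed only at locations $(p,q)$ that are coordinate-wise below both $u$ and $w$, so partitions that crowd many indices near the ``lower-left corner'' — the left--right split, or splits containing a whole boundary row or column — are the dangerous cases; the plan is to match boundary positions to boundary positions whenever possible (as in the left--right split, where the natural pairing $(a,b)\leftrightarrow(a,b+H/2)$ admits the assignment $(a,b)\mapsto(a,b)$) and to interior positions otherwise, and then verify Hall's condition for the induced bipartite placement problem. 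For the ``shared'' case ($D\ge M H^2$) the output locations can no longer be programmed separately, so instead I would reserve $H^2$ blocks of $M$ channels, block $j$ realizing a fixed spatial offset pair $(o_j,o_j')$, so that any output location $(p,q)$ can be driven to contribute the shifted diagonal gadget $\indc{d_{(p,q)+o_j}=d_{(p,q)+o_j'}}$ corresponding to whichever block is read out downstream (offsets landing outside the representation map being neutralized to $1$); the rest of the argument, including the same scheduling problem, then goes through verbatim. Intersecting the finitely many generic full-rank conditions completes the proof.
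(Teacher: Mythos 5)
Your overall route is the same as the paper's: the upper bound is the trivial dimension count, and for the lower bound you fix a balanced partition, exhibit one parameter assignment whose grid tensor is a product of per-pair ``diagonal'' gadgets $\indc{d_u=d_w}$ built from columns of $F^{-1}$ (so that the matricization is a permuted identity of size $M^{\nicefrac{H^2}{2}}$), invoke lemma~\ref{lemma:rank_everywhere} for that partition, and finish by intersecting the finitely many co-null parameter sets; this is exactly the structure of claim~\ref{claim:overlapping_overload:existence} and the proof of theorem~\ref{thm:overlapping_overload}. The genuine gap is the step you yourself flag as ``the main obstacle'' and then leave as a plan rather than a proof: that for \emph{every} balanced partition there exist a perfect matching of $P$ with $Q$ \emph{and} an injective assignment of the $\nicefrac{H^2}{2}$ pairs to first-layer output locations such that each assigned location is coordinate-wise below both members of its pair (windows anchored at $(u,v)$ reach only positions coordinate-wise $\geq (u,v)$, since the padding is one-sided). ``Verify Hall's condition'' is not an argument: here Hall's condition says that for every downward-closed (staircase) set $A$ of locations the number of pairs whose coordinate-wise meet lies in $A$ is at most $|A|$, and you must show some matching satisfies this simultaneously for all such $A$, for an arbitrary adversarial partition -- precisely the partitions crowding both sides into the top row and left column that you identify as ``dangerous''. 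Without this, your construction only proves the bound for the partitions you can handle by hand, which is strictly weaker than the theorem.

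For comparison, the paper closes this step constructively rather than via Hall: it sorts $I$ and $J$ lexicographically, pairs the $k$-th elements, and anchors the $k$-th gadget at the lexicographically smaller element $q_k$ of the pair with kernel offset $p_k-q_k+(1,1)$; injectivity of the anchors is then automatic because each anchor belongs to its own pair. Note, however, that the containment needed is coordinate-wise while lexicographic order only controls the row coordinate, so even with this pairing the column component of the offset requires an argument (or a modified pairing/anchoring) -- which is exactly why your Hall-type step cannot be left as a promissory note; it is the mathematical crux of the theorem, not a routine verification. A secondary gap is the ``shared'' case: your plan of reserving $H^2$ blocks of $M$ channels and letting downstream layers ``read out whichever block'' is location-dependent, hence itself needs unshared downstream parameters or a more explicit design, whereas the paper encodes the spatial location directly into the channel index of the first shared layer (using $D\geq MH^2$) and keeps the remainder of the construction unchanged.
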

\begin{proof}[Proof sketch]
    We follow the same steps of our proof of theorem~\ref{thm:main_overlaps},
    however, we do not construct just one specific overlapping network that
    attains a rank of $D \geq M \cdot H^2$, for all possible matricizations of
    the induced grid tensor. Instead, we construct a separate network for each
    possible matricization. This proves that with respect to the Lebesgue
    measure over the network's parameters space, separately for each pooling
    geometry, the set of parameters for which the lower bound does not hold is
    of measure zero. Since a finite union of zero measured sets is also of
    measure zero, then the lower bound with respect to all possible pooling
    geometries holds almost everywhere, which concludes the proof sketch. See
    app.~\ref{app:proofs:overlapping_overload} for a complete proof.
\end{proof}
It is important to note that though the above theorem shows that pooling
geometry on its own is less expressive than overlapping networks with standard
pooling, it does not mean that pooling geometry is irrelevant. Specifically, we
do not yet know the effect of combining both overlaps and alternative pooling
geometries together. Additionally, many times sufficient expressivity is not the
main obstacle for solving a specific task, and the inductive bias induced by a
carefully chosen pooling geometry could help reduce overfitting.

\section{Deferred Proofs}
\label{app:proofs}

In this section we present our proofs for the theorems and claims stated in the
body of the article.

\subsection{Preliminaries}
\label{app:proofs:preliminaries}

In this section we lay out the preliminaries required to understand the proofs
in the following sections. We begin with a limited introduction to tensor
analysis, followed by quoting a few relevant known results relating tensors to
ConvACs.

We begin with basic definitions and operations relating to tensors. Let
$\A \in \R^{M_1 \otimes \cdots \otimes M_N}$ be a tensor of order $N$ and
dimension $M_i$ in each mode $i \in [N]$ (where $[N]=\{1,\ldots,N\}$), i.e.
$\A_{d_1,\ldots,d_N} \in \R$ for all $i \in [N]$ and $d_i \in [M_i]$. For
tensors $\A^{(1)}$ and $\A^{(2)}$ of orders $N^{(1)}$ and $N^{(2)}$, and
dimensions $M^{(1)}_{i_1}$ and $M^{(2)}_{i_2}$ in each of the modes
$i_1 \in [N^{(1)}]$ and $i_2 \in [N^{(2)}]$, respectively, we define their
tensor product $\A^{(1)} \otimes \A^{(2)}$ as the order $N^{(1)} + N^{(2)}$
tensor, where
\begin{align*}
    \left(\A^{(1)} \otimes \A^{(2)}\right)_{d_1,\ldots,d_{N^{(1)}+N^{(2)}}}
    = \A^{(1)}_{d_1,\ldots,d_{N^{(1)}}} \cdot \A^{(2)}_{d_{N^{(1)} + 1},\ldots,d_{N^{(1)} + N^{(2)}}} 
\end{align*}
For a set of vectors $\vv^{(1)}\in\R^{M_1},\ldots,\vv^{(N)}\in\R^{M_N}$, the $N$
ordered tensor $\A = \vv^{(1)}{\otimes}{\cdots}{\otimes}\vv^{(N)}$ is called an
\emph{elementary tensor}, or \emph{rank-1 tensor}. More generally,
any tensor can be represented as a linear combination of rank-1 tensors, i.e.
$\A = \sum_{z=1}^Z \vv^{(Z, 1)}{\otimes}{\cdots}{\otimes}\vv^{(Z, 1)}$, known as 
\emph{rank-1 decomposition}, or CP decomposition, where the minimal $Z$ for
which this equality holds is knows as the \emph{tensor rank} of $\A$. Given a
set of matrices
$F^{(1)}{\in}\R^{M'_1 \times M_1},\ldots,F^{(N)}{\in}\R^{M'_N \times M_N}$, we
denote by $\mathbf{F} = (F^{(1)}{\otimes}{\cdots}{\otimes}F^{(N)})$ the linear
transformation from $\R^{M_1{\otimes}{\cdots}{\otimes}M_N}$ to
$\R^{M'_1{\otimes}{\cdots}{\otimes}M'_N}$, such that for any elementary tensor
$\A$, with notations as above, it holds that:
\begin{align*}
    \mathbf{F}(\A)=F^{(1)}(\vv^{(1)}) \otimes \cdots \otimes F^{(N)}(\vv^{(N)})
\end{align*}
$\mathbf{F}(\A)$ is defined for a general tensor $\A$ through its rank-1
decomposition comprising elementary tensors and applying $\mathbf{F}$ on each of
them, which can be shown to be equivalent to 
\begin{align}\label{eq:kronecker_product_operator}
    \mathbf{F}(\A)_{k_1,\ldots,k_N} = \sum_{d_1=1}^{M_1}\cdots\sum_{d_N=1}^{M_N} \A_{d_1,\ldots,d_N} \prod_{i=1}^N F^{(i)}_{k_i,d_i}
\end{align}
A central concept in tensor analysis is that of \emph{tensor matricization}. Let
$P \cupdot Q=[N]$ be a disjoint partition of its indices, such that
$P = \{p_1, \ldots, p_{|P|}\}$ with $p_1 < \ldots < p_{|P|}$, and
$Q = \{q_1, \ldots, q_{|Q|}\}$ with $q_1 < \ldots < q_{|Q|}$. The matricization
of $\A$ with respect to the partition $P \cupdot Q$, denoted by
$\mat{\A}_{P,Q}$, is the
$\left(\prod_{t=1}^{|P|} M_{p_t}\right)$-by-$\left(\prod_{t=1}^{|Q|} M_{q_t}\right)$
matrix holding the entries of $\A$, such that for all $i \in [N]$ and
$d_i \in [M_i]$ the entry $A_{d_1, \ldots, d_N}$ is placed in row index
$1 + \sum_{t=1}^{|P|} (d_{p_t} - 1) \prod_{t'=t+1}^{|P|} M_{p_{t'}}$ and column
index
$1 + \sum_{t=1}^{|Q|} (d_{q_t} - 1) \prod_{t'=t+1}^{|Q|} M_{q_{t'}}$. Applying
the matricization operator $\mat{\cdot}_{P,Q}$ on the tensor product operator
results in the \emph{Kronecker Product}, i.e. for an $N$-ordered tensor $\A$, a
$K$-ordered tensor $\B$, and the partition $P\cupdot Q = [N+K]$, it holds that
\begin{align*}
    \mat{\A \otimes \B}_{P,Q} = \mat{\A}_{P\cap[N],Q\cap[N]} \odot \mat{\B}_{(P-N) \cap [K], (Q-N) \cap [K]}
\end{align*}
where $P-N$ and $Q-N$ are simply the sets obtained by subtracting the number $N$ 
from every element of $P$ or $Q$, respectively. In concrete terms, the Kronecker
product for the matrices $A \in \R^{M_1\times M_2}$ and $B\in\R^{N_1\times N_2}$
results in the matrix $A \odot B \in \R^{M_1 N_1 \times M_2 N_2}$ holding
$A_{ij}B_{kl}$ in row index $(i-1)N_1 + k$ and column index $(j-1)N_2 + l$. An
important property of the Kronecker product is that
$\rank{A \odot B} = \rank{A}\cdot \rank{B}$. Typically, when wish to compute
$\rank{\mat{\A}_{P,Q}}$, we will first decompose it to a Kronecker product of
matrices.

For a linear transform $\mathbf{F}$, as defined above in
eq.\ref{eq:kronecker_product_operator}, and a partition $P \cupdot Q$, if
$F^{(1)},\ldots,F^{(N)}$ are non-singular matrices, then $\mathbf{F}$ is
invertible and the matrix rank of $\mat{\A}_{P,Q}$ equals to the matrix rank of
$\mat{\mathbf{F}(\A)}_{P,Q}$ (see proof in \citet{Hackbusch-book}). Finally, we
define the concept of \emph{grid tensors}: for a function\\
${f:\R^s \times \cdots \times \R^s \to \R}$ and a set of \emph{template vectors} 
$\x^{(1)},\ldots,\x^{(M)} \in \R^s$, the $N$-order grid tensor $\A(f)$ is
defined by
$\left(\A(f)\right)_{d_1,\ldots,d_N} = f(\x^{(d_1)},\ldots,\x^{(d_N)})$.

In the context of ConvACs, circuits and the functions they can realize are
typically examined through the matricization of the grid tensors they induce.
The following is a succinct summary of the relevant known results used in our
proofs~--~for a more detailed discussion, see previous works
\citep{expressive_power,generalized_decomp,inductive_bias}. Using the same
notations from eq.~\ref{eq:convac} describing a general ConvAC, let $\A^y$ be
the coefficients tensor of order $N$ and dimension $M$ in each mode, and let
$f_{\theta_1},\ldots,f_{\theta_M}{:}\R^s{\to}\R$ be a set of $M$ representation
functions (see app.~\ref{app:convac}). Under the above definitions, a
non-overlapping ConvAC can be said to decompose the coefficients tensor $\A^y$.
Different network architectures correspond to known tensor decompositions:
shallow networks corresponds to rank-1 decompositions, and deep networks
corresponds to Hierarchical Tucker decompositions.

In \citet{inductive_bias}, it was found that the matrix rank of the
matricization of the coefficients tensors $\A^y$ could serve as a bound for the
size of networks decomposing $\A^y$. For the conventional non-overlapping ConvAC
and the contiguous ``low-high'' partition
$P = \{1,\ldots,\nicefrac{N}{2}\}, Q=\{\nicefrac{N}{2} + 1, \ldots, N\}$ of
$[N]$, the rank of the matricization $\mat{\A^y}_{P,Q}$ serves as a lower-bound
on the number of channels of the next to last layer of any network which
decomposes the coefficients tensor $\A$. In the common case of square inputs,
i.e. the input is of shape $H \times H$ and $N = H^2$, it is more natural to
represent indices by pairs $(j,i)$ denoting the spatial location of each
``patch'' $\x_{(j,i)}$, where the first argument denotes the vertical location
and the second denotes the horizontal location. Under such setting the
equivalent ``low-high'' partitions are either the ``left-right'' partition, i.e.
$P=\{(j,i)| j \leq \frac{H}{2}\}, Q=\{(j,i)| j > \frac{H}{2}\}$, or the
``top-bottom'' partition,  i.e.
$P=\{(j,i)| i \leq \frac{H}{2}\}, Q=\{(j,i)| i > \frac{H}{2}\}$. More generally, 
when considering networks using other \emph{pooling geometries}, i.e. not
strictly contiguous pooling windows, then for each pooling geometry there exists
a corresponding partition $P \cupdot Q$ such that $\rank{\mat{\A^y}_{P,Q}}$
serves as its respective lower-bound.

Though the results in \citet{inductive_bias} are strictly based on the
matricization rank of the coefficients tensors, they can be transferred to the
matricization rank of grid tensors as well. Grid tensors were first considered
for analyzing ConvACs in \citet{generalized_decomp}. For a set of $M$ template
vectors $\x^{(1)},\ldots,\x^{(M)}\in\R^s$, we define the matrix
$F \in \R^{M \times M}$ by $F_{ij} = f_{\theta_j}(\x^{(i)})$. With the above
notations in place, we can write the the grid tensor $\A(\h_y)$ for the function
$\h_y(\x_1,\ldots,\x_N)$ as:
\begin{align*}
    \A(\h_y)_{k_1,\ldots,k_N} &= \h_y(\x^{(k_1)},\ldots,\x^{(k_N)})\\
    &= \sum_{d_1,\ldots,d_N=1}^M \A^y_{d_1,\ldots,d_N}
        \prod_{i=1}^N f_{\theta_{d_i}}(\x^{(k_i)})\\
    &= \sum_{d_1,\ldots,d_N=1}^M \A^y_{d_1,\ldots,d_N}
        \prod_{i=1}^N F_{k_i,d_i} \\
    \Rightarrow \A(\h_y) &= (F \otimes \cdots \otimes F)(\A^y)
\end{align*}
If the representation functions are linearly independent and continuous, then we
can choose the template vectors such that $F$ is non-singular (see
\citet{generalized_decomp}), which according to the previous discussion on
tensor matricization, means that for any partition $P \cupdot Q$ and any
coefficients tensor $\A^y$, it holds that
$\rank{\mat{\A^y}_{P,Q}} = \rank{\mat{\A(\h_y)}_{P,Q}}$. Thus, any lower bound
on the matricization rank of the grid tensor translates to a lower bound on the
matricization rank of the coefficients tensors, which in turn serves as lower
bound on the size of non-overlapping ConvACs. The above discussion leads to the
proof of lemma~\ref{lemma:mat_rank_bound} and
lemma~\ref{lemma:all_mat_ranks_bound} that were previously stated:
\begin{proof}[Proof of lemma~\ref{lemma:mat_rank_bound} and
              lemma~\ref{lemma:all_mat_ranks_bound}]
    For the proofs of the base results with respect to the coefficients tensor,
    see \citet{inductive_bias}. To prove it is possible to choose the template
    vectors such that $F$ is non-singular, see \citet{generalized_decomp}. To
    prove that if $F$ is non-singular, then the grid tensor and the coefficients
    tensor have the same matricization rank, see lemma 5.6 in
    \citet{Hackbusch-book}.
\end{proof}

We additionally quote the following lemma regarding the prevalence of the
maximal matrix rank for matrices whose entries are polynomial functions:
\begin{lemma}\label{lemma:rank_everywhere}
    Let $M, N, K \in \N$, $1 \leq r \leq \min\{M,N\}$ and a polynomial mapping
    $A:\R^K \to \R^{M \times N}$, i.e. for every $i \in [M]$ and $j\in [N]$ it
    holds that $A_{ij}:\R^k \to \R$ is a polynomial function. If there exists a
    point $\x \in \R^K$ such that $\rank{A(\x)} \geq r$, then the set
    $\{\x \in \R^K | \textrm{rank}{A(\x)} < r\}$ has zero measure (with respect
    to the Lebesgue measure over $\R^K$).
\end{lemma}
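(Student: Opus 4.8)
The plan is to reduce the statement to the elementary fact that the zero set of a nonzero polynomial on $\R^K$ is Lebesgue-null. First I would recall the standard characterization of rank via minors: for any $B \in \R^{M \times N}$ one has $\rank{B} \geq r$ if and only if there exist row indices $I \subseteq [M]$ and column indices $J \subseteq [N]$ with $|I| = |J| = r$ such that the $r \times r$ minor $\det B_{I,J}$ is nonzero. Since each entry $A_{ij}(\x)$ is a polynomial in $\x \in \R^K$ and the determinant of an $r \times r$ matrix is a polynomial in its entries, each map $p_{I,J}(\x) \equiv \det A(\x)_{I,J}$ is itself a polynomial function on $\R^K$.

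Next I would use the witness point. By hypothesis there is $\x_0$ with $\rank{A(\x_0)} \geq r$, so by the characterization above at least one minor, say $p_{I_0,J_0}$, satisfies $p_{I_0,J_0}(\x_0) \neq 0$; in particular $p_{I_0,J_0}$ is not the identically-zero polynomial. Conversely, whenever $\rank{A(\x)} < r$ every $r \times r$ minor of $A(\x)$ vanishes, so in particular $p_{I_0,J_0}(\x) = 0$. This yields the inclusion
\[
    \{\x \in \R^K : \rank{A(\x)} < r\} \subseteq \{\x \in \R^K : p_{I_0,J_0}(\x) = 0\},
\]
so it suffices to show the right-hand side is Lebesgue-null.

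The remaining (and essentially only) content is the claim that the zero set $Z(p)$ of a nonzero polynomial $p$ on $\R^K$ has Lebesgue measure zero, which I would prove by induction on $K$ via Fubini's theorem. For $K = 1$, a nonzero univariate polynomial has finitely many roots, so $Z(p)$ is finite and hence null. For $K > 1$, write $p(\x) = \sum_{j=0}^{d} q_j(x_1,\ldots,x_{K-1})\,x_K^{\,j}$ with some $q_{j^*} \not\equiv 0$; by the inductive hypothesis the set $E \subseteq \R^{K-1}$ on which $q_{j^*}$ vanishes is null, and for each $(x_1,\ldots,x_{K-1}) \notin E$ the univariate polynomial $t \mapsto p(x_1,\ldots,x_{K-1},t)$ is nonzero, hence has a finite (null) root set in the $x_K$-coordinate. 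Fubini's theorem then gives $\mathrm{Leb}(Z(p)) = 0$, and monotonicity of measure applied to the inclusion above completes the argument. I do not anticipate any genuine obstacle here: all the steps are routine, with the polynomial-zero-set lemma being the only place requiring an actual (short, standard) argument.
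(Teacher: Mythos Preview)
Your argument is correct and complete. The paper itself does not give a proof of this lemma at all: its ``proof'' is a one-line citation to an earlier work (\citet{tmm}), so there is no approach in the paper to compare against. Your route---reduce to a single nonzero $r\times r$ minor $p_{I_0,J_0}$ using the witness point, then invoke the standard induction-plus-Fubini argument that the zero set of a nonzero real polynomial is Lebesgue-null---is exactly the canonical proof of this fact, and is fully self-contained where the paper is not.
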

\begin{proof}
    See \citet{tmm}.
\end{proof}

Finally, we simplify the notations of the GC layer with product pooling function
for the benefit of following our proofs below. We will represent the parameters
of the $l$'th GC layer by $\{(\w^{(l,c)} \in \R^{D^{(l-1)} \times R^{(l)}
\times R^{(l)}},\bb^{(l,c)}\in \R^{R^{(l)} \times R^{(l)}})\}_{c\in[D^{(l)}]}$,
where $\w^{(l,c)}$ represents the weights and $\bb^{(l,c)}$ the biases. Let
$X \in \R^{H^{(in)} \times H^{(in)} \times D^{(in)}}$ be the input to the layer
and $Y \in \R^{D^{(out)} \times H^{(out)} \times H^{(out)}}$ be the output, then
the following equality holds:
\begin{equation}
    Y_{c,u,v} = \prod_{j=1}^{R^{(l)}} \prod_{i=1}^{R^{(l)}}
        \left(b_{ji}^{(l,c)} +
        \sum_{d=1}^{D^{(in)}} w_{dji}^{(l,c)} X_{d,u S^{(l)} + j, v S^{(l)} + i} \right)
\end{equation}
The above treats the common case where the parameters are shared across all
spatial locations, but sometimes we wish to consider the ``unshared'' case, in
which there is are different weights and biases for each location, which we
denote by $\{(\w^{(l,c,u,v)} \in \R^{D^{(l-1)} \times R^{(l)}
\times R^{(l)}},\bb^{(l,c,u,v)}\in \R^{R^{(l)} \times R^{(l)}})\}_{c\in[D^{(l)}], u \in [H^{(out)}], v \in [H^{(out)}]}$.

With the above definitions and lemmas in place, we are ready to prove the
propositions and theorems from the body of the article.

\subsection{Proof of Proposition~\ref{prop:nothing_to_lose}}
\label{app:proofs:nothing_to_lose}

Proposition~\ref{prop:nothing_to_lose} is a direct corollary of the following
two claims:
\begin{claim} \label{claim:any_window}
    Let $f{:}\R^{D^{(in)} \times H^{(in)} \times H^{(in)}} \to
    \R^{D^{(out)} \times H^{(out)} \times H^{(out)}}$
    be a function realized by a single GC layer with $R \times R$ local
    receptive field, $S \times S$ stride, and $D^{(out)}$ output channels, that
    is parameterized by $\{ (\w^{(c)}, \bb^{(c)})\}_{c=1}^{D^{(out)}}$. For all
    $\tilde{R}\geq R$, a GC layer with $\tilde{R}{\times}\tilde{R}$ local
    receptive field, $S{\times}S$ stride, and $C$ output channels, parameterized
    by $\{ (\tilde{\w}^{(c)}, \tilde{\bb}^{(c)})\}_{c=1}^{D^{(out)}}$, could
    also realize $f$. The same is true for the unshared case of both layers.
\end{claim}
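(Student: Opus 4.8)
The plan is to exhibit the parameters $\{(\tilde{\w}^{(c)},\tilde{\bb}^{(c)})\}_{c=1}^{D^{(out)}}$ of the larger layer explicitly and verify, entry by entry, that the induced function coincides with $f$. Recall that for a product-pooling GC layer the output at spatial location $(u,v)$ and channel $c$ is
\begin{equation*}
    Y_{c,u,v} = \prod_{j=1}^{R}\prod_{i=1}^{R}\left(b^{(c)}_{ji} + \sum_{d=1}^{D^{(in)}} w^{(c)}_{dji}\, X_{d,\,uS+j,\,vS+i}\right),
\end{equation*}
so the key observation is simply that appending factors equal to $1$ to this product leaves it unchanged.

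Concretely, I would set, for every channel $c \in [D^{(out)}]$ and every $d \in [D^{(in)}]$,
\begin{equation*}
    \tilde{w}^{(c)}_{dji} = \begin{cases} w^{(c)}_{dji} & j,i \in [R] \\ 0 & \text{otherwise}\end{cases}, \qquad
    \tilde{b}^{(c)}_{ji} = \begin{cases} b^{(c)}_{ji} & j,i \in [R] \\ 1 & \text{otherwise.}\end{cases}
\end{equation*}
Since both layers share the same stride $S$, for any output location $(u,v)$ the enlarged $\tilde{R}{\times}\tilde{R}$ window slice has the same top-left corner $(uS,vS)$ as the original $R{\times}R$ slice. Writing out the $\tilde{R}{\times}\tilde{R}$ product defining $\tilde{Y}_{c,u,v}$ and splitting the index pairs $(j,i)$ into the sub-block $j,i\in[R]$ and the remaining pairs, the sub-block reproduces $Y_{c,u,v}$ verbatim, while each remaining factor equals $\tilde{b}^{(c)}_{ji} + \sum_{d} \tilde{w}^{(c)}_{dji}\,X_{d,uS+j,vS+i} = 1 + 0 = 1$. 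This holds even when $(uS+j,\,vS+i)$ falls outside the spatial extent of $X$, because the model's convention sets those input entries to zero and we have placed a bias of $1$ on exactly those positions. Hence $\tilde{Y}_{c,u,v}=Y_{c,u,v}$ for all $c,u,v$, so the two layers realize the same $f$.

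The ``unshared'' case is identical: one applies the same zero/one padding rule independently at each spatial location, defining $\tilde{\w}^{(c,u,v)},\tilde{\bb}^{(c,u,v)}$ from $\w^{(c,u,v)},\bb^{(c,u,v)}$, and the per-location computation above goes through unchanged. There is no real obstacle in this argument; the only point requiring care is the boundary-padding convention, which is precisely what guarantees that the extra factors collapse to $1$ no matter how far the enlarged receptive field reaches.
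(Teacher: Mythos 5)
Your construction is exactly the paper's: copy the original weights and biases on the $R{\times}R$ sub-block, pad the weights with zeros and the biases with ones elsewhere, so every extra factor in the product pooling equals $1$. Your additional verification of the boundary zero-padding case is a welcome bit of extra care, but the argument is the same.
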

\begin{proof}
    The claim is trivially satisfied by setting $\tilde{\w}^{(c)}$ such that it
    is equal to $\w^{(c)}$ in all matching coordinates, while using zeros for
    all other coordinates. Similarly, we set $\tilde{\bb}^{(c)}$ to be equal to
    $\bb^{(c)}$ in all matching coordinates, while using ones for all other
    coordinates.
\end{proof}
\begin{claim} \label{claim:identity}
    Let $f{:}\R^{D^{(in)} \times H^{(in)} \times H^{(in)}} \to
    \R^{D^{(out)} \times H^{(out)} \times H^{(out)}}$ be a function realized by
    a GC layer with $R{\times}R$ local receptive field and $1{\times}1$ stride,
    parameterized by $\{ (\w^{(c)}, \bb^{(c)})\}_{c=1}^{D^{(out)}}$. Then there
    exists an assignment to $(\w,\bb)$ such that $f$ is the identity function
    $f(X) = X$. The same is true for the unshared case of both layers.
\end{claim}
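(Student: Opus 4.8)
The plan is to exhibit an explicit choice of weights $\w$ and biases $\bb$ under which the product-pooling GC layer of the statement degenerates into a pure copy operation. Recall that for a GC layer with $1{\times}1$ stride and product pooling the output is
\[
    Y_{c,u,v} = \prod_{j=1}^{R} \prod_{i=1}^{R}
        \Bigl( b_{ji}^{(c)} + \sum_{d=1}^{D^{(in)}} w_{dji}^{(c)} X_{d, u + j, v + i} \Bigr),
\]
with out-of-bounds entries of $X$ read as zero, so every output channel is a product, over the $R^2$ window offsets, of affine functions of the corresponding window of $X$. Since the identity map is only meaningful when $D^{(out)} = D^{(in)}$, I would assume this equality. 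The one geometric ingredient is to fix a single window offset $(j_0,i_0)$ that aligns the window with the output pixel, i.e. for which the entry $X_{\cdot,\,u+j_0,\,v+i_0}$ selected by that offset is precisely $X_{\cdot,u,v}$ for every valid output location $(u,v)$; because the stride is $1{\times}1$ and $R\ge 1$, such an offset exists under the layer's padding convention and it is always in bounds.

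First I would set, for every output channel $c$, the weights at the distinguished offset to act as the identity on channels, $w^{(c)}_{d j_0 i_0} = \indc{d=c}$ with bias $b^{(c)}_{j_0 i_0} = 0$; and at every other offset $(j,i)\neq(j_0,i_0)$ I would zero out all the weights, $w^{(c)}_{dji} = 0$, and set the bias to one, $b^{(c)}_{ji} = 1$. Substituting these values into the displayed formula, the factor contributed by $(j_0,i_0)$ equals $\sum_{d} \indc{d=c}\, X_{d,u,v} = X_{c,u,v}$, while every other factor equals $b^{(c)}_{ji} + 0 = 1$; hence the whole product collapses to $Y_{c,u,v} = X_{c,u,v}$, i.e. $f(X)=X$. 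For the ``unshared'' variant I would use exactly the same assignment at every spatial location — the construction never used weight sharing — so that case follows verbatim.

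I do not expect any real obstacle here; the only point that warrants a sentence is the boundary. When $(u,v)$ lies near the edge of the input, some offsets $(j,i)$ address zero-padded, out-of-bounds entries of $X$, but since the weights at those offsets are zero the corresponding factor is $b^{(c)}_{ji} + \sum_d w^{(c)}_{dji}\cdot 0 = 1$ regardless of the padded values, so those offsets contribute neutrally to the product. The distinguished offset $(j_0,i_0)$, by construction, always reads an in-bounds entry of $X$ for a legitimate output location, so $X_{c,u,v}$ is genuinely recovered, completing the argument. (If the adopted indexing convention happened to force a fixed spatial shift, the same construction with a correspondingly shifted offset absorbs it, but with the usual ``same'' padding no shift arises.)
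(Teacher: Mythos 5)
Your proof is correct and takes essentially the same route as the paper: the paper reduces to the $R=1$ case via claim~\ref{claim:any_window} (zero weights and unit biases at all offsets outside a $1{\times}1$ window) and then sets $w^{(c)}_{d}=\indc{d=c}$, $\bb^{(c)}\equiv\0$, which composes to exactly the assignment you write down directly at the aligned offset. Your handling of the zero-padded boundary and of the unshared case matches the paper's conventions, so nothing further is needed.
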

\begin{proof}
From claim~\ref{claim:any_window} it is sufficient to show the above
holds for $R=1$. Indeed, setting $$w_{d}^{(c)}{=}1_{\left[d{=}c\right]}{=}\begin{cases}1 & d=c \\ 0 & d\neq c\end{cases}$$
and $\bb^{(c)} \equiv \0$ satisfies the claim.
\end{proof}

\subsection{Proof of Theorem~\ref{thm:main_overlaps}}
\label{app:proofs:main_overlaps}

We wish to show that for all choices of parameters, except a null set (with
respect to Lebesgue measure) the grid tensor induced by the given GC network
has rank satisfying eq.~\ref{eq:lower_bound}. Since the entries of the
matricized grid tensor are polynomial function of its parameters, then according
to lemma~\ref{lemma:rank_everywhere}, it is sufficient to find a single example
that achieves this bound. Hence, our proof is simply the construction of such an
example.

Recall that the template vectors must hold that for the matrix $F$, defined by
$F_{ij} = f_j(\x^{(i)})$, where $\{f_j\}_{j=1}^M$ are the representation
matrices, is a non-singular matrix. We additionally assume in the following
claims that the output of the representation layer is of width and height equal
to $H \in \N$, where $H$ is an even number~--~the claims and proofs can however
be easily adapted to the more general case.

Assume a ConvAC as described in the theorem, with representation layer defined
according to above, followed by $L$ GC layers, where the $l$'th layer has a
local receptive field of $R^{(l)}$, a stride of $S^{(l)}$, and $D^{(l)}$
output channels. We first construct our example, that achieves the desired
matricization rank, for the simpler case where the first layer following the
representation layer has a local receptive field large enough, i.e. when it is
larger than $\frac{H}{2}$. Recall that for the first layer the total receptive
field is equal to its local receptive field. In the context of
theorem~\ref{thm:main_overlaps}, this first layer satisfies the conditions
necessary to produce the lower bound given in the theorem.

The specific construction is presented in the following claim, which relies on
utilizing the large local receptive field to match each spatial location in the
left side of the input with one from the right side, such that for each such
pair, the respective output of the first layer will represent a mostly diagonal
matrix. We then set the rest of the parameters such that the output of the
entire network is defined by a tensor product of mostly diagonal matrices. Since
the matricization rank of the tensor product of matrices is equal to the product
of the individual ranks, it results in an exponential form of the rank as is
given in the theorem.

\begin{claim}\label{claim:one_overlap}
    Assume a ConvAC as defined above, ending with a single scalar output. For
    all $l\in[L]$, the parameters of the $l$-th GC layer are denoted by
    $\{(\w^{(l,c)} \in \R^{D^{(l-1)} \times R^{(l)} \times R^{(l)}},
    \bb^{(l,c)}\in \R^{R^{(l)} \times R^{(l)}})\}_{c\in[D^{(l)}]}$. Let
    $\h(\x_1,\ldots,\x_N)$ be the function realized the output of network.
    Additionally define $R\equiv R^{(1)}$, $S \equiv S^{(1)}$ and
    $D \equiv \min \{D^{(1)}, M\}$. If $R > \frac{H}{2}$,
    and the weights $\w^{(1,c)}$ and biases $\bb^{(1,c)}$ of the first GC layer
    layer are set to:
    \begin{align*}
        w_{mji}^{(1,c)} & =\begin{cases}
            -\alpha\left(F^{-1}\right)_{m,c} & c \leq D \tand (j,i)\in \{(1,1), (\rho, \tau)\} \\
            0 & \text{Otherwise}
        \end{cases}\\
        b_{ji}^{(1,c)} & =\begin{cases}
            \beta &  c \leq D \tand (j,i) \in \{(1,1), (\rho, \tau)\} \\
            1 &  c \leq D \tand (j,i) \not\in \{(1,1), (\rho, \tau)\} \\
            0 & \text{Otherwise}
        \end{cases}
    \end{align*}
    where $\beta=\frac{2\alpha}{D}$, then there exists an assignment to $\alpha$
    and the parameters of the other GC layers such that\\
    $\rank{\mat{\A(\h)}_{P,Q}}= D^{\left\lfloor \frac{H - R}{S} + 1
    \right\rfloor \cdot \fracceil{H}{S}}$,
    where $P\cupdot Q$ is either the ``left-right'' or ``top-bottom'' partition,
    and $(\rho,\tau)$ equals to $(1,R)$ or $(R,1)$, respectively.
\end{claim}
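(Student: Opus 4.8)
The plan is to verify the stated construction directly and then compute the matricization rank of the grid tensor it induces. Since the proof of Theorem~\ref{thm:main_overlaps} needs only a single parameter setting attaining the bound (Lemma~\ref{lemma:rank_everywhere} then promoting it to almost every setting), it suffices to produce one admissible choice of $\alpha$ and of the parameters of layers $2,\dots,L$; I would not invoke genericity inside the claim. Plugging a template assignment $\vec k=(k_{(a,b)})$ into the representation layer gives $X_{m,a,b}=f_m(\x^{(k_{(a,b)})})=F_{k_{(a,b)},m}$; only the window positions $(1,1)$ and $(\rho,\tau)$ carry weights, and for $c\le D$ the biases elsewhere are $1$, so all but two of the $R^2$ factors in the product pooling of layer~$1$ are trivial. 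Using $\sum_m (F^{-1})_{m,c}F_{k,m}=(FF^{-1})_{k,c}=\delta_{k,c}$, layer~$1$ outputs, for every location $(u,v)$ and $c\le D$,
\[
 Y_{c,u,v}=\bigl(\beta-\alpha\,\delta_{p_{u,v},c}\bigr)\bigl(\beta-\alpha\,\delta_{q_{u,v},c}\bigr),\qquad Y_{c,u,v}=0 \ \text{ for } c>D,
\]
where $p_{u,v}=k_{(uS+1,\,vS+1)}$ and $q_{u,v}=k_{(uS+\rho,\,vS+\tau)}$ are the template indices read at the two active positions (an active position lying outside the image contributing the factor $\beta$).

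Next I would set layers $2,\dots,L$ so that the network computes $\h=\prod_{(u,v)}\bigl(\sum_{c}Y_{c,u,v}\bigr)$: a $1{\times}1$-style layer with an all-ones filter (obtained from the given layer~$2$ via Claims~\ref{claim:any_window} and~\ref{claim:identity}, together with Proposition~\ref{prop:nothing_to_lose} for the surrounding architecture) produces the scalar $g(p,q):=\sum_{c=1}^{D}Y_{c,u,v}$ at each location, after which the remaining layers perform a global product pooling. A short computation with $\beta=2\alpha/D$ gives $g(p,q)=\alpha^2\,\delta_{p,q}$ whenever $p,q\le D$, while if at least one of $p,q$ exceeds $D$ then $g(p,q)$ is a nonzero constant ($\tfrac{2\alpha^2}{D}$ or $\tfrac{4\alpha^2}{D}$ according as one or both exceed $D$). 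Thus $[g(p,q)]_{p,q\in[M]}$ is $\alpha^2 I_D$ bordered by constant (rank-one) blocks; its last $M-D$ rows are all equal and coincide with the combination of the first $D$ rows taken with coefficients $2/D$, so this matrix has rank exactly $D$ --- the ``mostly diagonal matrix'' of the proof sketch. If exactly one active position is in the image, $g$ is constant in one of its arguments (a slice of rank $\le 1$); if both fall outside, $g$ is a scalar.

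With $(\rho,\tau)=(1,R)$ the active positions of $(u,v)$ are $(uS{+}1,vS{+}1)$ and $(uS{+}1,vS{+}R)$; they agree in the first coordinate and differ in the second, and the matched partition $(P,Q)$ is exactly the one separating them along that coordinate (the other choice of $(\rho,\tau)$ and partition being symmetric). The first position always lies in the image; the second lies in the image iff $vS+R\le H$, and then $vS\le H-R\le\tfrac H2-1$ (since $R>\tfrac H2$), so the first position has second coordinate $\le\tfrac H2$ and is on the $P$-side while the second, with second coordinate $vS+R>\tfrac H2$, is on the $Q$-side. Hence $(u,v)$ contributes a genuine rank-$D$ factor exactly for the $\lfloor (H-R)/S\rfloor+1$ admissible values of $v$ and all $\lceil H/S\rceil$ values of $u$ (every output row is admissible since $uS+1\le H$). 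Since distinct such ``good'' locations share no active position, $\mat{\A(\h)}_{P,Q}$ reduces to a Kronecker product of $\lceil H/S\rceil\cdot(\lfloor (H-R)/S\rfloor+1)$ of these rank-$D$ matrices, the remaining locations contributing only rank-one and scalar factors that leave the rank unchanged, and $\rank{A\odot B}=\rank{A}\rank{B}$ gives $\rank{\mat{\A(\h)}_{P,Q}}=D^{\lfloor (H-R)/S+1\rfloor\lceil H/S\rceil}$, as claimed.

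The main obstacle I anticipate is the combinatorial bookkeeping of the last step: proving that \emph{only} the good locations contribute a nontrivial factor, that their active positions are pairwise disjoint, and that every other output location --- active positions outside the image, both on the same side of the partition, or coincidences forced when $S\mid R-1$ --- contributes only a factor that does not change the rank; where needed I would exploit the freedom in choosing layers $2,\dots,L$ (including in the unshared regime) to force their output to $1$ at such locations. A little extra care is also needed since $H$, $R$ and $S$ need not divide one another; as the excerpt notes, the claim is stated for even $H$ and extends routinely.
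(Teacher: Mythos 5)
Your proposal is correct and takes essentially the same route as the paper's proof: the same choice of layers $2,\dots,L$ to realize $\prod_{(u,v)}\sum_c Y_{c,u,v}$, the same rank-$D$ ``mostly diagonal'' per-location matrices, and the same Kronecker-product count over the $\left\lfloor\frac{H-R}{S}+1\right\rfloor\cdot\left\lceil\frac{H}{S}\right\rceil$ good locations. The $S\mid R-1$ bookkeeping you defer is resolved in the paper by pairing $f(u,v)$ with $f\left(u,v+\frac{R-1}{S}\right)$ into a single matrix (equivalently, your observation that a factor sharing an index with a good location merely rescales that matrix's columns by nonzero constants), and you should rely on that argument rather than the fallback of forcing later layers' outputs to $1$ at selected locations, which is unavailable because the claim's layers are spatially shared.
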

\begin{proof}
The proof for either the ``left-right'' or ``top-bottom'' partition is completely symmetric, thus it is enough
to prove the claim for the ``left-right'' case, where $(\rho, \tau) = (1,R)$. We wish to compute the entry
$\A(\h)_{d_{(1,1)},\ldots,d_{(H,H)}}$ of the induced grid tensor for arbitrary indices $d_{(1,1)},\ldots,d_{(H,H)}$.
Let $O \in \R^{M \times H \times H}$ be the 3-order tensor output of the representation layer, where
$O_{m,j,i} = F_{d_{(j,i)},m}$ for the aforementioned indices and for all $1 \leq i,j \leq H$ and $m \in [M]$.

We begin by setting the parameters of all layers following the first GC layer, such that they are equal to
computing the sum along the channels axis of the output of the first GC layer, followed by a global product
of all of the resulting sums. To achieve this, we can first assume w.l.o.g. that these layers are non-overlapping
through proposition~\ref{prop:nothing_to_lose}. We then set the parameters of the second GC layer to
$\w^{(2,c)} = \mathbf{1}$ and $\bb^{(2,c)} \equiv \mathbf{0}$, i.e. all ones and all zeros, respectively, which is
equivalent to taking the sum along the channels axis for each spatial location, followed by taking the products
over non-overlapping local receptive fields of size $R^{(2)}$. For the other layers, we simply set them as
to take just the output of the first channel of the output of the preceding layer, which is equal to setting
their parameters to $w^{(l,c)}_{dji} = \begin{cases} 1 & d = 1\\0 & d \neq 1\end{cases}$ and $\bb^{(l,c)} \equiv 0$.
Setting the parameters as described above results in:
\begin{align}\label{eq:claim:one_overlap}
\left(\A(\h)\right)_{d_{(1,1)},\ldots,d_{(H,H)}} &= \prod_{\substack{0\leq u S < H \\ 0 \leq v S < H}}\sum_{c=1}^{D^{(1)}}\prod_{j,i=1}^R
\underbrace{\left(b_{ji}^{(1,c)}+\sum_{m=1}^M w_{mji}^{(1,c)}O_{m, u S + j, v S + i}\right)}_{g(u,v,c,j,i)}
\end{align}
where we extended $O$ with zero-padding for the cases where $uS+j > H$ or $vS+i > H$, as
mentioned in sec.~\ref{sec:overlapping_convac}. Next, we go through the technical process of reducing eq.~\ref{eq:claim:one_overlap}
to a product of matrices.

Substituting the values of $w_{dji}^{\left(1,c\right)}$ and $b_{ji}^{\left(1,c\right)}$ with those defined in the claim,
and computing the value of $g(u,v,c,j,i)$ results in:
\begin{align*}
g(u,v,c,j,i) &= \begin{cases}
\beta-\alpha\sum_{m=1}^M\left(F^{-1}\right)_{m,c} F_{d_{(u S + j,v S + i)},m} &  c \leq D \tand v S + R \leq H \tand (j,i) \in \{(1,1),(1,R)\} \\
\beta-\alpha\sum_{m=1}^M\left(F^{-1}\right)_{m,c} F_{d_{(u S + j,v S + i)},m} &  c \leq D \tand v S + R > H \tand (j,i) = (1,1) \\
\beta &  c \leq D \tand v S + R > H \tand (j,i) = (1,R) \\
1 &  c \leq D \tand (j,i) \not\in \{(1,1),(1,R)\} \\
0 & c > D
\end{cases}\\
&= \begin{cases}
\beta - \alpha \left(F F^{-1}\right)_{d_{(u S + 1, v S + i)},c} &  c \leq D \tand v S + R \leq D \tand (j,i) \in \{(1,1),(1,R)\} \\
\beta - \alpha \left(F F^{-1}\right)_{d_{(u S + 1, v S + i)},c} &  c \leq D \tand v S + R > H \tand (j,i) = (1,1) \\
\beta & c \leq D \tand v S + R > H \tand (j,i) = (1,R) \\
1 & c \leq D \tand (j,i) \not\in \{(1,1),(1,R)\} \\
0 & c > D
\end{cases}
\end{align*}
from which we derive:
\begin{align*}
f(u,v) \equiv \sum_{c=1}^{D^{(1)}}\prod_{j,i=1}^R g(u,v,c,j,i) &= \begin{cases}
\begin{aligned}\,&D\beta^2 - \alpha \beta(1_{[d_{(uS + 1, vS + 1)} \leq D]} + 1_{[d_{(uS + 1, vS + R)} \leq D]}) \\\,& + \alpha^2 1_{[d_{(uS + 1, vS + 1)} = d_{(uS + 1, vS + R)} \leq D]}\end{aligned}&  v S + R \leq H \\
D\beta^2 - \alpha \beta 1_{[d_{(uS + 1, vS + 1)} \leq D]} &  v S + R > H
\end{cases}
\end{align*}
where $\left(\A(\h)\right)_{d_{(1,1)},\ldots,d_{(H,H)}}=\prod_{\substack{0\leq u S < H \\ 0 \leq v S < H}} f(u,v)$.

At this point we branch into two cases. If $S$ divides $R-1$, then for all $u,v \in \N$ such that $vS + R \leq H$ and $uS < H$,
the above expression for $f(u,v)$ and $f(u,v + \frac{R-1}{S})$ depends only on the indices $d_{(uS + 1, vS + 1)}$ and
$d_{(uS + 1, vS + R)}$, while these two indices affect only the aforementioned expressions. By denoting\\
$A^{(u,v)}_{d_{(uS + 1, vS + 1)},d_{(uS + 1, vS + R)}} = f(u,v)\cdot f(u,v + \frac{R-1}{S})$, we can write it as:
\begin{align*}
A_{ij}^{(u,v)} = \begin{cases}
\left(D\beta^2 -2\alpha\beta + \alpha^2 1_{[i=j]}\right)\left(D\beta^2 - \alpha\beta\right) & i,j \leq D \\
\left(D\beta^2 -\alpha\beta\right)\left(D\beta^2\right) & i \leq D \tand j > D \\
\left(D\beta^2 -\alpha\beta\right)^2 & i > D \tand j \leq D \\
\left(D\beta^2\right)^2 & i,j > D
\end{cases}
\end{align*}
where $i,j\in[M]$ stand for the possible values of $d_{(uS + 1, vS + 1)}$ and $d_{(uS + 1, vS + R)}$,
respectively. Substituting $\beta=\frac{2\alpha}{D}$, as stated in the claim, and
setting $\alpha=\left(\frac{D}{2}\right)^{\nicefrac{1}{4}}$, results in:
\begin{align*}
A_{ij}^{(u,v)} = \begin{cases}
1_{[i=j]} & i,j \leq D \\
\frac{4}{D} & i \leq D \tand j > D \\
\frac{2}{D} & i > D \tand j \leq D \\
\frac{8}{D} & i,j > D
\end{cases}
\end{align*}
which means $\rank{A^{(u,v)}}=D$. Since $\left(\A(\h)\right)_{d_{(1,1)},\ldots,d_{(H,H)}}$
equals to the product \\$\prod_{\substack{0\leq u S < H \\ 0 \leq v S \leq H - R}} A^{(u,v)}_{d_{(uS + 1, vS + 1)}, d_{(uS + 1, vS + R)}}$,
then $\mat{\A(\h)}_{P,Q}$ equals to the Kronecker product of the matrices in $\{A^{(u,v)}|0\leq u S < H, 0 \leq vS \leq H - R\}$,
up to permutation of its rows and columns, which do not affect its matrix rank. Thus, the matricization rank of $\A(\h)$ satisfies:
\begin{align*}
\rank{\mat{\A(\h)}_{P,Q}} &= D^{\abs{\{A^{(u,v)}|0\leq u S < H, 0 \leq vS \leq H - R\}}}
=  D^{\left\lfloor \frac{H - R}{S} + 1 \right\rfloor \cdot \fracceil{H}{S}}
\end{align*}
which proves the claim for this case.

If $S$ does not divide $R-1$, then for all $u,v\in\N$, such that $vS + R \leq H$ and $uS < H$, it holds
that $f(u,v)$ depends only on the indices $d_{(uS + 1, vS + 1)}$ and $d_{(uS + 1, vS + R)}$, and they
affect only $f(u,v)$. Additionally, for all $u,v\in\N$, such that $H < vS + R$, $vS < H$ and $uS < H$, it holds
that $f(u,v)$ depends only on the index $d_{(uS + 1, vS + 1)}$, and this index affects only $f(u,v)$.
Let us denote $\A^{(u,v)}_{d_{(uS + 1, vS + 1)},d_{(uS + 1, vS + R)}} = f(u,v)$ for $vS + R \leq H$:
\begin{align*}
A_{ij}^{(u,v)} = \begin{cases}
\left(D\beta^2 -2\alpha\beta + \alpha^2 1_{[i=j]}\right) & i,j \leq D \\
\left(D\beta^2 -\alpha\beta\right) & i \leq D \tand j > D \\
\left(D\beta^2 -\alpha\beta\right) & i > D \tand j \leq D \\
\left(D\beta^2\right) & i,j > D
\end{cases}
\end{align*}
which by setting $\beta=\frac{2\alpha}{D}$ and $\alpha=1$ results in:
\begin{align*}
A_{ij}^{(u,v)} = \begin{cases}
1_{[i=j]} & i,j \leq D \\
\frac{2}{D} & i \leq D \tand j > D \\
\frac{2}{D} & i > D \tand j \leq D \\
\frac{4}{D} & i,j > D
\end{cases}
\end{align*}
which means $\rank{A^{(u,v)}}=D$. For $vS + R > H$, and we can define
the vector $\aaa^{(u,v)}_{d_{(uS + 1, vS + 1)}} = f(u,v)$, which by using the same values of
$\alpha$ and $\beta$ results in:
\begin{align*}
a^{(u,v)}_i &= \begin{cases}
\frac{2}{D}& i \leq D \\
\frac{4}{D} & i > D
\end{cases}
\end{align*}
By viewing $\aaa^{(u,v)}$ as either a column or row vector, depending on whether $d_{(uS + 1, vS + 1)} \in P$
or \\$d_{(uS + 1, vS + 1)}~\in~Q$, respectively, it holds that $\mat{\A(\h)}_{P,Q}$ equals to the Kronecker product
of the matrices in $\{A^{(u,v)}\}_{\substack{0\leq u S < H\\ 0 \leq vS \leq H - R}} \cup
\{\aaa^{(u,v)}\}_{\substack{0\leq u S < H\\ H -R < vS < H}}$, up to permutations of its rows and columns,
which do not affect the rank. Since $\aaa^{(u,v)}\neq\mathbf{0}$ then
$\rank{\aaa^{(u,v)}}=1$, which means the matricization rank of $\A(\h)$ once again holds:
\begin{align*}
\rank{\mat{\A(\h)}_{P,Q}} &= D^{\abs{\{A^{(u,v)}|0 \leq u S < H, 0 \leq v S \leq H - R\}}}
=  D^{\left\lfloor \frac{H - R}{S} + 1 \right\rfloor \cdot \fracceil{H}{S}}
\end{align*}
\end{proof}

In the preceding claim we have describe an example for the case where the total receptive of the first GC
layer is already large enough for satisfying the conditions of the theorem. In the following claim we extend
this result for the general case. This is accomplished by showing that a network comprised of just $L$ GC
layers with local receptive fields $\{R^{(l)} \times R^{(l)} \}_{l\in[L]}$, strides $\{S^{(l)} \times S^{(l)}\}_{l\in[L]}$, and output channels
$\{D^{(l)}\}_{l\in[L]}$, can effectively compute the same output as the first GC layer from claim~\ref{claim:one_overlap},
for all inputs~\footnote{Notice that in this context, there is no representation layer, and the input can be any 3-order tensor.}.

Recall that the layer from claim~\ref{claim:one_overlap} performs an identical transformation on each $M \times 1 \times 1$
patch from its input, followed by taking the point-wise product of far-away pairs of transformed patches. Thus, the motivation
behind the specific construction we use, is to use the first of the $L$ layers to perform this transformation, while using half of its
output channels for storing the transformed patch from the same location, and the other half for storing a transformed patch, but
from a location farthest to the right, constrained by its local receptive field. This is equal to having one set of transformed patches
sitting still, while another ``shifted'' set of transformed patches. The other layers simply pass the the first half of the channels as is,
using an identity operation as defined in claim~\ref{claim:identity}, while continuously shifting the other half of the channels more
and more to the left, bringing faraway patches closer together. Finally, at the last layer we take both halves and multiple them together.

\begin{claim}\label{claim:many_overlaps}
    Assume $\Phi$ is a ConvAC comprised of just $L$ GC layers as described
    above, where the output of the network is not limited to a scalar value.
    Assume the total stride of the $L$-th GC layer is greater than
    $\nicefrac{H}{2}$, and let $\totstr^{(L)}$ and $\totrec^{(L,\alpha)}$ be the
    total stride and the $\alpha$-minimal total receptive field, respectively,
    for $\alpha = \nicefracfloor{H}{2} +1$. Let $\Psi$ be a ConvAC comprised of
    a single GC layer with local receptive field
    $R \equiv \totrec^{(L,\alpha)}$, stride $S \equiv \totstr^{(L)}$, output
    channels $D \equiv \min\{\frac{1}{2}\min_{1\leq l<L} D^{(l)},D^{(L)}\}$,
    where its weights and biases are set to the following:
    \begin{align*}
    w_{mji}^{(c)} & =\begin{cases}
    A_{m,c} & \left(j,i\right)\in\left\{ \left(1,1\right),\left(\rho, \tau\right)\right\} \\
    0 & \textrm{Otherwise}
    \end{cases}\\
    b_{ji}^{(c)} & =\begin{cases}
    \beta_c & \left(j,i\right)\in\left\{ \left(1,1\right),\left(\rho, \tau \right)\right\} \\
    1 & \textrm{Otherwise}
    \end{cases}
    \end{align*}
    for $\beta\in\R^D$, $A\in\R^{M\times D}$ and $(\rho, \tau) \in \{(1,R), (R, 1)\}$.
    Then, there exists a set of weights to the layers of~$\Phi$ such that for every input $X$, the output of~$\Phi$
    is equivalent to the output of $\Psi$ for channels $\leq D$, and zero otherwise.
\end{claim}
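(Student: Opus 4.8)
The plan is to exhibit explicit weights for $\Phi$ that realize the function of $\Psi$ on channels $\le D$ and output $0$ on the remaining channels of layer~$L$. Since $R=\totrec^{(L,\alpha)}$ is, by eq.~\ref{eq:minimal_receptive}, the total receptive field of the network obtained from $\Phi$ by replacing each $R^{(l)}$ with a suitable value in $[S^{(l)},R^{(l)}]$, Proposition~\ref{prop:nothing_to_lose} lets me construct the weights for that reduced network instead; so from here on I assume the network has total receptive field exactly $R$ and total stride exactly $S\equiv\totstr^{(L)}$, writing $\totrec^{(l)},\totstr^{(l)}$ for the (reduced) accumulated quantities. I treat only the ``left-right'' partition, $(\rho,\tau)=(1,R)$; the ``top-bottom'' case is obtained by transposing the two spatial axes. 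The case $L=1$ is immediate from Claim~\ref{claim:any_window}, so assume $L\ge2$.

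The construction maintains a ``two-stream'' encoding. For each $l<L$, split the $D^{(l)}\ge2D$ channels of layer~$l$ into stream~$1$ (channels $1,\ldots,D$) and stream~$2$ (channels $D{+}1,\ldots,2D$); any remaining channels are zeroed. Layer~$1$ writes into stream~$1$, at each output location, the quantity $\sum_m A_{mc}X_{m,\cdot}$ evaluated at the top-left corner of its window, and into stream~$2$ the same quantity evaluated at the top-right corner; both are produced by the product-pooling GC formula by placing the weights on a single window entry and setting all other biases to $1$ (the mechanism of Claims~\ref{claim:any_window} and~\ref{claim:identity}). It is essential that the streams carry $\sum_m A_{mc}X_{m,\cdot}$ \emph{without} yet adding $\beta_c$, so that a window corner lying in the zero-padded region contributes exactly $\sum_m A_{mc}\cdot0=0$. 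Layers $l=2,\ldots,L{-}1$ then merely copy stream~$1$ from the top-left corner and stream~$2$ from the top-right corner of the previous layer's output, each copy realized as in Claim~\ref{claim:identity}. Finally layer~$L$ combines the streams: for $c\le D$ its $c$-th output channel places weight $1$ on channel $c$ of the top-left window corner with bias $\beta_c$, weight $1$ on channel $D{+}c$ of the top-right window corner with bias $\beta_c$, and bias $1$ on every other window entry, so that its pooled product is $\bigl(\beta_c+\sum_m A_{mc}X_{m,p_1}\bigr)\bigl(\beta_c+\sum_m A_{mc}X_{m,p_2}\bigr)$ for the two input pixels $p_1,p_2$ the streams have carried to that location; for $c>D$ all its weights and biases are set to $0$, yielding output $0$.

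The correctness proof is an induction on $l$ showing that at every valid output location $(u,v)$, stream~$1$ holds $\sum_m A_{mc}X_{m,\cdot}$ evaluated at input pixel $\bigl((u{-}1)\totstr^{(l)}{+}1,\ (v{-}1)\totstr^{(l)}{+}1\bigr)$ and stream~$2$ at input pixel $\bigl((u{-}1)\totstr^{(l)}{+}1,\ (v{-}1)\totstr^{(l)}{+}\totrec^{(l)}\bigr)$, with the zero-padding convention for out-of-range columns. The base case is the definition of layer~$1$, and the inductive step reduces to the recursion $\totrec^{(l)}=\totrec^{(l-1)}+(R^{(l)}{-}1)\,\totstr^{(l-1)}$, obtained by unwinding eq.~\ref{eq:total_receptive}, together with the elementary fact that the top-left corner of every layer's window always maps back to an in-range input pixel (so stream~$1$ is never zero-padded). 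Evaluating the invariant at $l=L$ shows that the two pixels read by the combining layer are exactly the $(1,1)$ and $(1,R)$ entries of $\Psi$'s receptive field at the same output location; and whenever the second pixel is out of range, the $0$ inserted at layer~$1$ has propagated untouched, so layer~$L$ produces $\beta_c+0=\beta_c$ for that factor, matching $\Psi$ there as well. Comparing with the explicit form of $\Psi$'s layer completes the proof.

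The one genuinely delicate point is making the boundary behaviour agree with $\Psi$: had the streams carried $\beta_c+\sum_m A_{mc}X_{m,\cdot}$ rather than $\sum_m A_{mc}X_{m,\cdot}$, an out-of-range corner would have turned into a spurious $0$ instead of the required $\beta_c$; postponing the addition of $\beta_c$ to the last layer is precisely what fixes this, at the price of having to line up each intermediate layer's effective spatial extent with the columns that $\Psi$'s window reaches. The only other case that needs attention is $R^{(L)}=1$, which forces $S^{(L)}=1$ and hence $\totrec^{(L)}=\totrec^{(L-1)}$, $\totstr^{(L)}=\totstr^{(L-1)}$; there one performs the combining step already at layer $L{-}1$ and lets layer~$L$ act as a $1{\times}1$ identity via Claim~\ref{claim:identity}.
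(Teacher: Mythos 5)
Your construction is essentially the paper's: a two-stream encoding in which the first layer writes the linear transform of the window's top-left corner into channels $1,\ldots,D$ and of its $(1,R^{(1)})$ corner into channels $D{+}1,\ldots,2D$, intermediate layers pass the first stream through while shifting the second via their $(1,R^{(l)})$ position, and the last layer multiplies the two streams; your induction on $\totrec^{(l)}=\totrec^{(l-1)}+(R^{(l)}-1)\totstr^{(l-1)}$ is the same recursion the paper tracks with its $\eta^{(l)},\xi^{(l)}$, and the initial reduction to total receptive field exactly $R$ via proposition~\ref{prop:nothing_to_lose} matches the paper's use of claims~\ref{claim:any_window} and~\ref{claim:identity}. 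The one genuine difference is where the bias enters: the paper stores $\beta_k+\sum_d A_{d,k}(\cdot)$ already at layer~1, whereas you carry the bias-free value $\sum_m A_{mc}(\cdot)$ through the streams and add $\beta_c$ only in the combining layer. This is a substantive refinement, not a stylistic one: under the paper's assignment, an output column whose shifted read falls outside the spatial extent of an intermediate layer returns the padded value $0$ rather than the stored $\beta_k$, so the second factor becomes $0$ where $\Psi$ produces $\beta_c$; such columns exist whenever some layer beyond the first overlaps (e.g.\ $H=8$ and two layers with $R^{(l)}=3$, $S^{(l)}=1$ give a mismatch at the two rightmost output columns), and these are precisely the boundary locations on which claim~\ref{claim:one_overlap}'s rank computation relies. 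Your deferral of $\beta_c$ makes the zero-padding semantics consistent at every level (an out-of-range read equals the bias-free functional of the zero-padded input), so the equivalence with $\Psi$ holds including the boundary. Two minor points: the $R^{(L)}=1$ case should be phrased so the combining step moves to the last layer with $R^{(l)}>1$ (not necessarily $L-1$, since several trailing layers could be $1{\times}1$), in the spirit of the paper's pass-through reduction; and your output $0$ for channels $c>D$ actually matches the claim's statement, whereas the paper's own assignment yields $1$ there — immaterial for the downstream use, but worth noting.
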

\begin{proof}
The two possible cases for $(\rho, \tau)$ are completely symmetric, thus it is enough to prove the
claim just for $(\rho, \tau) = (1, R)$. Additionally, we can assume w.l.o.g. that $\forall l, R^{(l)} > 1$,
by setting any $1 \times 1$ layer to act as pass-through according to claim~\ref{claim:identity},
and also assume that the $\alpha$-minimal total receptive field is exactly equal to the total receptive field
of the $L$-th layer, by applying claim~\ref{claim:any_window} to realize an equivalent network
with smaller windows. Finally, the case for $L = 1$ is trivial, and thus we assume $L > 1$.

Let us set the parameters $\left\{ \w^{\left(l,k\right)},\bb^{\left(l,k\right)}\right\} $
of the layers of $\Phi$ as follows:
\begin{align*}
w_{dji}^{\left(l,k\right)} & =\begin{cases}
- A_{d, k}                    & (l=1)        \tand (1 \leq k \leq D )    \tand  (j,i) = (1, 1)\\
- A_{d, k-D}                & (l=1)        \tand  (D   <  k \leq 2D)  \tand  (j,i) = (1, R^{(l)})\\
1_{\left[d=k\right]}      & (1 < l < L) \tand (1 \leq k \leq D)     \tand  (j,i) = (1, 1)\\
1_{\left[d=k\right]}      & (1 < l < L) \tand (D   <  k \leq 2D)   \tand  (j,i) = (1, R^{(l)})\\
1_{\left[d=k\right]}      & (l=L)         \tand (1 \leq k \leq D)     \tand  (j,i) = (1, 1)\\
1_{\left[d=M+k\right]} & (l=L )        \tand (1 \leq k \leq D)     \tand  (j,i) = (1, R^{(l)})\\
0                                & \text{Otherwise}
\end{cases}\\
b_{ji}^{\left(l,k\right)} & =\begin{cases}
\beta_k       & (l=1)        \tand (1 \leq k \leq D )    \tand  (j,i) = (1, 1)\\
\beta_{k-D} & (l=1)        \tand  (D   <  k \leq 2D)  \tand  (j,i) = (1, R^{(l)})\\
0                 & (1 < l < L) \tand (1 \leq k \leq D)     \tand  (j,i) = (1, 1)\\
0                 & (1 < l < L) \tand (D   <  k \leq 2D)   \tand  (j,i) = (1, R^{(l)})\\
0                 & (l=L)         \tand (1 \leq k \leq D)     \tand  (j,i) = (1, 1)\\
0                 & (l=L )        \tand (1 \leq k \leq D)     \tand  (j,i) = (1, R^{(l)})\\
1                 & \text{Otherwise}
\end{cases}
\end{align*}
It is left to prove the above satisfies the claim. Let $O^{(l)} \in \R^{D^{(l)} \times H^{(l)} \times H^{(l)}}$ be
the output of the $l$-th layer, for $l\in[0,\ldots,L]$, where $H^{(l)}$ is the width and height of the output.
We additionally assume that for indices beyond the bounds of $[D^{(l)}] \times [H^{(l)}] \times [H^{(l)}] $ the
value of $O^{(l)}$ is zero, i.e. we assume zero padding when applying the convolutional operation of the GC layer.
We extend the definition for $l=0$, by setting $D^{(0)} \equiv M$ and $H^{(0)} \equiv H$, where we identify
$O^{(0)}$ with the input to the network $\Phi$. Given the above, the output of the first layer for $k\in[D^{(1)}]$ and
$0\leq u,v < H^{(1)}$, is as follows:
\begin{align*}
O_{k,u+1,v+1}^{(1)} & =\prod_{j,i=1}^{R^{(1)}} \left(b_{ji}^{(1,k)}+\sum_{d=1}^{D^{(0)}} w_{dji}^{(1,k)}O^{(0)}_{d, u S_h + j, v S_w + i}\right)\\
 & =\begin{cases}
\beta_k + \sum_{d=1}^{M}A_{d, k} \cdot O^{(0)}_{d,u S_h + 1, v S_w + 1}                           & 1 \leq k \leq D\\
\beta_{k-D} + \sum_{d=1}^{M}A_{d, k-D} \cdot O^{(0)}_{d,u S_h + 1, v S_w + R^{(1)} }  & D < k \leq 2D\\
1                                                                                                                      & \textrm{Otherwise}
\end{cases}
\end{align*}
We will show by induction that for $1 < l < L$, $k \in [D^{(l)}]$ and $0 \leq u,v < H^{(l)}$ the output of the $l$-th
layer $O_{k,u+1,v+1}^{(l)}$ always equals to:
\begin{align*}
O_{k,u+1,v+1}^{(l)} &=\begin{cases}
O_{k, u \eta^{(l)} + 1, v \eta^{(l)} + 1}^{1} & k \leq D\\
O_{k, u \eta^{(l)} + 1, v \eta^{(l)} + \xi^{(l)}}^{1} & D < k \leq 2D\\
1 & \textrm{Otherwise}
\end{cases}
\end{align*}
where $\eta^{(l)} = \prod_{i=2}^l S^{(i)}$ and $\xi^{(l)} = R^{(l)}\cdot\eta^{(l-1)} + \sum_{k=2}^{l-1} (R^{(k)}- S^{(l)})\cdot \eta^{(k-1)}$.
It is trivial to verify that for $l=2$ it indeed holds, since:
\begin{align*}
O_{k,u+1,v+1}^{(2)} =\begin{cases}
O_{k, u S^{(2)} + 1, v S^{(2)} + 1}^{(1)} & k\leq D\\
O_{k, u S^{(2)} + 1,v S^{(2)} + R^{(2)}}^{(1)} & D<k\leq2D\\
1 & \textrm{Otherwise}
\end{cases}
\end{align*}
where $\eta^{(2)} = S^{(2)}$ and $\xi^{(2)} = R^{(2)}$.
Assume the claim holds up to $l-1$, and we will show it also holds
for $l$:
\begin{align*}
O_{k,u+1,v+1}^{(l)} & =\prod_{j,i=1}^{R^{(l)}}\left(b_{ji}^{(l,k)}+\sum_{d=1}^{D^{(l-1)}}w_{dji}^{(l,k)}O_{d,u S^{(l)} + j, v S^{(l)} + i}^{l-1}\right)\\
 & =\begin{cases}
O_{k, u S^{(l)} + 1, v S^{(l)} + 1}^{(l-1)} & k\leq D\\
O_{k, u S^{(l)} + 1,v S^{(l)} + R^{(l)}}^{(l-1)} & D<k\leq2D\\
1 & \textrm{Otherwise}
\end{cases}\\
\textrm{Induction Hypothesis}\Rightarrow & =\begin{cases}
O_{k, \left(u S^{(l)}\right) \eta^{(l-1)} + 1, \left(v S^{(l)}\right) \eta^{(l-1)} + 1}^{(1)}  & k\leq D\\
O_{k, \left(u S^{(l)}\right) \eta^{(l-1)} + 1, \left(v S^{(l)} + R^{(l)} - 1\right) \eta^{(l-1)} + \xi^{(l-1)}}^{(1)} & D<k\leq2D \\
1 & \textrm{Otherwise}
\end{cases} \\
 & =\begin{cases}
O_{k, u \eta^{(l)} + 1, v \eta^{(l)} + 1}^{1}  & k\leq D\\
O_{k, u \eta^{(l)} + 1, v \eta^{(l)} + \xi^{(l)}}^{1} & D<k\leq2D \\
1 & \textrm{Otherwise}
\end{cases}
\end{align*}
Were we used the fact that $\eta^{(l)} = S^{(l)} \eta^{(l-1)}$ and $\xi^{(l)} = R^{(l)}\eta^{(l-1)} + \xi^{(l-1)} - \eta^{(l-1)}$.

Finally, we show that $O_{k,u+1,v+1}^{(L)}$ for $k\leq D$ and $0\leq u,v < H^{(L)}$ equals to the output of the single
GC layer specified in the claim:
\begin{align*}
O_{k,u+1,v+1}^{(L)} & =\prod_{j,i=1}^{R^{(L)}}\left(b_{ji}^{(L,k)}+\sum_{d=1}^{D^{(L-1)}}w_{dji}^{(L,k)}O_{d,u S^{(L)} + j, v S^{(L)} + i}^{(L-1)}\right)\\\\
 & =O_{k, u S^{(L)} + 1, v S^{(L)} + 1}^{(L-1)} \cdot O_{k,u S^{(L)} + 1,v S^{(L)}+R^{(L)}}^{(L-1)}\\
 & =O_{k, u \eta^{(L)} + 1, v \eta^{(L)} + 1}^{(1)} \cdot O_{k,u \eta^{(L)} + 1,v\eta^{(L)} + \xi^{(L)}}^{(1)}\\
 & =\left(\beta_k + \sum_{d=1}^{D^{(0)}} A_{d,k} O^{(0)}_{d, u \eta^{(L)} S^{(1)} + 1, v \eta^{(L)} S^{(1)} + 1}\right) \\
 & \phantom{=} \cdot \left(\beta_k + \sum_{d=1}^{D^{(0)}} A_{d,k} O^{(0)}_{d, u \eta^{(L)} S^{(1)} + 1, \left(v\eta^{(L)} + \xi^{(L)} - 1\right) S^{(1)} + R^{(1)})}\right)\\
 & =\left(\beta_k + \sum_{d=1}^{D^{(0)}}A_{d,k} O^{(0)}_{d,u \totstr^{(L)} + 1, v \totstr^{(L)} + 1}\right)
       \left(\beta_k + \sum_{d=1}^{D^{(0)}}A_{d,k} O^{(0)}_{d,u \totstr^{(L)} + 1,v \totstr^{(L)} + \totrec^{(L)}}\right)
\end{align*}
which is indeed equal to the single GC layer. For $k > D$, both the bias and the weights for the last layer are zero,
and thus $O^{(L)}_{k,u+1,v+1} = 1$.
\end{proof}

Finally, with the above two claims set in place, we can prove our main theorem:
\begin{proof}(\textbf{of theorem \ref{thm:main_overlaps}})
Using claim \ref{claim:many_overlaps} we can realize the networks
from claim \ref{claim:one_overlap}, for which the matricization rank for either
partition equals to:
\begin{align*}
D^{\left\lfloor \frac{H - \totrec^{(K,\nicefracfloor{H}{2})}}{\totstr} + 1 \right\rfloor \cdot \fracceil{H}{\totstr}}
\end{align*}
Since for any matricization $\left[\A\left(\Psi\right)\right]_{P,Q}$
the entries of the matricization are polynomial functions with respect to the
parameters of the network, then, according to lemma \ref{lemma:rank_everywhere},
the set of parameters of $\Phi$, that does not attain the above rank, has zero measure.
Since the union of zero measured sets is also of measure zero, then all parameters
except a set of zero measure attain this matricization rank for both partitions at once, concluding the proof.
\end{proof}

\subsection{Proof of Proposition \ref{prop:common_case}} \label{app:proofs:common_case}

Following theorem~\ref{thm:main_overlaps}, to compute the lower bound for the
network described in proposition~\ref{prop:common_case}, we need to find the first
layer for which its total receptive field is greater than $\nicefrac{H}{2}$, and then
estimate its total stride and its $\alpha$-minimal total receptive field, for $\alpha=\nicefracfloor{H}{2}$.
In the following claims we analyze the above properties of the given network:

\begin{claim}\label{claim:prop2:total_receptive}
The total stride and total receptive field of the $l$-th $B \times B$ layer in the given network, i.e. the $(2l-1)$-th GC
layer after the representation layer, are given by the following equations:
\begin{align*}
\totstr^{(2l-1)}(S^{(1)},\ldots,S^{(2l-1)}) &= 2^{l-1} \\
\totrec^{(2l-1)}(R^{(1)},S^{(1)},\ldots,R^{(2l-1)},S^{(2l-1)}) &= (2B - 1) 2^{l-1} - B + 1
\end{align*}
\end{claim}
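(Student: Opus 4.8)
The plan is a direct computation from the definitions in eqs.~\eqref{eq:total_stride} and \eqref{eq:total_receptive}, once the architecture of Proposition~\ref{prop:common_case} is written out explicitly: the $l$-th $B{\times}B$ layer is the $(2l{-}1)$-th GC layer, so that $R^{(2l-1)} = B$ and $S^{(2l-1)} = 1$, while the interleaved ``pooling'' layers are the even-indexed ones, with $R^{(2l)} = 2$ and $S^{(2l)} = 2$. Everything then reduces to evaluating the partial products and the finite sum appearing in the definition of $\totrec$.

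For the total stride I would observe that among the first $2l-1$ layers exactly $l$ have stride $1$ (the odd-indexed ones) and $l-1$ have stride $2$ (the even-indexed ones), so $\totstr^{(2l-1)} = \prod_{i=1}^{2l-1} S^{(i)} = 1^{l}\cdot 2^{l-1} = 2^{l-1}$. The same count gives $\totstr^{(2j-2)} = 2^{j-1}$ for every $j \geq 1$, a fact I will reuse in the receptive-field computation.

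For the total receptive field I would substitute $k = 2l-1$ into eq.~\eqref{eq:total_receptive}. The leading term is $R^{(2l-1)}\cdot \totstr^{(2l-2)} = B\cdot 2^{l-1}$. In the sum $\sum_{m=1}^{2l-2}\bigl(R^{(m)} - S^{(m)}\bigr)\,\totstr^{(m-1)}$, every even index $m = 2j$ contributes $R^{(2j)} - S^{(2j)} = 2 - 2 = 0$, so only the odd indices $m = 2j-1$ (for $j = 1,\ldots,l-1$) survive, each contributing $(B-1)\,\totstr^{(2j-2)} = (B-1)\,2^{j-1}$. Summing this geometric series gives $(B-1)(2^{l-1} - 1)$, whence
\begin{align*}
\totrec^{(2l-1)} = B\cdot 2^{l-1} + (B-1)(2^{l-1} - 1) = (2B-1)2^{l-1} - B + 1,
\end{align*}
as claimed. (One could equally well prove both formulas by induction on $l$, peeling off the last conv--pool block and applying eq.~\eqref{eq:total_receptive} recursively; the algebra is identical.)

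Since the argument is entirely mechanical, there is no genuine obstacle here. The only point that needs a moment's care is the index bookkeeping: recognizing that the $2{\times}2$ stride-$2$ layers drop out of the sum because $R - S = 0$ for them, and keeping the powers of two aligned with the running partial products $\totstr^{(m-1)}$ that weight each term.
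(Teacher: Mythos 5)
Your computation is correct and follows essentially the same route as the paper's own proof: a direct evaluation of eqs.~\eqref{eq:total_stride} and \eqref{eq:total_receptive}, noting that the $2{\times}2$ stride-$2$ layers drop out because $R^{(2j)}-S^{(2j)}=0$, and summing the resulting geometric series $(B-1)\sum_{j=1}^{l-1}2^{j-1}=(B-1)(2^{l-1}-1)$. No gaps; the index bookkeeping you flag is handled correctly.
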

\begin{proof}
From eq.~\ref{eq:total_stride} it immediately follows that $\totstr^{(2l-1)}(S^{(1)},\ldots,S^{(2l-1)}) = 2^{l-1}$. From eq.~\ref{eq:total_receptive},
the $2 \times 2$ stride $2$ layers do not contribute to the receptive field as $R^{(2l)} - S^{(2l)} = 0$,
which results in the following equation:
\begin{align*}
\totrec^{(2l-1)}(R^{(1)},S^{(1)},\ldots,R^{(2l-1)},S^{(2l-1)}) &=B \cdot 2^{l-1} + \sum_{i=1}^{l-1} (B-1) 2^{i-1}\\
&= B \cdot 2^{l-1} + (B-1) (2^{l-1}-1)\\
&= (2B -1) 2^{l-1} - B + 1
\end{align*}
\end{proof}

\begin{claim}
The $\alpha$-minimal total receptive field for the $l$-th $B \times B$ layer in the given network,
for $\alpha\in\N$ and $2^{l-1} \leq \alpha < 2^l-1$, always equals $(\alpha+1)$.
\end{claim}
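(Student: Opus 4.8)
The plan is to reduce the definition of $\totrec^{(2l-1,\alpha)}$ in eq.~\ref{eq:minimal_receptive} to an elementary statement about binary expansions. First I would note that among the first $2l-1$ GC layers, the even-indexed ones are the $2{\times}2$ stride-$2$ layers, which have $R^{(i)}=S^{(i)}=2$ and hence cannot be shrunk (the feasible set forces $t_i=2$ there); only the $l$ odd-indexed $B{\times}B$ stride-$1$ layers are adjustable, with $1\le t_{2m-1}\le B$. Repeating the computation of claim~\ref{claim:prop2:total_receptive} with a variable effective receptive field $t_m\in\{1,\ldots,B\}$ assigned to the $m$-th $B{\times}B$ layer (the $2{\times}2$ layers again drop out since $R-S=0$), the value of $\totrec^{(2l-1)}$ under this assignment is
\begin{align*}
    \totrec^{(2l-1)} = t_l\,2^{l-1} + \sum_{m=1}^{l-1}(t_m-1)\,2^{m-1},
\end{align*}
which in particular is always a nonnegative integer.

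Given that, the lower bound $\totrec^{(2l-1,\alpha)}\ge\alpha+1$ is immediate: $\alpha\in\N$ and the quantity above is integer-valued, so the smallest attainable value exceeding $\alpha$ is at least $\alpha+1$. It then remains to exhibit a feasible assignment realizing exactly $\alpha+1$, which is where I would invoke the hypothesis $2^{l-1}\le\alpha<2^l-1$. Set $t_l=1$, leaving the equation $\sum_{m=1}^{l-1}(t_m-1)2^{m-1}=\alpha+1-2^{l-1}$; the hypothesis gives $1\le\alpha+1-2^{l-1}\le2^{l-1}-1$, so this target is a nonnegative integer strictly below $2^{l-1}$ and thus has a binary expansion $\sum_{m=1}^{l-1}b_m2^{m-1}$ with each $b_m\in\{0,1\}$. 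Taking $t_m=1+b_m\in\{1,2\}\subseteq\{1,\ldots,B\}$ (using $B\ge2$; for $B=1$ the network is non-overlapping and the claim is vacuous) realizes the value $\alpha+1$. Since $\alpha+1>\alpha$ this assignment is feasible for the minimization, so $\totrec^{(2l-1,\alpha)}\le\alpha+1$, and combining with the lower bound yields $\totrec^{(2l-1,\alpha)}=\alpha+1$.

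I do not expect a substantive obstacle here — the argument is essentially bookkeeping. The only points requiring care are (i) correctly observing that the interleaved $2{\times}2$ stride-$2$ layers are rigid and vanish from the total-receptive-field formula, so the adjustable part is exactly a sum $\sum_{m<l}(t_m-1)2^{m-1}$ of ``binary-digit'' terms, and (ii) verifying that the stated range of $\alpha$ is precisely what places $\alpha+1-2^{l-1}$ in $\{1,\ldots,2^{l-1}-1\}$, so that one can keep $t_l=1$ and express the remainder with the $l-1$ available binary digits. The full details are deferred to app.~\ref{app:proofs:common_case}.
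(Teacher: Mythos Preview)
The proposal is correct and follows essentially the same approach as the paper: both fix the $2{\times}2$ stride-$2$ layers (which are rigid in the minimization), reduce the adjustable part to $t_l\,2^{l-1}+\sum_{m<l}(t_m-1)2^{m-1}$, set $t_l=1$, and represent the remainder $\alpha+1-2^{l-1}$ via binary digits $t_m\in\{1,2\}$. Your write-up is in fact slightly more explicit than the paper's, spelling out the integer-valued lower bound and the requirement $B\ge2$.
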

\begin{proof}
From eq.~\ref{eq:minimal_receptive}, the following holds:
\begin{align*}
\totrec^{(2l-1,\alpha)} &= \argmin_{{\substack{\forall i\in[l], 1 \leq t_{2i-1} \leq B \\ \totrec^{(2l-1)}(t_1,1,2,2,t_3,1,\ldots,t_{2l-1},1) > \alpha}}} \totrec^{(2l-1)}(t_1,1,2,2,t_3,1,\ldots,t_{2l-1},1) \\
&= \argmin_{{\substack{\forall i\in[l], 1 \leq t_{2i-1} \leq B \\  t_{2l-1} \cdot 2^{l-1} + \sum_{i=1}^{l-1} (t_{2i-1}-1) 2^{i-1} > \alpha}}} t_{2l-1} \cdot 2^{l-1} + \sum_{i=1}^{l-1} (t_{2i-1}-1) 2^{i-1}
\end{align*}
Notice that the right term in the equation resembles a binary representation. If we limit
$t_{2i-1}$ to the set $\{1,2\}$, this term can represent any number in the set $\{0,\ldots,2^{l-1}-1\}$,
and by choosing $t_{2l-1} = 1$, the complete term can represent any number in the set $\{2^{l-1},\ldots,2^{l}-1\}$,
and specifically, for $2^{l-1} \leq \alpha < 2^l-1$, there exists an assignment for $t_{2i-1}\in\{1,2\}$ for $i\in[l-1]$
such that this terms equal $(\alpha+1)$, and thus $\totrec^{(2l-1,\alpha)}=\alpha+1$.
\end{proof}
With the above general properties for the given network, we can simplify the expression
for the lower bound given in theorem~\ref{thm:main_overlaps}:
\begin{claim}\label{claim:prop2:general_bound}
If the $l$-th $B \times B$ layer in the given network satisfies $\totrec^{(2l-1)}(R^{(1)},S^{(1)},\ldots,R^{(2l-1)},S^{(2l-1)})~>~\nicefrac{H}{2}$,
then the lower bound given in theorem~\ref{thm:main_overlaps} equals to $M^{2^{2L-2l+1}}$
\end{claim}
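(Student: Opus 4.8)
The plan is to substitute the network's structural parameters into the lower bound of theorem~\ref{thm:main_overlaps} applied with $K = 2l-1$, using the closed forms from the preceding two claims. First I would record the three quantities that feed into eq.~\ref{eq:lower_bound}. Since every GC layer has at least $2M$ output channels, we get $D^{(K)} \ge 2M \ge M$ and $\frac{1}{2}\min_{1\le i\le K}D^{(i)} \ge M$, so $D \equiv \min\{M, D^{(K)}, \frac{1}{2}\min_{i}D^{(i)}\} = M$. By claim~\ref{claim:prop2:total_receptive}, $\totstr^{(K)} = 2^{l-1}$. And the hypothesis $\totrec^{(K)} > \frac{H}{2} = 2^{L-1}$ is exactly the condition theorem~\ref{thm:main_overlaps} demands of layer $K$, so the theorem applies.

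The only non-immediate ingredient is $\totrec^{(K,\lfloor H/2\rfloor)}$, and I would pin it down directly from eq.~\ref{eq:minimal_receptive}. Writing $\alpha = \lfloor H/2\rfloor = 2^{L-1}$ and using that the $2\times 2$ stride-$2$ layers contribute nothing ($R-S=0$), the quantity to be minimized over $1\le t_{2i-1}\le B$ is $2^{l-1} + S$ with $S = \sum_{i=1}^{l}(t_{2i-1}-1)2^{i-1}$, subject to $2^{l-1}+S > \alpha$. The hypothesis forces $B\ge 2$ (if $B=1$ the objective is constantly $2^{l-1}\le 2^{L-1}$, contradicting $\totrec^{(K)}>2^{L-1}$), and for $B\ge 2$ the sum $S$ realizes every integer in $[0,\ (B-1)(2^{l}-1)] = [0,\ \totrec^{(K)}-2^{l-1}]$ — a standard complete-representation fact for digits in $\{0,\dots,B-1\}$ against the weights $2^{i-1}$. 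Since $2^{L-1}+1-2^{l-1}$ is a positive integer that is at most $\totrec^{(K)}-2^{l-1}$ precisely because $\totrec^{(K)}>2^{L-1}$, it is the smallest attainable value of $S$ exceeding the constraint, so $\totrec^{(K,\lfloor H/2\rfloor)} = 2^{L-1}+1$ (this also reproduces the preceding $\alpha$-minimal-receptive-field claim in the range where it applies).

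Finally I would plug these into eq.~\ref{eq:lower_bound}: the ceiling term is $\lceil H/\totstr^{(K)}\rceil = \lceil 2^{L}/2^{l-1}\rceil = 2^{L-l+1}$, and the floor term is $\big\lfloor (2^{L}-(2^{L-1}+1))/2^{l-1} + 1\big\rfloor = \big\lfloor 2^{L-l} - 2^{1-l} + 1\big\rfloor = 2^{L-l}$, since $2^{1-l}\in(0,1]$ places the argument in $[2^{L-l},\ 2^{L-l}+1)$. Hence the exponent of $D=M$ equals $2^{L-l}\cdot 2^{L-l+1} = 2^{2L-2l+1}$, giving the bound $M^{2^{2L-2l+1}}$. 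I expect the main obstacle to be the middle step — establishing $\totrec^{(K,\lfloor H/2\rfloor)} = 2^{L-1}+1$ — because the stated range $2^{l-1}\le\alpha<2^{l}-1$ of the earlier $\alpha$-minimal-receptive-field claim does not literally cover $\alpha=2^{L-1}$ for every $l$ meeting the hypothesis, so the completeness argument above (or an extension of that claim) is what makes this step rigorous; everything else is routine floor/ceiling bookkeeping.
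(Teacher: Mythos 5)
Your proposal is correct and follows essentially the same route as the paper: take $K=2l-1$, note $D=M$ (since all layers have at least $2M$ channels) and $\totstr^{(2l-1)}=2^{l-1}$ from the preceding claim, establish $\totrec^{(2l-1,\lfloor \nicefrac{H}{2}\rfloor)}=2^{L-1}+1$, and substitute into eq.~\ref{eq:lower_bound}; your floor/ceiling bookkeeping reproduces the paper's computation line for line. The one place where you do more than the paper is the middle step, and your instinct about it is right: the paper simply invokes its preceding $\alpha$-minimal-receptive-field claim, but that claim is stated (and proved, using only $t_{2i-1}\in\{1,2\}$) for $2^{l-1}\le\alpha<2^l-1$, a range that does not contain $\alpha=2^{L-1}$ whenever $B>2$ makes the first qualifying layer have $l<L$. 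Your completeness argument~--~every integer in $[0,(B-1)(2^l-1)]$ is realizable as $\sum_{i=1}^{l}(t_{2i-1}-1)2^{i-1}$ with $t_{2i-1}\in\{1,\ldots,B\}$, and $2^{L-1}+1-2^{l-1}$ lies in this range precisely because $\totrec^{(2l-1)}>\nicefrac{H}{2}$~--~is exactly the extension needed to justify $\totrec^{(2l-1,\lfloor \nicefrac{H}{2}\rfloor)}=2^{L-1}+1$ in all cases the claim covers, so at this point your argument is, if anything, more complete than the paper's.
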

\begin{proof}
From the description of the network and the previous claims it holds that $D~=~M$, $H~=~2^L$,
$\totstr^{(2l-1)}~=~2^{l-1}$, and $\totrec^{(2l-1,\nicefracfloor{H}{2})} = 2^{L-1} + 1$. Substituting all
the above in eq.~\ref{eq:lower_bound} results in:
\begin{align*}
(\text{Eq.~\ref{eq:lower_bound}}) &=M^{\left\lfloor \frac{2^L - 2^{L-1} - 1}{2^{l-1}} + 1 \right\rfloor \cdot \fracceil{2^L}{2^{l-1}}}\\
&= M^{\left\lfloor 2^{L-l}+ 1 - \frac{1}{2^{l-1}} \right\rfloor \cdot 2^{L-l+1}} \\
&= M^{2^{2L-2l+1}}
\end{align*}
\end{proof}
With all of the above claims in place, we our ready to prove proposition~\ref{prop:common_case}:
\begin{proof}(\textbf{of proposition~\ref{prop:common_case}})
From claim~\ref{claim:prop2:total_receptive}, we can infer
which is the first $B \times B$ layer such that its receptive field is greater than $\nicefrac{H}{2}$:
\begin{align*}
(2B-1)\cdot2^{l-1} - B + 1 &> 2^{L-1} \\
\Rightarrow l&> \log_2 \frac{2^{L} + 2B - 2}{2B-1}\\
\Rightarrow l &= 1+ \left\lfloor \log_2 \frac{2^{L} + 2B - 2}{2B-1} \right\rfloor
\end{align*}
Combining the above with claim~\ref{claim:prop2:general_bound}, results in:
\begin{align*}
M^{2^{2L-2l+1}} &= M^{2^{2L-1- 2\left\lfloor \log_2 \frac{2^{L} + 2B - 2}{2B-1} \right\rfloor}}\\
&\geq M^{2^{2L-1- 2\log_2 \frac{2^{L} + 2B - 2}{2B-1}}} \\
(2^L \equiv H) \Rightarrow &= M^{\frac{H^2}{2} \left(1+ \frac{H -1}{2B-1}\right)^{-2}} \\
&= M^{\frac{(2B-1)^2}{2} \cdot \left(1+\frac{2B-2}{H}\right)^{-2}} 
\end{align*}
The limits and the special case for $B \leq \frac{H}{5} +1$ are both direct corollaries of the above expressions.
\end{proof}

\subsection{Proof of theorem \ref{thm:overlapping_overload}} \label{app:proofs:overlapping_overload}

We begin by proving an analogue of claim~\ref{claim:one_overlap}, where we show that for any given matricization of the grid tensor $\A(\h)$, induced
by the overlapping network realizing the function $\h$, the matricization rank is exponential. The motivation behind the construction, for when
parameters are ``unshared'', is that we can utilized the fact that there are separate sets of kernels, with local receptive fields the size of the
input, for each spatial location. Thus, each kernel can ``connect'' the index (of the grid tensor) matching its spatial location, with almost any
other index, and specifically such that the two indices come from different sets of the matricization $I\cupdot J$ of $\A(\h)$. For the ``shared''
case, we simply use polynomially more output channels to simulate the ``unshared'' case.
\begin{claim}\label{claim:overlapping_overload:existence}
For an arbitrary even partition $(I,J)$ of $\{(1,1),\ldots,(H,H)\}$, such that $|I|=|J|=\frac{H^2}{2}$,
there exists an assignment to the parameters of the network given in theorem~\ref{thm:overlapping_overload},
for either the ``unshared'' or ``shared'' settings, such that $\rank{\mat{\A(\h)}} = M^\frac{H^2}{2}$.
\end{claim}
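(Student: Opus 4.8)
The plan is to prove Claim~\ref{claim:overlapping_overload:existence} by an explicit construction of a single parameter vector attaining the rank $M^{H^2/2}$; the ``almost everywhere'' strengthening is not needed here, as it is obtained at the level of Theorem~\ref{thm:overlapping_overload} by combining this claim with Lemma~\ref{lemma:rank_everywhere}. First I would neutralize the $(L-1)$ layers following the $H\times H$ layer: by Proposition~\ref{prop:nothing_to_lose} we may treat them as non-overlapping product-pooling layers, and setting all their $1\times1$-convolution weights to one and biases to zero makes the whole trailing stack compute $\h(\x^{(d_1)},\dots,\x^{(d_N)})=\prod_{(u,v)}\sum_{c}O^{(1)}_{c,u,v}$, where $O^{(1)}$ is the output of the $H\times H$ layer; the strides of the trailing layers then play no role, and the only boundary-padding bookkeeping is the routine adaptation already used in Claim~\ref{claim:one_overlap}. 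Thus everything reduces to designing the single $H\times H$, stride-$1$ GC layer.

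The heart of the argument uses that output location $(u,v)$ of that layer reads exactly the input patches in the cone $\{(a,b):a\ge u,\ b\ge v\}$. The construction fixes a \emph{perfect matching} of the $N=H^2$ input indices into $N/2$ pairs such that (i) the two indices of each pair are comparable in the product order, so the smaller one can serve as an \emph{anchor} location whose window contains its partner, and (ii) each pair \emph{straddles} the partition, one index in $I$ and one in $J$. At each anchor location I install kernels exactly as in Claim~\ref{claim:one_overlap}: weights built from $F^{-1}$, supported on the window entry $(1,1)$ and on the window entry pointing at the partner, with the same scalar choices, so that $\sum_c O^{(1)}_{c,u,v}$ becomes a rank-$M$ matrix in the two matched indices (rank $M$, not merely $D$, since $D\ge M$). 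Every non-anchor location is set to output the constant $1$ (biases one, weights zero). Then $\A(\h)$ is a product over the $N/2$ pairs of rank-$M$ matrices on pairwise-disjoint index pairs, so by the Kronecker identity $\rank{A\odot B}=\rank{A}\cdot\rank{B}$, together with the fact that condition (ii) places exactly one index of each factor on each side of the $(I,J)$ matricization (so each factor is a genuine $M\times M$ block, not a vectorized one that would contribute only $1$), we get $\rank{\mat{\A(\h)}_{I,J}}=M^{N/2}=M^{H^2/2}$; this also shows why straddling is essential.

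The main obstacle is the combinatorial lemma: for an \emph{arbitrary} balanced partition of the grid $\{1,\dots,H\}^2$, a perfect matching into comparable, straddling pairs always exists. I would prove this via Hall's theorem applied to the bipartite graph on $I\cup J$ whose edges join comparable pairs with one endpoint in each side, verifying the deficiency condition $|N(S)|\ge|S|$ for $S\subseteq I$ from the lattice structure: $N(S)=(\mathrm{up}(S)\cup\mathrm{down}(S))\cap J$, and in a product of two chains the complementary up-set and down-set of a subset together cover enough of the grid. A more hands-on alternative is to match indices within each row first (within a row all pairs are comparable and the two sides can be balanced locally) and then repair the $O(H)$ leftover cross-row mismatches while preserving comparability. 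This is also where the hypothesis $|I|=|J|$ enters, and where one notes that for odd $H$ no balanced partition exists so the statement is vacuous; the only-even-$H$ simplification mirrors the one made in Claim~\ref{claim:one_overlap}.

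Finally, for the ``shared'' setting with $D\ge M\cdot H^2$ channels, the plan is to emulate the $H^2$ spatially-independent anchors of the unshared construction with a block of channels per coupling pattern; I expect this to be the fiddliest part, since a shared kernel cannot distinguish interior spatial locations, so the emulation has to be driven by the boundary zero-padding (a window entry $(j,i)$ is in-bounds only on a top-left sub-rectangle of output locations) and needs polynomially many channel-blocks to realize all the required offsets — which is exactly what the hypothesis $D\ge M\cdot H^2$ provides. Once each per-location factor of the unshared construction is reproduced up to the harmless additive constants contributed by the inert blocks, the Kronecker-product rank computation of the previous paragraph applies verbatim and again yields $M^{H^2/2}$.
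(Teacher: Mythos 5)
Your overall skeleton coincides with the paper's: neutralize the trailing $(L-1)$ layers so the output is a product over locations of per-location channel sums, install $F^{-1}$-based kernels at anchor locations so that each matched pair of indices contributes an $M\times M$ identity factor, and conclude by the Kronecker-product rank identity. The genuine gap is the combinatorial lemma you yourself single out as the main obstacle: it is false. You require a perfect matching of the grid into pairs that straddle $(I,J)$ \emph{and} are comparable in the product (coordinate-wise) order, so that the smaller member's window contains its partner. For $H\geq 4$ no such matching need exist: the cell $(1,H)$ is product-comparable to only $2H-2$ other cells (its up-set is the last column, its down-set the first row), while its incomparable set $\{(a,b)\,:\,a\geq 2,\ b\leq H-1\}$ has $(H-1)^2\geq H^2/2$ elements. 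Choosing $J$ inside this set (e.g.\ for $H=4$, take $J$ to be any $8$ of the $9$ cells of $\{2,3,4\}\times\{1,2,3\}$) leaves $(1,H)\in I$ comparable to no element of $J$, so Hall's condition fails already for a singleton; the row-by-row matching with subsequent ``repair'' fails for the same reason, since the entire top row then lies in $I$ and has too few admissible partners. Hence the proposed proof collapses exactly at the step you identified as its heart, and the Hall-type covering intuition you invoke (``up-set plus down-set cover enough of the grid'') is precisely what breaks at corner cells such as $(1,H)$ and $(H,1)$.

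For comparison, the paper imposes no comparability requirement: it matches the $k$-th lexicographically smallest element of $I$ with the $k$-th of $J$, anchors at the lexicographically smaller member $q_k$, and asserts that the offset $p_k-q_k+(1,1)$ lies inside the window ``because $q_k\leq p_k$''---but that assertion needs the product order, while only the lexicographic inequality is guaranteed. So the subtlety you spotted (an anchor's window covers only its lower-right cone) is real and is glossed over in the paper; your fix, however, is unavailable, and simply reverting to the lexicographic pairing does not obviously repair it. A repair must decouple the anchor from the pair (e.g., anchor a pair at a location weakly dominated by both members, such as their coordinate-wise meet---your trailing-layer reduction still supports this), but then distinct anchors are needed for the $H^2/2$ pairs, and in the counterexample above the $H$ top-row cells of $I$ would all require anchors in row $1$ with column at most $H-1$, of which there are only $H-1$; so staying within the one-pair-per-location, rank-$M$-factor paradigm cannot settle the claim for arbitrary partitions, and a correct argument would have to exploit something more (e.g., the channel budget $D$ beyond $M$ or a different factor structure). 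Your ``shared'' construction inherits the same defect, since it only emulates the unshared anchors.
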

\begin{proof}
Let $(I,J)$ be an arbitrary even partition of $\{(1,1),\ldots,(H,H)\}$, such that $|I|=|J|=\frac{H^2}{2}$,
$I = \{i_1,\ldots,i_{|I|}\}$, and $J = \{j_1,\ldots,j_{|J|}\}$, where for $k \in [\frac{H^2}{2}]$ it holds
that $i_k < i_{k+1}$ and $j_k < j_{k+1}$ (using lexical ordering), and we assume w.l.o.g. that
$i_1 = (1,1)$. We define the set $\{(q_k,p_k)\}_{k=1}^\frac{H^2}{2}$ such that $q_k = i_k$ and $p_k = j_k$
if $i_k < j_k$, and otherwise $q_k = j_k$ and $p_k = i_k$.

We prove the t``unshared'' case first, where the parameters of the first GC layers are
given by \\$\{(\w^{(c,u,v)},\bb^{(c,u,v)})\}_{c=1,u=1,v=1}^{D,H,H}$, for which
we choose the following assignment:
\begin{align*}
w^{(c,u,v)}_{m,j,i} &= \begin{cases}
(F^{-1})_{m,c} & c \leq M \tand \exists k \in \left[\frac{H^2}{2}\right], q_k = (u,v) \tand (j,i) \in \{(1,1), p_k- q_k + (1,1)\} \\
0 & \text{Otherwise}
\end{cases} \\
b^{(c,u,v)}_{j,i} &= \begin{cases}
0 & c \leq M \tand \exists k \in \left[\frac{H^2}{2}\right], q_k = (u,v) \tand (j,i) \in \{(1,1), p_k- q_k + (1,1)\} \\
1_{[c \leq M]} & \text{Otherwise}
\end{cases}
\end{align*}
where for $u,v\in[H]$ such that $q_k = (u,v)$ it holds that $p_k - q_k + (1,1) \in \{(1,1),\ldots,(H,H)\}$
because $q_k \leq p_k$. Similar to the proof of claim~\ref{claim:one_overlap},
we set all layers following the first GC layer such that the following equality holds:
\begin{align*}
\left(\A(\h)\right)_{d_{(1,1)},\ldots,d_{(H,H)}} &= \prod_{u,s=1}^{H}\sum_{c=1}^D\prod_{j,i=1}^H
\left(b_{ji}^{(c,u,v)}+\sum_{m=1}^M w_{mji}^{(c,u,v)}O_{m, u + j - 1, v + i - 1}\right)
\end{align*}
Which under the assignment to parameters we chose earlier, it results in:
\begin{align*}
\left(\A(\h)\right)_{d_{(1,1)},\ldots,d_{(H,H)}} &= \prod_{k=1}^{\nicefrac{H^2}{2}} \sum_{c=1}^{D}
\left( \sum_{m=1}^M (F^{-1})_{m,c} F_{d_{q_k},m}  \right) \left( \sum_{m=1}^M (F^{-1})_{m,c} F_{d_{p_k},m}  \right) \\
&=  \prod_{k=1}^{\nicefrac{H^2}{2}} \sum_{c=1}^{D} (F \cdot F^{-1})_{d_{q_k},c} \cdot (F \cdot F^{-1})_{d_{p_k},c} \\
&=  \prod_{k=1}^{\nicefrac{H^2}{2}} \sum_{c=1}^{D} 1_{[d_{q_k} = c]} \cdot 1_{[d_{p_k} = c]} \\
&=  \prod_{k=1}^{\nicefrac{H^2}{2}} 1_{[d_{q_k} = d_{p_k}]}
\end{align*}
which means $\mat{\A(\h)}_{(I,J)}$ equals to the Kronecker product of $\frac{H^2}{2}$ $M\times M$-identity matrices,
up to permutations of its rows and columns which do not affect its matrix rank. Thus, $\rank{\mat{\A(\h)}} = M^\frac{H^2}{2}$.

For the ``shared'' setting, we denote the parameters of the first GC layer by $\{(\w^{(d)},\bb^{(d)})\}_{c=1}^{D}$,
and set them as:
\begin{align*}
w^{(d)}_{m,j,i} &= \begin{cases}
(F^{-1})_{m,c} & \left(\exists c \in [M] \exists u,v \in [H], d = c H^2 + u H + v\right) \\
& \tand \left(\exists k \in \left[\frac{H^2}{2}\right], q_k = (u,v) \tand (j,i) \in \{(1,1), p_k- q_k + (1,1)\}\right) \\
0 & \text{Otherwise}
\end{cases} \\
b^{(d)}_{j,i} &= \begin{cases}
0 & \left(\exists c \in [M] \exists u,v \in [H], d = c H^2 + u H + v\right) \\
& \tand \left(\exists k \in \left[\frac{H^2}{2}\right], q_k = (u,v) \tand (j,i) \in \{(1,1), p_k- q_k + (1,1)\}\right) \\
1_{[d \leq M H^2]} & \text{Otherwise}
\end{cases}
\end{align*}
the parameters of the other layers are set as in the ``unshared'' case, and the proof follows similarly.
\end{proof}

In the preceding claim, we have found a separate example for each matricization, such that the matricization rank is
exponential. In the following proof of the theorem~\ref{thm:overlapping_overload}, we leverage basic properties from
measure theory to show that almost everywhere the induced grid tensor has an exponential matricization rank, under
every possible even matricization~--~without explicitly constructing such an example.
\begin{proof}(\textbf{of theorem~\ref{thm:overlapping_overload}})
For any even partition $(I,J)$ of $\{(1,1),\ldots,(H,H)\}$, according to claim~\ref{claim:overlapping_overload:existence}
there exist parameters for which $\rank{\mat{\A(\h)}}_{(I,J)} = M^\frac{H^2}{2}$, and thus according to
lemma~\ref{lemma:rank_everywhere} the set of parameters for which $\rank{\mat{\A(\h)}} < M^\frac{H^2}{2}$
is of measure zero. Since the finite union of sets of measure zero is also of measure zero, then almost
everywhere (with respect to the Lebesgue measure) the parameters results in networks such that for all even partitions $\rank{\mat{\A(\h)}}_{(I,J)} = M^\frac{H^2}{2}$.
\end{proof}

\end{document}